\def\eqref#1{equation~\ref{#1}}
\def\1{\bm{1}}
\DeclareMathAlphabet{\mathsfit}{\encodingdefault}{\sfdefault}{m}{sl}
\SetMathAlphabet{\mathsfit}{bold}{\encodingdefault}{\sfdefault}{bx}{n}
\renewenvironment{proof}{{\bf{\textit{Proof}}.}}{\qed}
\newtheorem{theorem}{Theorem}
\definecolor{lightgray}{gray}{0.4}
\definecolor{verylightgray}{gray}{0.7}
\definecolor{veryverylightgray}{gray}{0.9}
\definecolor{darkgreen}{rgb}{0, 0.5, 0}
\newcommand{\cmark}{\ding{51}}
\newcommand{\xmark}{\ding{55}}
\newcolumntype{C}{>{\centering\arraybackslash}p{1.4em}}
\title{Multi-Class Uncertainty Calibration via Mutual Information Maximization-based Binning}
\title{Multi-Class Uncertainty Calibration via Mutual Information Maximization-based Binning}
\author{Kanil Patel$^{1,2}$ \thanks{firstname.lastname@de.bosch.com} , William Beluch$^{1}$, Bin Yang$^{2}$, Michael Pfeiffer$^{1}$ and Dan Zhang$^{1}$ \\
$^1$ Bosch Center for Artificial Intelligence, Renningen, Germany \\
$^2$ Institute of Signal Processing and System Theory, University of Stuttgart, Stuttgart, Germany
}
\begin{document}
\maketitle

\begin{abstract}
	Post-hoc multi-class calibration is a common approach for providing high-quality confidence estimates of deep neural network predictions. %
	Recent work has shown that widely used scaling methods underestimate their calibration error, while alternative Histogram Binning (HB) methods often fail to preserve classification accuracy. %
	When classes have small prior probabilities, HB also faces the issue of severe sample-inefficiency after the conversion into $K$ one-vs-rest class-wise calibration problems.
	The goal of this paper is to resolve the identified issues of HB in order to provide calibrated confidence estimates using only a small holdout calibration dataset for bin optimization while preserving multi-class ranking accuracy.
	From an information-theoretic perspective, we derive the \emph{I-Max} concept for binning, which maximizes the mutual information between labels and quantized logits. 
	This concept mitigates potential loss in ranking performance due to lossy quantization, and by disentangling the optimization of bin edges and representatives allows simultaneous improvement of ranking and calibration performance.
	To improve the sample efficiency and estimates from a small calibration set, we propose a \emph{shared class-wise} (sCW) calibration strategy, sharing one calibrator among similar classes (e.g., with similar class priors) so that the training sets of their class-wise calibration problems can be merged to train the single calibrator.  
	The combination of sCW and I-Max binning outperforms the state of the art calibration methods on various evaluation metrics across different benchmark datasets and models, using a small calibration set (e.g., 1k samples for ImageNet).
\end{abstract}
\let\thefootnote\relax\footnotetext{Code available at \url{https://github.com/boschresearch/imax-calibration}}

\section{Introduction}
Despite great ability in learning discriminative features, deep neural network (DNN) classifiers often make over-confident predictions. 
This can lead to potentially catastrophic consequences in safety critical applications, e.g., medical diagnosis and autonomous driving perception tasks. 
A multi-class classifier is perfectly calibrated if among the cases receiving the prediction distribution $\mathbf{q}$, the ground truth class distribution is also $\mathbf{q}$. The mismatch between the prediction and ground truth distribution can be measured using the Expected Calibration Error (ECE)~\citep{pmlr-v70-guo17a,Kull2019}. 

Since the pioneering work of~\citep{pmlr-v70-guo17a}, scaling methods have been widely acknowledged as an efficient post-hoc multi-class calibration solution for modern DNNs. 
The common practice of evaluating their ECE resorts to histogram density estimation (HDE) for modeling the distribution of the predictions. 
However,~\citet{Vaicenavicius2019} proved that with a fixed number of evaluation bins the ECE of scaling methods is underestimated even with an infinite number of samples. 
~\citet{Widmann2019,Kumar2019,wenger2020calibration} also empirically showed this underestimation phenomena. 
This deems scaling methods as unreliable calibration solutions, as their true ECEs can be larger than evaluated, putting many applications at risk. 
Additionally, setting HDE also faces the bias/variance trade-off. Increasing its number of evaluation bins reduces the bias, as the evaluation quantization error is smaller, however, the estimation of the ground truth correctness begins to suffer from high variance. 
Fig.~\ref{fig:verifiable_ece}-a) shows that the empirical ECE estimates of both the raw network outputs and the temperature scaling method (TS)~\citep{pmlr-v70-guo17a} are sensitive to the number of evaluation bins. 
It remains unclear how to optimally choose the number of evaluation bins so as to minimize the estimation error. 
Recent work~\citep{zhang2020mix, Widmann2019} suggested kernel density estimation (KDE) instead of HDE.
However, the choice of the kernel and bandwidth also remains unclear, and the smoothness of the ground truth distribution is hard to verify in practice.

An alternative technique for post-hoc calibration is Histogram Binning (HB)~\citep{Zadrozny2001,pmlr-v70-guo17a,Kumar2019}. Note, here HB is a calibration method and is different to the HDE used for evaluating ECEs of scaling methods. 
HB produces discrete predictions, whose probability mass functions can be empirically estimated without using HDE/KDE. 
Therefore, its ECE estimate is constant and unaffected by the number of evaluation bins in Fig.~\ref{fig:verifiable_ece}-a) and it can converge to the true value with increasing evaluation samples~\citep{Vaicenavicius2019}, see Fig.~\ref{fig:verifiable_ece}-b).

\begin{figure*}	
	\begin{subfigure}[t]{0.5\textwidth}
		\includegraphics[width=1.0\linewidth]{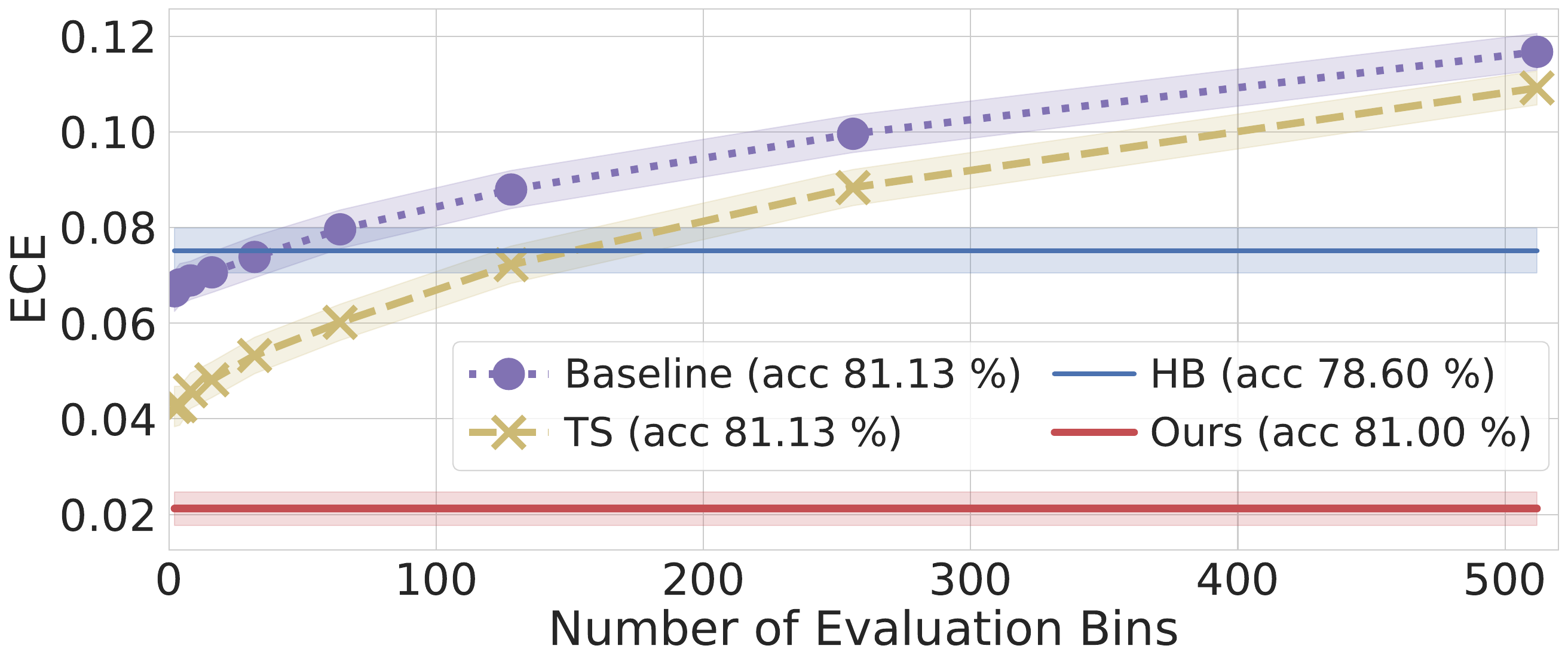}
		\caption{Top-$1$ prediction ECE ($5$\unit{k} evaluation samples)}
	\end{subfigure}
	\hfill
	\begin{subfigure}[t]{0.5\textwidth}
		\includegraphics[width=1.0\linewidth]{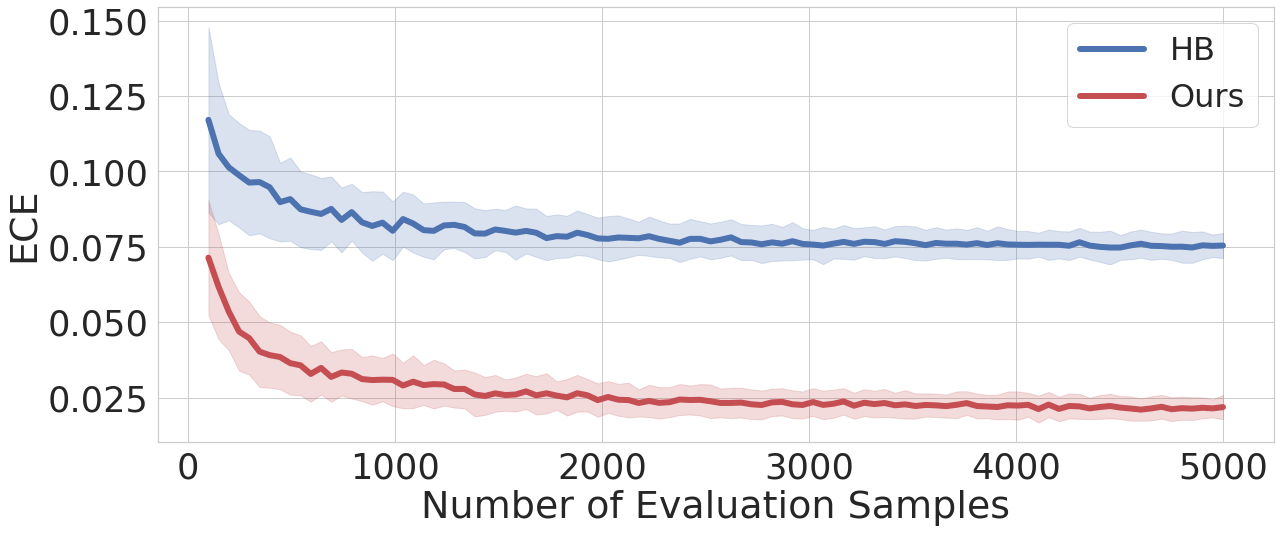}
		\caption{ECE converging curve (based on $10^2$ bootstraps)}
	\end{subfigure}
\caption{(a) Temperature scaling (TS), equally sized-histogram binning (HB), and our proposal, i.e., sCW I-Max binning are compared for post-hoc calibrating a CIFAR100 (WRN) classifier. (b) Binning offers a reliable ECE measure as the number of evaluation samples increases.}\vspace{-0.4cm}
	\label{fig:verifiable_ece}
\end{figure*}

The most common variants of HB are Equal (Eq.) size (uniformly partitioning the probability interval $[0,1]$), and Eq. mass (uniformly distributing samples over bins) binning. 
These simple methods for multi-class calibration are known to degrade accuracy, since quantization through binning may remove a considerable amount of label information contained by the classifier's outputs.

In this work we show that the key for HB to retain the accuracy of trained classifiers is choosing bin edges that minimize the amount of label information loss. Both Eq. size and mass binning are suboptimal. 
We present \emph{I-Max}, a novel iterative method for optimizing bin edges with proved convergence. As the location of its bin edges inherently ensures sufficient calibration samples per bin, the bin representatives of I-Max can then be effectively optimized for calibration. 
Two design objectives, calibration and accuracy, are thus nicely disentangled under I-Max. 
For multi-class calibration, I-Max adopts the one-vs-rest (OvR) strategy to individually calibrate the prediction probability of each class.  
To cope with a limited number of calibration samples, we propose to share one binning scheme for calibrating the prediction probabilities of similar classes, e.g., with similar class priors or belonging to the same class category. At small data regime, we can even choose to fit one binning scheme on the merged training sets of all per-class calibrations. Such a shared class-wise (sCW) calibration strategy greatly improves the sample efficiency of I-Max binning.

I-Max is evaluated according to multiple performance metrics, including accuracy, ECE, Brier and NLL, and compared against benchmark calibration methods across multiple datasets and trained classifiers. 
For ImageNet, I-Max obtains up to $66.11$\% reduction in ECE compared to the baseline and up to $38.14$\% reduction compared to the state-of-the-art GP-scaling method~\citep{wenger2020calibration}.

\section{Related Work}\label{sec:related work}

For confidence calibration,  
Bayesian DNNs and their approximations, e.g.~\citep{pmlr-v37-blundell15}
~\citep{DropoutGalG16}
are resource-demanding methods to consider predictive model uncertainty. However, applications with limited complexity overhead and latency require sampling-free and single-model based calibration methods. Examples include modifying the training loss~\citep{pmlr-v80-kumar18a}, scalable Gaussian processes~\citep{Milios2018}, sampling-free uncertainty estimation~\citep{Postels_2019_ICCV}, data augmentation~\citep{patel2019onmanifold,OnMixupTrainThul,yun2019cutmix,hendrycks2020augmix} and ensemble distribution distillation~\citep{Malinin2020Ensemble}. In comparison, a simple approach that requires no retraining of the models is post-hoc calibration~\citep{pmlr-v70-guo17a}.

Prediction probabilities (logits) scaling and binning are the two main solutions for post-hoc calibration. 
Scaling methods use parametric or non-parametric models to adjust the raw logits. \citet{pmlr-v70-guo17a} investigated linear models, ranging from the single-parameter based TS to more complicated vector/matrix scaling. 
To avoid overfitting,~\citet{Kull2019} suggested to regularize matrix scaling with a $L_2$ loss on the model weights.
Recently, \citet{wenger2020calibration} adopted a latent Gaussian process for multi-class calibration. 
~\cite{binwiseTS} extended TS to a bin-wise setting, by learning separate temperatures for various confidence subsets.
To improve the expressive capacity of TS, an ensemble of temperatures were adopted by~\citet{zhang2020mix}. Owing to continuous outputs of scaling methods, one critical issue discovered in the recent work is: Their empirical ECE estimate is not only non-verifiable~\citep{Kumar2019}, but also asymptotically smaller than the ground truth~\citep{Vaicenavicius2019}. 
Recent work~\citep{zhang2020mix, Widmann2019} exploited KDEs for an improved ECE evaluation, however, the parameter setting requires further investigation. 
\citet{Nixon_2019_CVPR_Workshops} and~\citep{pitfallsECE} discussed potential issues of the ECE metric, and the former suggested to 1) use equal mass binning for ECE evaluation; 2) measure both top-1 and class-wise ECE to evaluate multi-class calibrators, 3) only include predictions with a confidence above some epsilon in the class-wise ECE score. 

As an alternative to scaling, HB quantizes the raw confidences with either Eq. size or Eq. mass bins~\citep{Zadrozny2001}. It offers asymptotically convergent ECE estimation~\citep{Vaicenavicius2019}, but is less sample efficient than scaling methods and also suffers from accuracy loss~\citep{pmlr-v70-guo17a}.~\citet{Kumar2019} proposed to perform scaling before binning for an improved sample efficiency. Isotonic regression~\citep{Zadrozny2002} and Bayesian binning into quantiles (BBQ)~\citep{Naeini2015} are often viewed as binning methods. However, their ECE estimates face the same issue as scaling methods: though isotonic regression fits a piecewise linear function, its predictions are continuous as they are interpolated for unseen data. BBQ considers multiple binning schemes with different numbers of bins, and combines them using a continuous Bayesian score, resulting in continuous predictions.

In this work, we improve the current HB design by casting bin optimization into a MI maximization problem. 
Furthermore, our findings can also be used to improve scaling methods.

\section{Method}
Here we introduce the I-Max binning scheme, which addresses the issues of HB in terms of preserving label-information in multi-class calibration. After the problem setup in Sec.~\ref{sec:problem}, Sec.~\ref{sec:ovr}) presents a sample-efficient technique for one-vs-rest calibration. In Sec.~\ref{sec:mimax} we formulate the training objective of binning as MI maximization and derive a simple algorithm for I-Max binning.

\subsection{Problem Setup}\label{sec:problem}
We address supervised multi-class classification tasks, where each input $\mathbf{x}\in\mathcal{X}$ belongs to one of $K$ classes, and the ground truth labels are one-hot encoded, i.e., $\mathbf{y}=[y_1, y_2,\dots,y_K]\in\{0,1\}^K$. Let $f:\mathcal{X}\mapsto [0,1]^K$ be a DNN trained using cross-entropy loss. It maps each $\mathbf{x}$ onto a probability vector $\mathbf{q}=[q_1,\dots,q_K]\in[0,1]^K$, 
which is used to rank the $K$ possible classes of the current instance, e.g., $\arg\max_k q_k$ being the top-$1$ ranked class.  
As the trained classifier tends to overfit to the cross-entropy loss rather than the accuracy (i.e., $0/1$ loss), $\mathbf{q}$ as the prediction distribution is typically poorly calibrated. A post-hoc calibrator $h$ to revise $\mathbf{q}$ can deliver an improved performance. To evaluate the calibration performance of $h\circ f$, class-wise ECE averaged over the $K$ classes is a common metric, measuring the expected deviation of the predicted per-class confidence after calibration, i.e., $h_k(\mathbf{q})$, from the ground truth probability $p(y_k=1|h(\mathbf{q}))$:
\begin{align}
{}_{\mathrm{cw}}\mathrm{ECE}(h\circ f) = \frac{1}{K}\sum_{k=1}^{K} E_{\mathbf{q}=f(\mathbf{x})}\left \{ \Big \vert p(y_k=1|h(\mathbf{q})) - h_k(\mathbf{q}) \Big \vert \right \}.\label{cwECE}
\end{align}
When $h$ is a binning scheme, $h_k(\mathbf{q})$ is discrete and thus repetitive. We can then empirically set $p(y_k=1|h(\mathbf{q}))$ as the frequency of label-$1$ samples among those receiving the same $h_k(\mathbf{q})$. On the contrary, scaling methods are continuous. It is unlikely that two samples attain the same $h_k(\mathbf{q})$, thus requiring additional quantization, i.e., applying HDE for modeling the distribution of $h_k(\mathbf{q})$, or alternatively using KDE.  
It is noted that ideally we should compare the whole distribution $h(\mathbf{q})$ with the ground truth $p( \mathbf{y}|h(\mathbf{q}))$. However, neither HDE nor KDE scales well with the number of classes. Therefore, the multi-class ECE evaluation often boils down to the one-dimensional class-wise ECE as in (\ref{cwECE}) or the top-$1$ ECE, i.e., $E\left[\vert p( y_{k=\arg \max_k h_k(\mathbf{q})}=1|h(\mathbf{q})) - \max_k h_k(\mathbf{q})\vert\right]$.

\subsection{One-vs-Rest (OvR) Strategy for Multi-class Calibration}\label{sec:ovr}

HB was initially developed for two-class calibration. 
When dealing with multi-class calibration, it separately calibrates the prediction probability $q_k$ of each class in a \emph{one-vs-rest} (OvR) fashion: 
For any class-$k$, HB takes $y_k$ as the binary label for a two-class calibration task in which the class-$1$ means $y_k=1$ and class-$0$ collects all other $K-1$ classes. It then revises the prediction probability $q_k$ of $y_k=1$ by mapping its logit $\lambda_k\stackrel{\Delta}{=}\log q_k - \log (1-q_k)$ onto a given number of bins, and reproducing it with the calibrated prediction probability. Here, we choose to bin the logit $\lambda_k$ instead of $q_k$, as the former is unbounded, i.e., $\lambda_k\in\mathbb{R}$, which eases the bin edge optimization process. Nevertheless, as $q_k$ and $\lambda_k$ have a monotonic bijective relation, binning $q_k$ and $\lambda_k$ are equivalent. It is further noted that after $K$ class-wise calibrations we do not perform the extra normalization step as in~\citep{pmlr-v70-guo17a}. After OvR marginalizes the multi-class predictive distribution, each class shall then be treated independently (see Sec.~\ref{sec:no_extra_norm_after_cw_cal}).

The calibration performance of HB depends on the setting of its bin edges and representatives. From a calibration set $C=\{(\mathbf{y},\mathbf{z})\}$, we can construct $K$ training sets, i.e., $S_k=\{(y_k, \lambda_k)\}\,\forall k$, under the one-vs-rest strategy, and then optimize the class-wise (CW) HB over each training set. As two common solutions in the literature, Eq. size and Eq. mass binning focus on bin representative optimization. Their bin edge locations, on the other hand, are either fixed (independent of the calibration set) or only ensures a balanced training sample distribution over the bins. After binning the logits in the calibration set $S_k=\{(y_k, \lambda_k)\}$, the bin representatives are set as the empirical frequencies of samples with $y_k=1$ in each bin. To improve the sample efficiency of bin representative optimization, \citet{Kumar2019} proposed to perform scaling-based calibration before HB. Namely, after properly scaling the logits $\{\lambda_k\}$, the bin representative per bin is then set as the averaged $\mathrm{sigmoid}$-response of the scaled logits in $S_k$ belonging to each bin.

However, pre-scaling does not resolve the sample inefficiency issue arising from a small class prior $p_k$. 
The two-class ratio in $S_k$ is $p_k:1 -p_k$. When $p_k$ is small we will need a large calibration set $C=\{(\mathbf{y},\mathbf{x})\}$ to collect enough class-$1$ samples in $S_k$ for setting the bin representatives. 
To address this, we propose to merge $\{S_k\}$ across similar classes  
and then use the merged set $S$ for HB training, yielding one binning scheme shareable to multiple per-class calibration tasks, i.e., \emph{shared} class-wise (sCW) binning instead of CW binning respectively trained on $S_k$. In Sec.~\ref{sec:experiment}, we respectively experiment using a single binning schemes for all classes in the balanced multi-class setting, and sharing one binning among the classes with similar class priors in the imbalanced multi-class setting. 
Note, both $S_k$ and $S$ serve as empirical approximations to the inaccessible ground truth distribution $p(y_k, \lambda_k)$ for bin optimization. The former suffers from high variances, arising from insufficient samples (Fig.~\ref{fig:SkvsSmerge}-a), while the latter is biased due to having samples drawn from the other classes (Fig.~\ref{fig:SkvsSmerge}-b). As the calibration set size is usually small, the variance is expected to outweigh the approximation error over the bias (see an empirical analysis in Sec.~\ref{sec:s_vs_sk_empirical_approx}).

\subsection{Bin Optimization via Mutual Information (MI) Maximization}\label{sec:mimax}

Binning can be viewed as a quantizer $Q$ that maps the real-valued logit $\lambda\in\mathbb{R}$ to the bin interval $m\in\{1,\dots,M\}$ if $\lambda\in\mathcal{I}_m=[g_{m-1},g_m)$, where $M$ is the total number of bin intervals, and the bin edges ${g_m}$ are sorted ($g_{m-1} < g_{m}$, and $g_0=-\infty, g_M=\infty$). Any logit binned to $\mathcal{I}_m$ will be reproduced to the same bin representative $r_m$. In the context of calibration, the bin representative $r_m$ assigned to the logit $\lambda_k$ is used as the calibrated prediction probability of the class-$k$. As multiple classes can be assigned with the same bin representative, we will then encounter ties when making top-$k$ predictions based on calibrated probabilities. Therefore, binning as lossy quantization generally does not preserve the raw logit-based ranking performance, being subject to potential accuracy loss.

Unfortunately, increasing $M$ to reduce the quantization error is not a good solution here. For a given calibration set, the number of samples per bin generally reduces as $M$ increases, and a reliable frequency estimation for setting the bin representatives $\{r_m\}$ demands sufficient samples per bin.

Considering that the top-$k$ accuracy reflects how well the ground truth label can be recovered from the logits, we propose bin optimization via maximizing the MI between the quantized logits $Q(\lambda)$ and the label $y$
\begin{align}
\{g^*_m\} =\arg\max_{Q: \,\{g_m\}} I(y; m=Q(\lambda))\stackrel{(a)}{=}\arg\max_{Q: \,\{g_m\}} H(m) - H(m|y)\label{maxMI} 
\end{align}
where the index $m$ is viewed as a discrete random variable with $P(m\vert y) = \int_{g_{m-1}}^{g_m} p(\lambda\vert y)\mathrm{d}\lambda$ and $P(m) =\int_{g_{m-1}}^{g_m} p(\lambda)\mathrm{d}\lambda$, and the equality $(a)$ is based the relation of MI to the entropy $H(m)$ and conditional entropy $H(m|y)$ of $m$.  
Such a formulation offers a quantizer $Q^*$ optimal at preserving the label information for a given budget on the number of bins. Unlike designing distortion-based quantizers, the reproducer values of raw logits, i.e., the bin representatives $\{r_m\}$, are not a part of the optimization space, as it is sufficient to know the mapped bin index $m$ of each logit. Once the bin edges $\{g^*_m\}$ are obtained, the bin representative $r_m$ to achieve zero calibration error shall equal $P(y=1|m)$, which can be empirically estimated from the samples within the bin interval $\mathcal{I}_m$.

It is interesting to analyze the objective function after the equality $(a)$ in (\ref{maxMI}). The first term $H(m)$ is maximized if $P(m)$ is uniform, which is attained by Eq. mass binning. A uniform sample distribution over the bins is a sample-efficient strategy to optimize the bin representatives for the sake of calibration. However, it does not consider any label information, and thus can suffer from severe accuracy loss. Through MI maximization, we can view I-Max as revising Eq. mass by incorporating the label information into the optimization objective, i.e., having the second term $H(m|y)$. As a result, I-Max not only enjoys a well balanced sample distribution for calibration, but also maximally preserved label information for accuracy.   

In the example of Fig.~\ref{fig:distributions_bins}, the bin edges of I-Max binning are densely located in an area where the uncertainty of $y$ given the logit is high. This uncertainty results from small gaps between the top class predictions. With small bin widths, such nearby prediction logits are more likely located to different bins, and thus distinguishable after binning. On the other hand, Eq. mass binning has a single bin stretching across this high-uncertainty area due to an imbalanced ratio between the $p(\lambda|y=1)$ and $p(\lambda|y=0)$ samples. Eq. size binning follows a pattern closer to I-Max binning. However, its very narrow bin widths around zero may introduce large empirical frequency estimation errors when setting the bin representatives.

\begin{figure}
	\centering
	\includegraphics[width=0.8\linewidth]{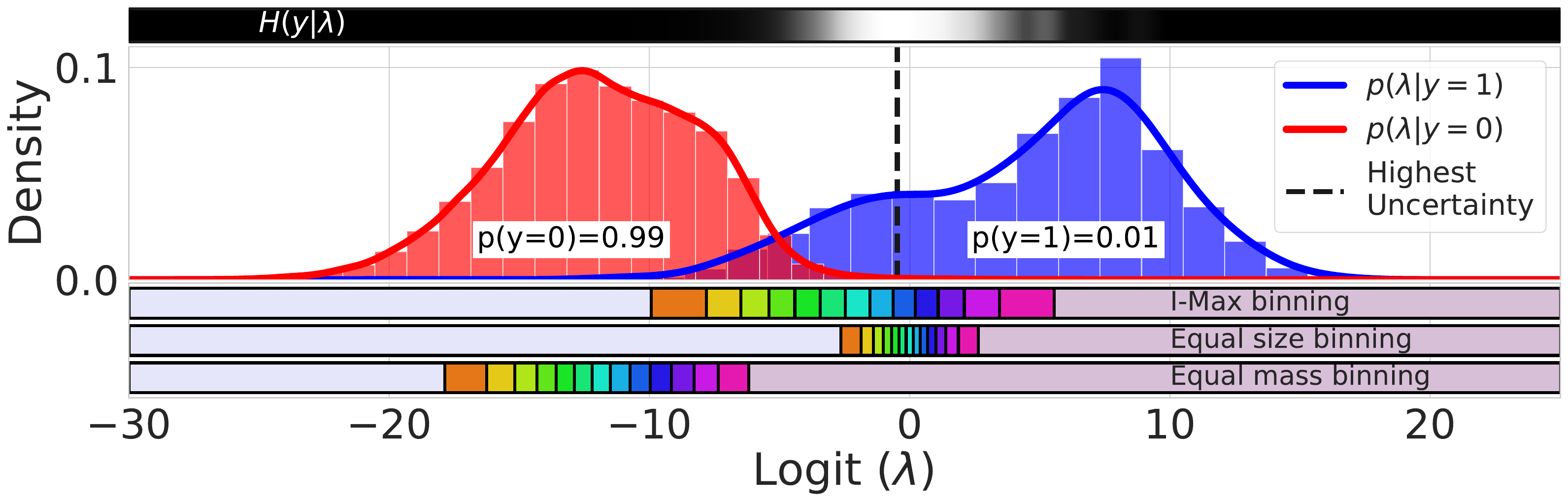}
	\caption{Histogram and KDE of CIFAR100 (WRN) logits in $S$ constructed from $1$\unit{k} calibration samples. The bin edges of Eq. mass binning are located at the high mass region, mainly covering class-$0$ due to the imbalanced two class ratio $1:99$. 
Both Eq. size and I-Max binning cover the high uncertainty region, but here only I-Max yields reasonable bin widths ensuring enough mass per bin. Note, Eq. size binning uniformly partitions the interval $[0,1]$ in the probability domain. The observed dense and symmetric bin location around zero is the outcome of probability-to-logit translation.
	}
	\vspace{-0.3cm}
	\label{fig:distributions_bins} 
\end{figure}

For solving the problem (\ref{maxMI}), we formulate an equivalent problem.
\begin{theorem}\label{theorem}
	The MI maximization problem given in (\ref{maxMI}) is equivalent to 
	\begin{align}
	\max_{Q:\,\{g_m\}} & I(y;m=Q(\lambda))\equiv \min_{\{g_m,\phi_m\}}\mathcal{L}(\{g_m,\phi_m\}) 
	\end{align}
	where the loss $\mathcal{L}(\{g_m,\phi_m\})$ is defined as
		\begin{align}
	&\mathcal{L}(\{g_m,\phi_m\})\stackrel{\Delta}{=}\sum_{m=0}^{M-1}\int_{g_m}^{g_{m+1}} p(\lambda)\sum_{y'\in\{0,1\}}P(y=y'\vert \lambda) \log \frac{P(y=y')}{\sigma\left[(2y'-1)\phi_m\right]} \mathrm{d}\lambda\label{L}
	\end{align}
and $\{\phi_m\}$ as a set of real-valued auxiliary variables are introduced here to ease the optimization.
\end{theorem}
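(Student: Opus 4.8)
The plan is to read the auxiliary variable $\phi_m$ as the logit of a \emph{variational} model $Q_{\phi_m}$ of the true bin posterior $P(y\mid m)$, namely $Q_{\phi_m}(y'=1\mid m)=\sigma(\phi_m)$ and $Q_{\phi_m}(y'=0\mid m)=\sigma(-\phi_m)$, and then to show that $\mathcal{L}(\{g_m,\phi_m\})$ is a variational surrogate for $-I(y;m=Q(\lambda))$ that becomes tight once $\{\phi_m\}$ is optimized. Since $I(y;m)$ does not involve $\{\phi_m\}$, minimizing $\mathcal{L}$ jointly over $(\{g_m\},\{\phi_m\})$ is then, as a problem in $\{g_m\}$, the same as maximizing $I(y;m)$.

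For the computation, I would first note that for fixed $m$ the logarithmic factor $\log\frac{P(y=y')}{\sigma[(2y'-1)\phi_m]}$ is constant in $\lambda$, so it comes out of the integral over $\mathcal{I}_m=[g_m,g_{m+1})$, and $\int_{g_m}^{g_{m+1}}p(\lambda)P(y=y'\mid\lambda)\,\mathrm{d}\lambda$ equals the joint probability $P(m,y')$ that $\lambda\in\mathcal{I}_m$ and $y=y'$. Splitting the logarithm through the exact posterior $P(y=y'\mid m)=P(m,y')/P(m)$ then yields the identity
\begin{align}
\mathcal{L}(\{g_m,\phi_m\})= -I(y;m=Q(\lambda)) \;+\; \sum_{m}P(m)\,\KL\!\big(P(\cdot\mid m)\,\big\|\,Q_{\phi_m}(\cdot\mid m)\big),\nonumber
\end{align}
where the first term collects $\sum_{m,y'}P(m,y')\log\frac{P(y=y')}{P(y=y'\mid m)}=-I(y;m)$ straight from the definition of mutual information. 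Each summand in the second term is a KL divergence, hence $\ge 0$, and because $\sigma:\mathbb{R}\to(0,1)$ is a bijection the binary model $Q_{\phi_m}$ can represent $P(\cdot\mid m)$ exactly; the choice $\phi_m^\star=\log\frac{P(y=1\mid m)}{P(y=0\mid m)}$ zeroes that term bin by bin (and incidentally recovers the zero-calibration-error representative $r_m=\sigma(\phi_m^\star)=P(y=1\mid m)$). Hence $\min_{\{\phi_m\}}\mathcal{L}=-I(y;m=Q(\lambda))$ for every fixed $\{g_m\}$, and taking the further minimum over $\{g_m\}$ gives $\min_{\{g_m,\phi_m\}}\mathcal{L}=-\max_{\{g_m\}}I(y;m=Q(\lambda))$, both attained at the same bin edges — the asserted equivalence.

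The delicate point I expect to be the main obstacle is degenerate bins: if $P(m)=0$, or if a bin's label is deterministic so that $P(y=1\mid m)\in\{0,1\}$, then $\phi_m^\star=\pm\infty$ and the infimum over $\phi_m$ is approached only in a limit. I would dispatch this with a routine limiting argument (the weight $P(m)$ kills the first case, and $(2y'-1)\phi_m\to+\infty$ drives the corresponding $\KL\to 0$ in the second), or simply restrict the optimization to non-degenerate edge configurations, which leaves the supremum of $I(y;m)$ unchanged. Everything else is bookkeeping with the definitions of $H(m)$, $H(m\mid y)$ and $I(y;m)$.
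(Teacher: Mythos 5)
Your proposal is correct and follows essentially the same route as the paper's proof: both establish the identity $\mathcal{L}(\{g_m,\phi_m\}) + I(y;m=Q(\lambda)) = \sum_m P(m)\,\KL\bigl(P(\cdot\vert m)\,\Vert\, P_\sigma(\cdot;\phi_m)\bigr) \ge 0$, with equality at $\phi_m^\star=\log\frac{P(y=1\vert m)}{P(y=0\vert m)}$, and then minimize over $\{\phi_m\}$ to obtain the equivalence. Your explicit treatment of degenerate bins is a small addition the paper leaves implicit, but it does not change the argument.
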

\begin{proof}
See~\ref{S:proof} for the proof. 
\end{proof}

Next, we compute the derivatives of the loss $\mathcal{L}$ with respect to $\{g_m, \phi_m\}$. 
When the conditional distribution $P(y\vert\lambda)$ takes the $\mathrm{sigmoid}$ model, i.e., $P(y|\lambda)\approx \sigma[(2y-1)\lambda]$, the stationary points of $\mathcal{L}$, zeroing the gradients over $\{g_m,\phi_m\}$, have a closed-form expression
\begin{align}
g_m &=\hspace{-0.1cm} \log \left\{\hspace{-0.1cm}\frac{\log\left[\hspace{-0.1cm}\frac{1+e^{\phi_m}}{1+e^{\phi_{m-1}}} \right]}{ \log\left[\hspace{-0.1cm}\frac{1+e^{-\phi_{m-1}}}{1+e^{-\phi_m}} \right]}\hspace{-0.1cm}\right\},\,\,\phi_m \hspace{-0.1cm}=\hspace{-0.1cm} \log\left\{\hspace{-0.1cm}\frac{\int_{g_m}^{g_{m+1}}\hspace{-0.1cm}\sigma(\lambda)p(\lambda)\mathrm{d}\lambda}{\int_{g_m}^{g_{m+1}}\hspace{-0.1cm}\sigma(-\lambda)p(\lambda)\mathrm{d}\lambda}\hspace{-0.1cm}\right\} \hspace{-0.1cm}\approx\hspace{-0.1cm} \log\left\{\hspace{-0.1cm}\frac{\sum_{\lambda_n\in \mathcal{S}_m}\hspace{-0.1cm}\sigma(\lambda_n)}{\sum_{\lambda_n\in \mathcal{S}_m}\hspace{-0.1cm}\sigma(-\lambda_n)}\hspace{-0.1cm}\right\},\label{g_phi}
\end{align}
where the approximation for $\phi_m$ arises from using the logits in the training set $S$ as an empirical approximation to $p(\lambda)$ and $S_m\stackrel{\Delta}{=}S\cap [g_m, g_{m+1})$. 
So, we can solve the problem by iteratively and alternately updating $\{g_m\}$ and $\{\phi_m\}$ based on (\ref{g_phi}) (see Algo.~\ref{pseudocode_imax} in the appendix for pseudocode). The convergence and initialization of such an iterative method as well as the $\mathrm{sigmoid}$-model assumption are discussed along with the proof of Theorem~\ref{theorem} in Sec.~\ref{S:proof}.

As the iterative method operates under an approximation of the inaccessible ground truth distribution $p(y,\lambda)$, we synthesize an example, see Fig.~\ref{fig:MI_exp}, to assess its effectiveness. 
As quantization can only reduce the MI, we evaluate $I(y;\lambda)$, serving as the upper bound in Fig.~\ref{fig:MI_exp}-a) for $I(y;Q(\lambda))$. Among the three realizations of $Q$, I-Max achieves higher MI than Eq. size and Eq. mass, and more importantly, it approaches the upper bound over the iterations. 
Next, we assess the performance within the framework of information bottleneck (IB)~\citep{infobottle1999}, see Fig.~\ref{fig:MI_exp}-b). In the context of our problem, IB tackles $\min 1/\beta \times I(\lambda;Q(\lambda))-I(y;Q(\lambda))$ with the weight factor $\beta>0$ to balance between 1) maximizing the information rate $I(y;Q(\lambda))$, and 2) minimizing the compression rate $I(\lambda;Q(\lambda))$.  
By varying $\beta$, IB gives the maximal achievable information rate for the given compression rate. Fig.~\ref{fig:MI_exp}-b) shows that I-Max approaches the theoretical limits and provides an information-theoretic perspective on the sub-optimal performance of the alternative binning schemes. 
Sec.~\ref{supp:ib} has a more detailed discussion on the connection of IB and our problem formulation.

\begin{figure*}
	\begin{subfigure}[t]{0.4\textwidth}
		\includegraphics[width=1.0\linewidth]{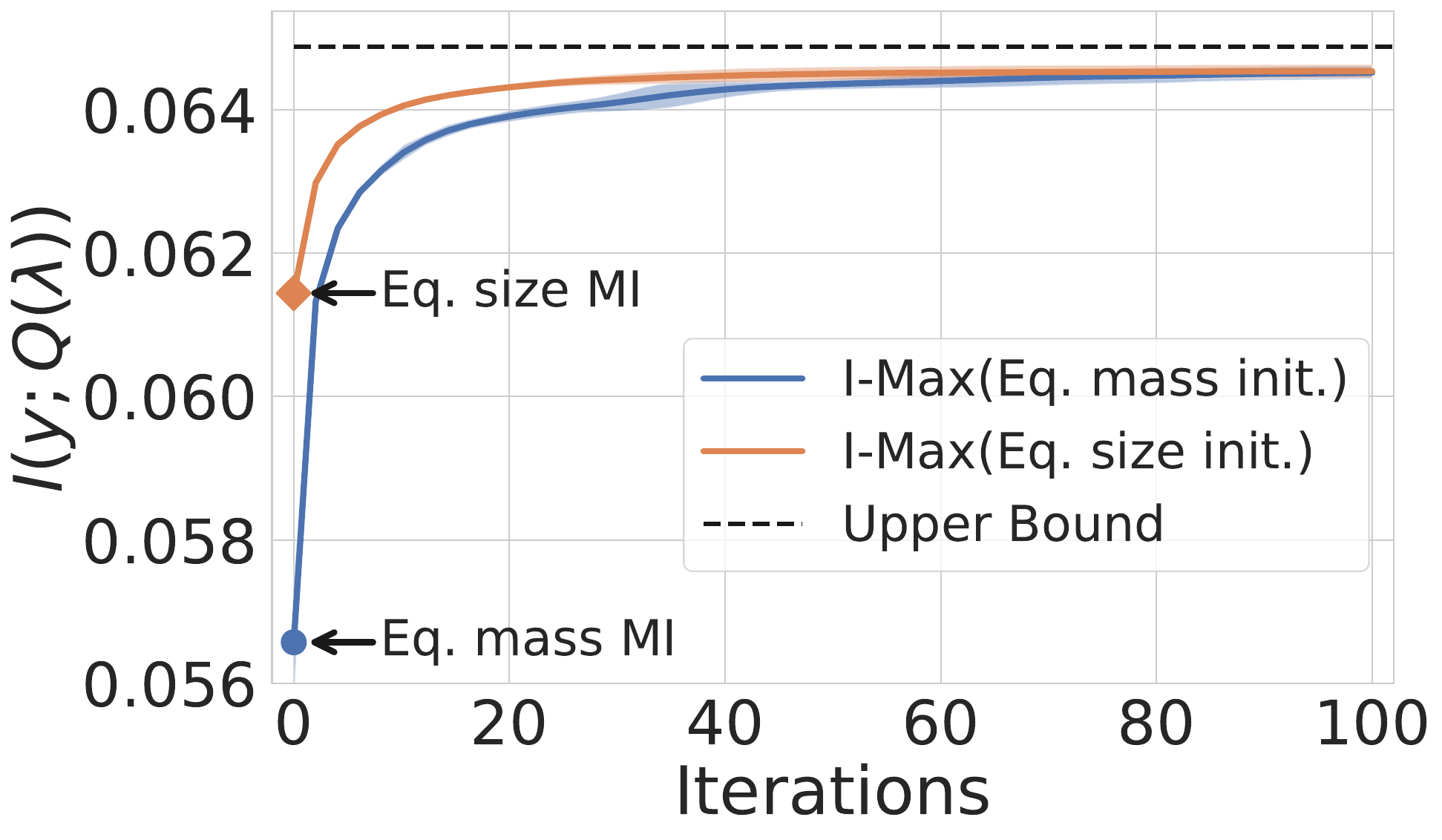}
		\caption{Convergence behavior}
	\end{subfigure}
	\hfill
	\begin{subfigure}[t]{0.57\textwidth}
		\includegraphics[width=1.0\linewidth]{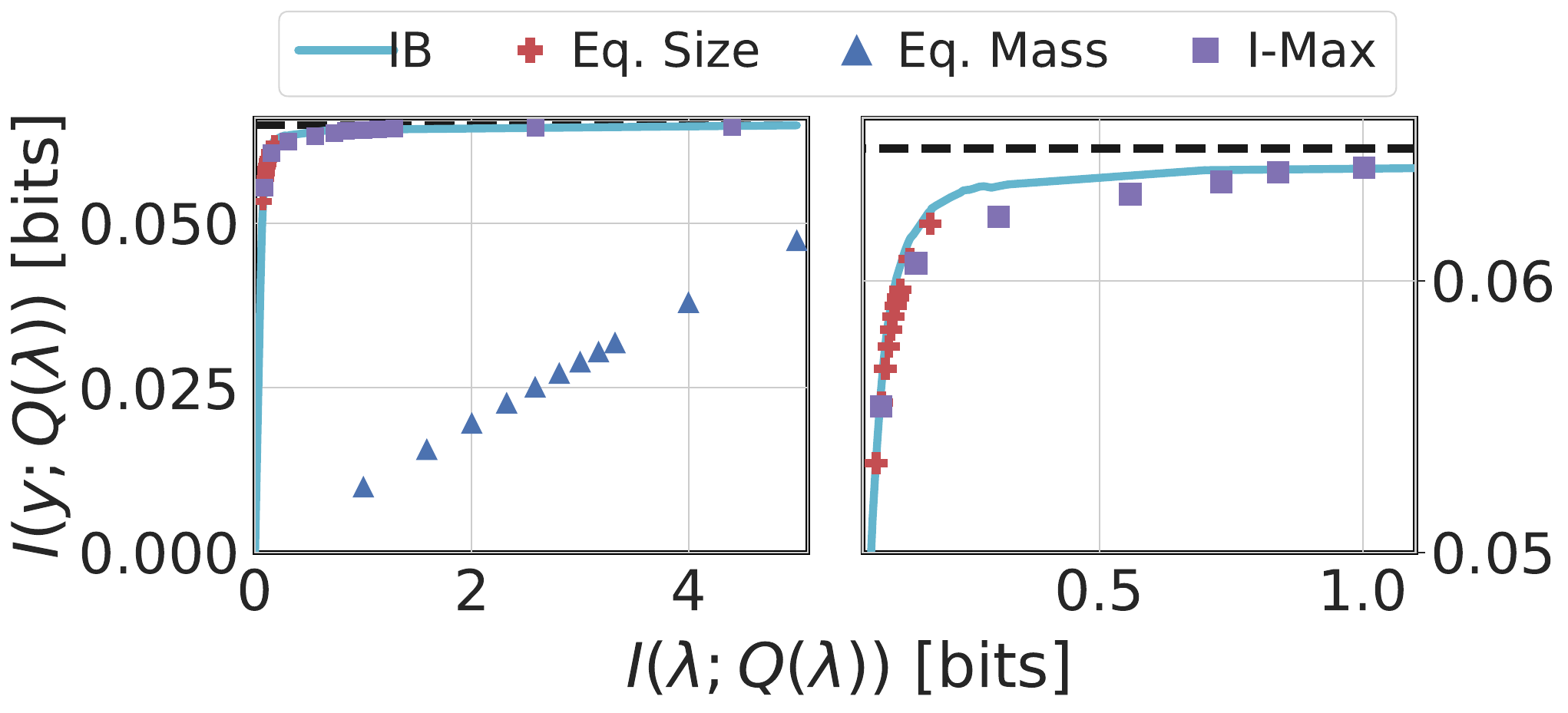}
		\caption{Label-information vs. compression rate}
	\end{subfigure}
	\caption{MI evaluation: The KDEs of $p(\lambda\vert y)$ for $y\in\{0,1\}$ shown in Fig.~\ref{fig:distributions_bins} are used as the ground truth distribution to synthesize a dataset $S_{\text{kde}}$ and evaluate the MI of Eq. mass, Eq. size, and I-Max binning trained over $S_{\text{kde}}$. (a) The developed iterative solution for I-Max bin optimization over $S_{\text{kde}}$ successfully increases the MI over iterations, approaching the theoretical upper bound $I(y;\lambda)$. For comparison, I-Max is initialized with both Eq. size and Eq. mass bin edges, both of which are suboptimal at label information preservation. (b) We compare the three binning schemes with $2$ to $16$ quantization levels against the IB limit~\citep{infobottle1999} on the label-information $I(y;Q(\lambda))$ vs. the compression rate $I(\lambda;Q(\lambda))$. The information-rate pairs achieved by I-Max binning are very close to the limit. The information loss of Eq. mass binning is considerably larger, whereas Eq. size binning gets stuck in the low rate regime, failing to reach the upper bound even with more bins.}
	\label{fig:MI_exp}\vspace{-0.3cm}
\end{figure*}

\section{Experiments}\label{sec:experiment}
\paragraph{Datasets and Models} We evaluate post-hoc calibration methods on four benchmark datasets, i.e., ImageNet~\citep{imagenet_cvpr09}, CIFAR 10/100~\citep{cifar} and SVHN~\citep{Netzer_SVHN}, and across various modern DNNs architectures.
More details are reported in Sec.~\ref{sec:training_details}.

\paragraph{Training and Evaluation Details} 
We perform class-balanced random splits of the data test set, unless stated otherwise: the calibration and evaluation set sizes are both $25$\unit{k} for ImageNet, and $5$\unit{k} for CIFAR10/100. Different to ImageNet and CIFAR10/100, the test set of SVHN is class imbalanced. We evenly split it into the calibration and evaluation set of size $13$\unit{k}. 
All reported numbers are the means across $5$ random splits; stds can be found in the appendix. 
Note that some calibration methods only use a subset of the available calibration samples for training, showing their sample efficiency. %
Further calibrator training details are provided in Sec.~\ref{sec:training_details}. 

We empirically evaluate MI, Accuracy (top-$1$ and $5$ ACCs), ECE (class-wise and top-$1$), Brier and NLL; the latter are shown in the appendix. Analogous to~\citep{Nixon_2019_CVPR_Workshops}, we use thresholding when evaluating the class-wise ECE (${}_{\text{CW}}\text{ECE}_{\mathrm{thr}}$). Without thresholding, the empirical class-wise ECE score may be misleading. %
When a class-$k$ has a small class prior (e.g. $0.01$ or $0.001$), the empirical class-wise ECE score will be dominated by prediction samples where the class-$k$ is not the ground truth. 
For these cases, a properly trained classifier will often not rank this class-$k$ among the top classes and instead yield only small calibration errors. 
While it is good to have many cases with small calibration errors, they should not wash out the calibration errors of the rest of the cases (prone to poor calibration) through performance averaging. 
These include (1) class-$k$ is the ground truth class and not correctly ranked and (2) the classifier mis-classifies some class-$j$ as class-$k$. %
The thresholding remedies the washing out by focusing more on crucial cases (i.e. only averaging across cases where the prediction of the class-$k$ is above a threshold). In the following experiments, our primary choice of threshold is to set it according to the class prior for the reason that the class-$k$ is unlikely to be the ground truth if its a-posteriori probability becomes lower than its prior after observing the sample. 

While empirical ECE estimation of binning schemes is simple, we resort to HDE with $100$ equal size evaluation bins~\citep{wenger2020calibration} for scaling methods. Sec.~\ref{sec:eval_schemes_scaling_methods} also reports the results attained by HDE with additional binning schemes and KDE. For HDE-based ones, we notice that with $100$ evaluation bins, the ECE estimate is insensitive to the choice of binning scheme.

\subsection{Eq. Size, Eq. Mass vs. I-Max Binning}\label{sec:binning_method_comparisons}

In Tab.~\ref{table:cw_vs_cw_merge_binning}, we compare three binning schemes: Eq. size, Eq. mass and I-Max binning.
The accuracy performances of the binning schemes are proportional to their MI; 
Eq. mass binning is highly sub-optimal at label information preservation, and thus shows a severe accuracy drop. Eq. size binning accuracy is more similar to that of I-Max binning, but still lower, in particular at $\text{Acc}_{\text{top5}}$.  
Also note that I-Max approaches the MI theoretical limit of $I(y;\lambda)$=$0.0068$.
Advantages of I-Max become even more prominent when comparing the NLLs of the binning schemes.
For all ECE evalution metrics, I-Max binning improves on the baseline calibration performance, and outperforms Eq. size binning.  
Eq. mass binning is out of this comparison scope due to its poor accuracy deeming the method impractical. 
Overall, I-Max successfully mitigates the negative impact of quantization on ACCs while still providing an improved and verifiable ECE performance.
Additionally, one-for-all sCW I-Max achieves an even better calibration with only $1$\unit{k} calibration samples, instead of the standard CW binning with $25$\unit{k} calibration samples, highlighting the effectiveness of the sCW strategy. 

Furthermore, it is interesting to note that ${}_{\text{CW}}\text{ECE}$ of the Baseline classifier is very small, i.e., $0.000442$, thus it may appear as the Baseline classifier is well calibrated. However, ${}_{\text{top1}}\text{ECE}$ is much larger, i.e., $0.0357$. Such inconsistent observations disappear after thresholding the class-wise ECE with the class prior. This example confirms the necessity of thresholding the class-wise ECE. 

In Sec.~\ref{sec:ablation_num_bins_num_samples} we perform additional ablations on the number of bins and calibration samples.
Accordingly, a post-hoc analysis investigates how the quantization error of the binning schemes change the ranking order. 
Observations are consistent with the intuition behind the problem formulation (see Sec.~\ref{sec:mimax}) and empirical results from Tab.~\ref{table:cw_vs_cw_merge_binning} that MI maximization is a proper criterion for multi-class calibration and it maximally mitigates the potential accuracy loss.

\begin{table*}
\footnotesize
\setlength{\tabcolsep}{0.3em} 
\renewcommand{\arraystretch}{1.1}
\centering
\caption{ACCs and ECEs of Eq. mass, Eq. size and I-Max binning for the case of ImageNet (InceptionResNetV2). 
Due to the poor accuracy of Eq. mass binning, its ECEs are not considered for comparison.
The MI is empirically evaluated based on KDE analogous to Fig.~\ref{fig:MI_exp}, where the MI upper bound is  $I(y;\lambda)$=$0.0068$. For the other datasets and models, we refer to~\ref{sec:tab1_exten}.} 
\begin{tabular}{lccccccccc}
\rowcolor{verylightgray}
\toprule
        \footnotesize Binn. &\footnotesize sCW(?) & \footnotesize size & \footnotesize $\text{MI}$ $\uparrow$ &  \footnotesize $\text{Acc}_{\text{top1}}$ $\uparrow$ & \footnotesize $\text{Acc}_{\text{top5}}$ $\uparrow$ &  \footnotesize  ${}_{\text{CW}}\text{ECE}$ $\downarrow$ & \footnotesize ${}_{\text{CW}}\text{ECE}_{\mathrm{cls-prior}}$ $\downarrow$ & \footnotesize ${}_{\text{top1}}\text{ECE}_{}$ $\downarrow$ & \footnotesize \text{NLL} $\downarrow$\\
\midrule
\midrule
         \footnotesize Baseline & \textcolor{red}{\text{\xmark}}  & - & - &  \textbf{80.33} &     \textbf{95.10} &  0.000442 &     0.0486 &   0.0357 &  0.8406 \\
\hline        
        \multirow{2}{*}{\footnotesize{}{\text{Eq. Mass}}} 
        & \textcolor{red}{\text{\xmark}} &  25\unit{k} & 0.0026 &  7.78 &     27.92 &  0.000173 &     0.0016 &       0.0606 &  3.5960 \\
		   & \textcolor{blue}{\text{\cmark}} &    1\unit{k} & 0.0026 &  5.02 &     26.75 &  0.000165 &     0.0022 &       0.0353 &  3.5272 \\		 
\hline        
        \multirow{2}{*}{\footnotesize{}{\text{Eq. Size}}}
         & \textcolor{red}{\text{\xmark}} &  25\unit{k} & 0.0053 	 & 78.52 &     89.06 &  0.000310 &     0.1344 &   0.0547 &  1.5159 \\
         & \textcolor{blue}{\text{\cmark}} &   1\unit{k} & 0.0062 &  80.14 &     88.99 &  0.000298 &     0.1525 &   0.0279 &  1.2671 \\
\hline
      \multirow{2}{*}{\footnotesize{}{\text{I-Max}}}             
       & \textcolor{red}{\text{\xmark}} &  25\unit{k} & \textbf{0.0066} &  80.27 &     95.01 &  0.000346 &     0.0342 &   0.0329 &  0.8499 \\
       & \textcolor{blue}{\text{\cmark}} &   1\unit{k} & \textbf{0.0066} &  80.20 &     94.86 & \textbf{0.000296} &    \textbf{0.0302} &   \textbf{0.0200} & \textbf{0.7860} \\
\bottomrule
\end{tabular}
\label{table:cw_vs_cw_merge_binning}\vspace{-0.3cm}
\end{table*}

\subsection{Scaling vs. I-Max Binning}\label{sec:bin_vs_scaling}
In Tab.~\ref{table:bigcomparison}, we compare I-Max binning to benchmark scaling methods. Namely, matrix scaling with $L_2$ regularization~\citep{Kull2019} has a large model capacity compared to other parametric scaling methods, while TS~\citep{pmlr-v70-guo17a} only uses a single parameter and MnM~\citep{zhang2020mix} uses three temperatures as an ensemble of TS (ETS). As a non-parametric method, GP~\citep{wenger2020calibration} yields state of the art calibration performance. %
Additional $8$ scaling methods can be found in Sec.~\ref{sec:tab2_exten}.
Benefiting from its model capacity, matrix scaling achieves the best accuracy.
I-Max binning achieves the best calibration on CIFAR-100; on ImageNet, it has the best ${}_{\text{CW}}\text{ECE}$, and is similar to GP on ${}_{\text{top1}}\text{ECE}$. For a broader scope of comparison, we refer to Sec.~\ref{sec:tab1_exten}.

To showcase the complementary nature of scaling and binning, we investigate combining binning with GP (a top performing non-parametric scaling method, though with the drawback of high complexity) and TS (a commonly used scaling method). 
Here, we propose to bin the raw logits and use the GP/TS scaled logits of the samples per bin for setting the bin representatives, replacing the empirical frequency estimates. 
As GP is then only needed at the calibration learning phase, complexity is no longer an issue. %
Being mutually beneficial, GP helps improving ACCs and ECEs of binning, i.e., marginal ACC drop $0.16$\% ($0.01$\%) on $\text{Acc}_{\text{top1}}$ for ImageNet (CIFAR100) and $0.24$\% on $\text{Acc}_{\text{top5}}$ for ImageNet; and large ECE reduction $38.27$\% ($49.78$\%) in ${}_{\text{CW}}\text{ECE}_{\mathrm{cls-prior}}$ and $66.11$\% ($76.07$\%) in ${}_{\text{top1}}\text{ECE}_{}$ of the baseline for ImageNet (CIFAR100).

\begin{table*}[t!]
\footnotesize
	\setlength{\tabcolsep}{0.25em} 
\renewcommand{\arraystretch}{1.1}
\centering
\caption{ACCs and ECEs of I-Max binning ($15$ bins) and scaling methods. All methods use $1$\unit{k} calibration samples, except for Mtx. Scal. and ETS-MnM, which requires the complete calibration set, i.e., $25$\unit{k}/$5$\unit{k} for ImageNet/CIFAR100. Additional $6$ scaling methods can be found in~\ref{sec:tab2_exten}.}  \vspace{-0.2cm}
\begin{tabular}{l|ccc|cccc}
\rowcolor{verylightgray}
\toprule
				        & \multicolumn{3}{c|}{CIFAR100 (WRN)} &       \multicolumn{4}{c}{ImageNet (InceptionResNetV2)} \\
\rowcolor{verylightgray}
Calibrator &    $\text{Acc}_{\text{top1}}$ $\uparrow$  &  ${}_{\text{CW}}\text{ECE}_{\mathrm{cls}-\mathrm{prior}}$ $\downarrow$ &  ${}_{\text{top1}}\text{ECE}_{}$ $\downarrow$ &  $\text{Acc}_{\text{top1}}$ $\uparrow$ &  $\text{Acc}_{\text{top5}}$ $\uparrow$ &  ${}_{\text{CW}}\text{ECE}_{\mathrm{cls}-\mathrm{prior}}$ $\downarrow$ & ${}_{\text{top1}}\text{ECE}_{}$ $\downarrow$ \\
\bottomrule
        Baseline   &  81.35  &     0.1113 &     0.0748 &  80.33 &     95.10 &     0.0486 &     0.0357\\
\hline
  Mtx Scal. w. $L_2$   &  \textbf{81.44}  &     0.1085 &     0.0692 &   \textbf{80.78} &     \textbf{95.38} &     0.0508 &     0.0282 \\
              TS    &  81.35  &     0.0911 &   0.0511 &   80.33 &     95.10 &     0.0559 &  0.0439\\
             GP     &  81.34  &     0.1074 &  0.0358 &   80.33 &     95.11 &     0.0485 &  \textit{0.0186} \\
            ETS-MnM &  81.35 &      0.0976 &  0.0451 &   80.33 &     95.10 &     0.0479 &  0.0358 \\
 \arrayrulecolor{verylightgray} \hline
     I-Max  &  81.30 &     \textit{0.0518} &     \textit{0.0231}  &   80.20 &     94.86 &     \textit{0.0302} &     0.0200 \\  \arrayrulecolor{verylightgray}
I-Max w. \footnotesize TS     &  81.34  &  \textbf{0.0510} &  0.0365 &  80.20 & 94.87 	 & 0.0354 & 0.0402 \\
     I-Max w. \footnotesize GP &  81.34 &     0.0559 &     \textbf{0.0179} &   80.20 &     94.87 &     \textbf{0.0300} &     \textbf{0.0121} \\
\bottomrule
\end{tabular}
\label{table:bigcomparison} \vspace{-0.2cm}
\end{table*}

\subsection{Shared Class Wise Helps Scaling Methods}
Though without quantization loss, some scaling methods, i.e., Beta~\citep{pmlr-v54-kull17a}, Isotonic regression~\citep{Zadrozny2002}, and Platt scaling~\citep{Platt1999}, even suffer from more severe accuracy degradation than I-Max binning. 
As they also use the one-vs-rest strategy for multi-class calibration, we find that the proposed shared CW binning strategy is beneficial for reducing their accuracy loss and improving their ECE performance, with only $1$\unit{k} calibration samples, see Tab.~\ref{table:cw_vs_scw_scaling}.

\begin{table*}[t!]
\footnotesize
\caption{ACCs and ECEs of scaling methods using the one-vs-rest conversion for multi-class calibration. 
Here we compare using $1$\unit{k} samples for both CW and one-for-all sCW scaling.}\vspace{-0.2cm}
	\setlength{\tabcolsep}{0.1em} 
\renewcommand{\arraystretch}{1.1}
\centering
\begin{tabular}{lc|ccc|cccc}
\rowcolor{verylightgray}
\toprule
				 &      & \multicolumn{3}{c|}{CIFAR100 (WRN)} & \multicolumn{4}{c}{ImageNet (InceptionResNetV2)} \\
\rowcolor{verylightgray}		 
Calibrator & sCW(?) & $\text{Acc}_{\text{top1}}$ $\uparrow$ &  ${}_{\text{CW}}\text{ECE}_{\mathrm{cls-prior}}$ $\downarrow$ &  ${}_{\text{top1}}\text{ECE}_{}$ $\downarrow$ &  $\text{Acc}_{\text{top1}}$ &  $\text{Acc}_{\text{top5}}$  $\uparrow$  &  ${}_{\text{CW}}\text{ECE}_{\mathrm{cls-prior}}$ $\downarrow$ &  ${}_{\text{top1}}\text{ECE}_{}$ $\downarrow$  \\
\midrule
           Baseline  & -  &  81.35 &   0.1113 &     0.0748 &  80.33 & 95.10  &   0.0489 &     0.0357\\
           \hline
        Beta & \textcolor{red}{\text{\xmark}} &  81.02  &   0.1066 &     0.0638 &  77.80 & 86.83 &   0.1662 &     0.1586 \\
		Beta  & \scriptsize \textcolor{blue}{\text{\cmark}}&  \textbf{81.35}  &   0.0942 &     0.0357 &  \textbf{80.33} & \textbf{95.10}  &   0.0625 &     0.0603 \\
I-Max w. Beta & \scriptsize \textcolor{blue}{\text{\cmark}}&	81.34  &	\textbf{0.0508} &	\textbf{0.0161} & 80.20 & 94.87 	  & 	\textbf{0.0381} &	\textbf{0.0574}  \\
		\hline
        Isot. Reg. (IR)  & \textcolor{red}{\text{\xmark}} &  80.62  &   0.0989 &     0.0785 &  77.82& 88.36  &   0.1640 &     0.1255 \\
        Isot. Reg. (IR) & \scriptsize \textcolor{blue}{\text{\cmark}} &  81.30  &   0.0602 &    0.0257  &  \textbf{80.22}& \textbf{95.05}  &  0.0345 &     0.0209     \\
I-Max w. IR & \scriptsize \textcolor{blue}{\text{\cmark}} & \textbf{81.34}  &	\textbf{0.0515} &	\textbf{0.0212} & 80.20 & 94.87  & 	\textbf{0.0299} &	\textbf{0.0170} \\
		\hline      
		Platt Scal. & \textcolor{red}{\text{\xmark}}  &  81.31  &   0.0923 &     0.1035  & \textbf{80.36}  &     94.91 &   0.0451 &     0.0961 \\
		Platt Scal. & \scriptsize \textcolor{blue}{\text{\cmark}}  &  \textbf{81.35}  &   0.0816 &     0.0462 &  80.33 & \textbf{95.10}  &   0.0565 &     0.0415 \\
		I-Max w. Platt & \scriptsize \textcolor{blue}{\text{\cmark}}  & 81.34  &	\textbf{0.0511} &	\textbf{0.0323} & 80.20 & 94.87  & 	\textbf{0.0293} & 	\textbf{0.0392} \\
\bottomrule
\end{tabular}
\label{table:cw_vs_scw_scaling}\vspace{-0.1cm}
\end{table*}

\subsection{Imbalanced Multi-class Setting}
Lastly, we turn our experiments to an imbalanced multi-class setting. The adopted SVHN dataset has non-uniform class priors, ranging from $6\%$ (e.g. digit $8$) to $19\%$ (e.g. digit $0$). 
We reproduce Tab.~\ref{table:bigcomparison} for SVHN, yielding Tab.~\ref{tab:imbalanced_svhn_table}. 
In order to better control the bias caused by the calibration set merging in the imbalanced multi-class setting, the former one-for-all sCW strategy in the balanced multi-class setting changes to sharing I-Max among classes with similar class priors. %
Despite the class imbalance, I-Max and its variants perform best compared to the other calibrators, being similar to Tab.~\ref{table:bigcomparison}.  %
This shows that I-Max and the sCW strategy both can generalize to imbalanced multi-class setting. 

In Tab.~\ref{tab:imbalanced_svhn_table}, we additionally evaluate the class-wise ECE at multiple threholds.
We ablate various thresholds settings, namely, 1) 0 (no thresholding); 2) the class prior; 3) $1/K$ (any class with prediction probability below $1/K$ will not be the top-1);  and 4) a relatively large number $0.5$ (the case when the confidence on class-$k$ outweighs NOT class-$k$). 
We observe that I-Max and its variants are consistently top performing across the different thresholds.

\begin{table*}[t!]
\footnotesize
\centering
\caption{ACCs and ECEs of I-Max binning ($15$ bins) and scaling methods. All methods use $1$\unit{k} calibration samples, except for Mtx. Scal. and ETS-MnM, which requires the complete calibration set, i.e., $13\unit{k}$ for SVHN. Here, we also report the class-wise ECEs using four different thresholds.}  

\begin{tabular}{l|c|c|ccccc}
\toprule
\rowcolor{verylightgray}
         Calibrator &     $\text{Acc}_{\text{top1}}$ $\uparrow$  &  ${}_{\text{top1}}\text{ECE}_{}$ $\downarrow$ &     ${}_{\text{CW}}\text{ECE}_{0}$ $\downarrow$ &   ${}_{\text{CW}}\text{ECE}_{\frac{1}{K}}$ $\downarrow$ &  ${}_{\text{CW}}\text{ECE}_{\text{cls-prior}}$  $\downarrow$ &  ${}_{\text{CW}}\text{ECE}_{\frac{1}{2}}$ $\downarrow$\\
\midrule
Baseline &  97.08 &    0.0201 &  0.0052 &   0.0353 &         0.0356  &   0.0260 \\
\hline
Mtx. Scal. w. L$_2$ &  \textbf{97.09} &    0.0188 &  0.0050 &   0.0346 &         0.0349 &   0.0250 \\
ETS-MnM &  97.08 &    0.0152 &  0.0054 &   0.0379 &         0.0382 &   0.0256 \\
TS &  97.08 &    0.0106 &  0.0041 &   0.0323 &         0.0327  &   0.0206 \\
GP &  97.08 &    0.0104 &  0.0043 &   0.0340 &         0.0341  &   0.0212 \\
\hline
I-Max &  96.88 &    0.0164 &  0.0043 &   0.0244 &         0.0245 &   0.0176 \\
I-Max w. TS &  97.06 &    0.0088 &  0.0025 &   0.0156 &         0.0155 &   0.0112 \\
I-Max w. GP &  97.06 &    \textbf{0.0074} &  \textbf{0.0024} &   \textbf{0.0148} &         \textbf{0.0147} &   \textbf{0.0110} \\
\bottomrule
\end{tabular}
\label{tab:imbalanced_svhn_table}
\end{table*}

\section{Conclusion}
We proposed I-Max binning for multi-class calibration, which maximally preserves the label-information under quantization, reducing potential accuracy losses.
Using the shared class-wise (sCW) strategy, we also addressed the sample-inefficiency issue of binning and scaling methods that rely on one-vs-rest (OvR) for multi-class calibration.
Our experiments showed that I-Max yields consistent class-wise and top-$1$ calibration improvements over multiple datasets and model architectures, outperforming HB and state-of-the-art scaling methods. 
Combining I-Max with scaling methods offers further calibration performance gains, and more importantly, ECE estimates that can converge to the ground truth in the large sample limit.   

Future work will investigate  extensions of I-Max that jointly calibrate multiple classes, and thereby directly model class correlations. Interestingly, even on datasets such as ImageNet which contain several closely related classes, there is no clear evidence that methods that do model class correlations, e.g. Mtrx. Scal. capture uncertainties better. In fact, I-Max empirically outperforms such methods, although all classes are calibrated independently under the OvR assumption.
Non-OvR based methods may fail due to various reasons such as under-parameterized models (e.g. TS), limited data (e.g. Mtrx. Scal.) or complexity constraints (e.g. GP).
Joint class calibration therefore strongly relies on new sample efficient evaluation measures that estimate how accurately class correlations are modeled, and which can be included as additional optimization criteria.

\clearpage
\bibliography{references}
\bibliographystyle{iclr2021_conference}

\clearpage

\renewcommand{\thepage}{A\arabic{page}} 
\renewcommand{\thesection}{A\arabic{section}}  
\renewcommand{\thetable}{A\arabic{table}}  
\renewcommand{\thefigure}{A\arabic{figure}} 
\renewcommand{\theequation}{A\arabic{equation}}

\setcounter{page}{1}
\setcounter{equation}{0}
\setcounter{section}{0}
\setcounter{table}{0}
\setcounter{figure}{0}
\setcounter{theorem}{0}

This document supplements the presentation of \emph{Multi-Class Uncertainty Calibration via Mutual Information Maximization-based Binning} in the main paper with the following:
\begin{enumerate}[label=\textsc{A\arabic*:},leftmargin=1cm]
\item No extra normalization after $K$ class-wise calibrations;
\item $S$ vs. $S_k$ as empirical approximations to $p(\lambda_k, y_k)$ for bin optimization in Sec.~\ref{sec:ovr};
\item Mathematical proof for Theorem~\ref{theorem} and algorithm details of I-Max in Sec.~\ref{sec:mimax};	
\item Post-hoc analysis on the experiment results in Sec.~\ref{sec:binning_method_comparisons};
\item Ablation on the number of bins and calibration set size;
\item Empirical ECE estimation of scaling methods under multiple evaluation schemes;
\item Post-hoc Calibration and During-Training Calibration;
\item Training details;
\item Extend Tab.~\ref{table:cw_vs_cw_merge_binning} in Sec.~\ref{sec:binning_method_comparisons} for more datasets and models;
\item Extend Tab.~\ref{table:bigcomparison} in Sec.~~\ref{sec:bin_vs_scaling} for more scaling methods, datasets and models. 
\end{enumerate}

\section{No Extra Normalization after $K$ Class-wise Calibrations}
\label{sec:no_extra_norm_after_cw_cal}
There is a group of calibration schemes that rely on one-vs-rest conversion to turn multi-class calibration into $K$ class-wise calibrations, e.g., histogram binning (HB), Platt scaling and Isotonic regression. After per-class calibration, the calibrated prediction probabilities of all classes no longer fulfill the constraint, i.e., $\sum_{k=1}^{K} q_k \neq 1$. An extra normalization step was taken in~\cite{pmlr-v70-guo17a} to regain the normalization constraint. Here, we note that this extra normalization is unnecessary and partially undoes the per-class calibration effect. For HB, normalization will make its outputs continuous like any other scaling methods, thereby suffering from the same issue at ECE evaluation. 

One-vs-rest strategy essentially marginalizes the multi-class predictive distribution over each class. After such marginalization, each class and its prediction probability shall be treated independently, thus no longer being constrained by the multi-class normalization constraint. This is analogous to train a CIFAR or ImageNet classifier with $\mathrm{sigmoid}$ rather than $\mathrm{softmax}$ cross entropy loss, e.g.,~\cite{Ryou_2019_ICCV}. At training and test time, each class prediction probability is individually taken from the respective $\mathrm{sigmoid}$-response without normalization. The class with the largest response is then top-ranked, and normalization itself has no influence on the ranking performance.

\section{$S$ vs. $S_k$ as Empirical Approximations to $p(\lambda_k, y_k)$ for Bin Optimization}
\label{sec:s_vs_sk_empirical_approx}

In Sec.~\ref{sec:ovr} of the main paper, we discussed the sample inefficiency issue when there are classes with small class priors. Fig.~\ref{fig:SkvsSmerge}-a) shows an example for ImageNet with $1$\unit{k} classes. The class prior for the class-$394$ is about $0.001$. Among the $10$\unit{k} calibration samples, we can only collect $10$ samples with ground truth is the class-$394$. Estimating the bin representatives from these $10$ samples is highly unreliable, resulting into poor calibration performance. 

\begin{figure*}[t!]
	\begin{subfigure}[t]{0.5\textwidth}
		\includegraphics[width=1.0\linewidth]{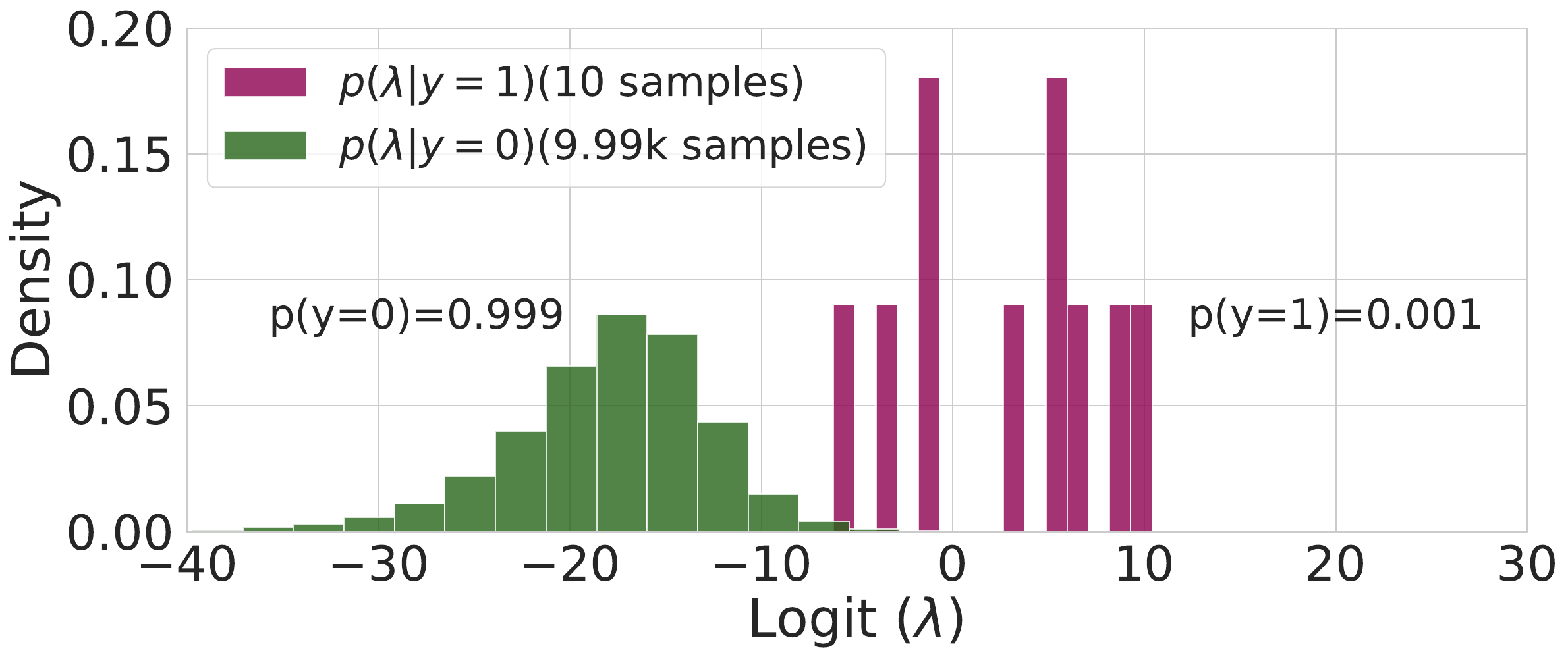}
		\caption{train set $S_{k=394}$ for CW binning.}
	\end{subfigure}
	\hfill
	\begin{subfigure}[t]{0.5\textwidth}
		\includegraphics[width=1.0\linewidth]{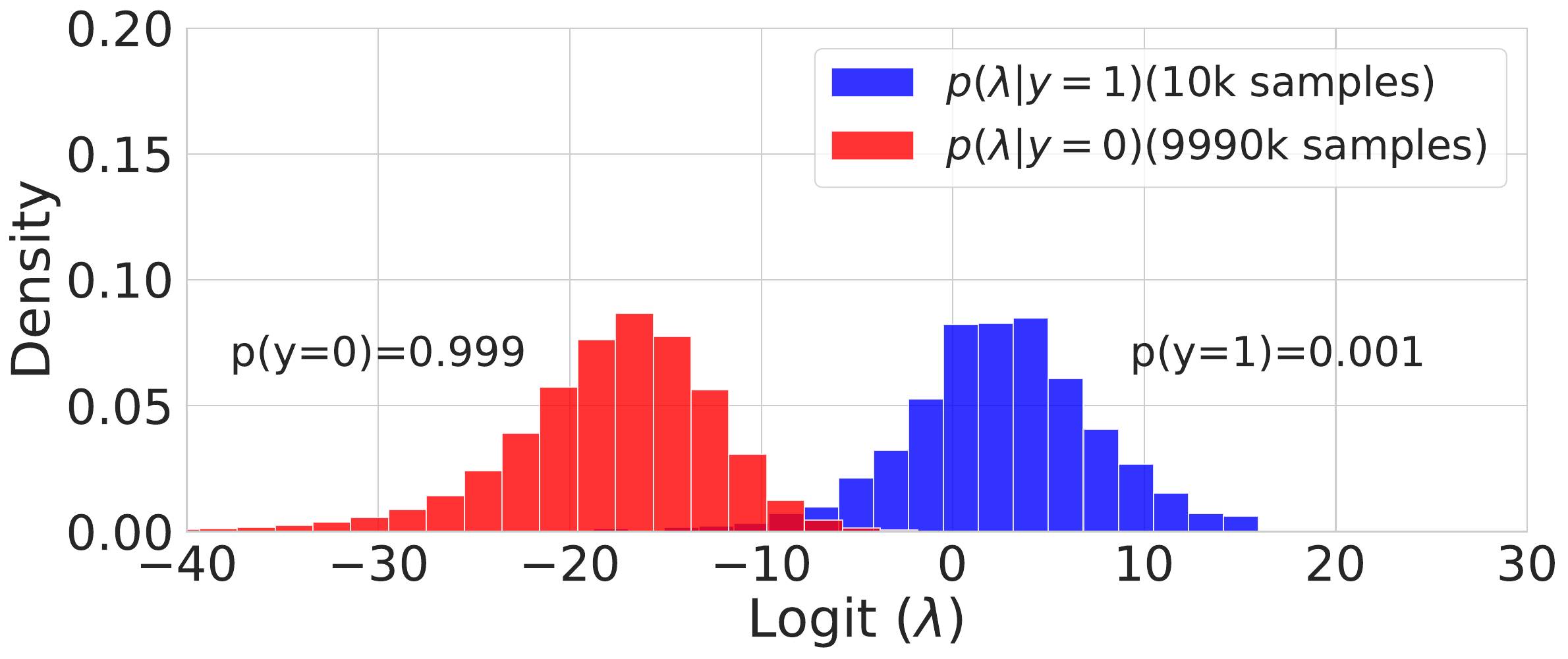}
		\caption{train. set $S$ for \emph{shared}-CW binning.}
	\end{subfigure}
	\caption{Histogram of ImageNet (InceptionResNetv2) logits for (a) CW and (b) sCW training. By means of the set merging strategy to handle the two-class imbalance $1:999$, $S$ has $K{=}1000$ times more class-$1$ samples than $S_k$ with the same $10$\unit{k} calibration samples from $C$. 
	}\label{fig:SkvsSmerge}\vspace{-0cm}
\end{figure*}

\begin{figure*}[t!]
	\includegraphics[width=1.0\linewidth]{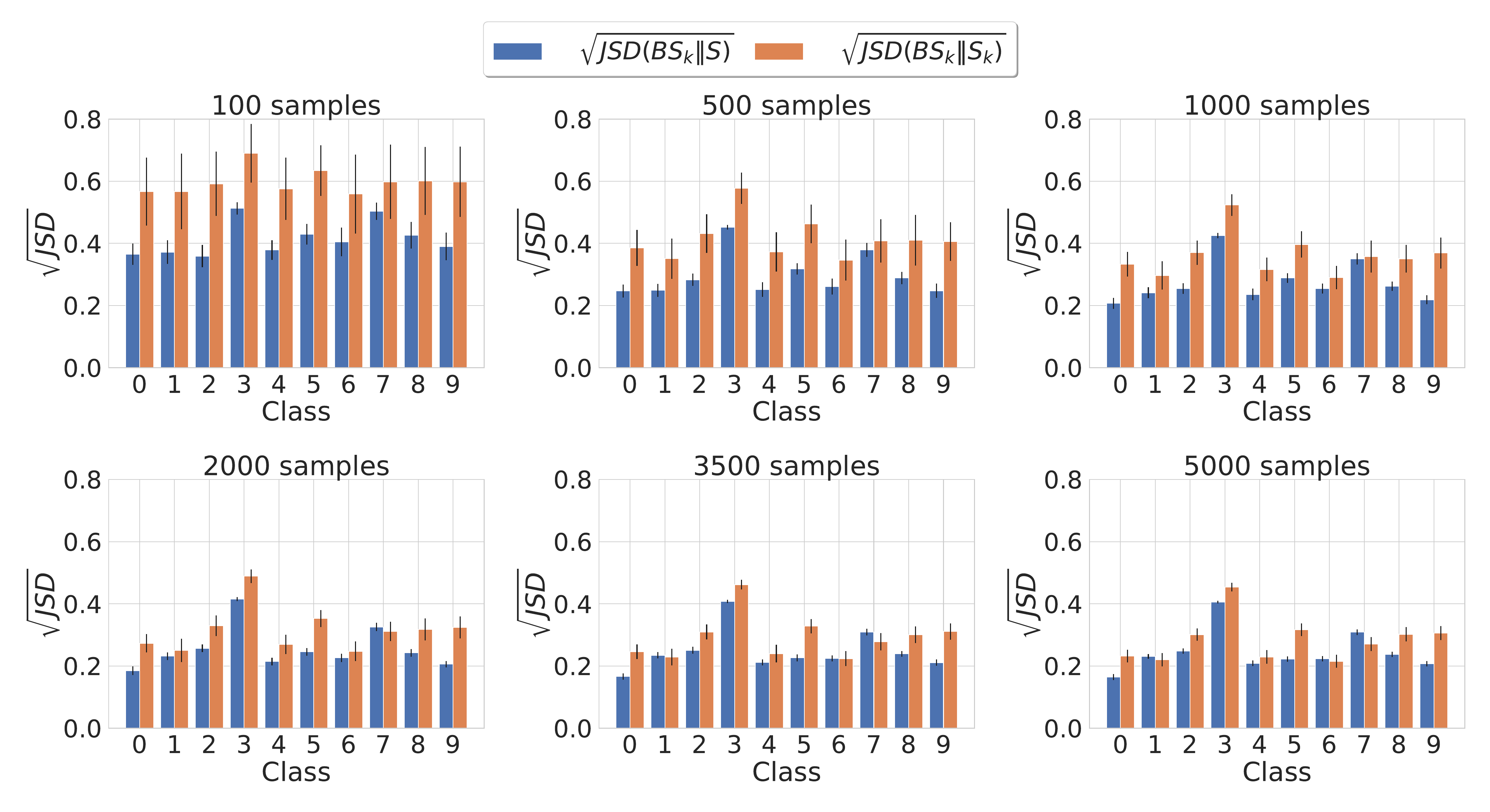}
	\caption{Empirical approximation error of $S$ vs. $S_k$, where Jensen-Shannon divergence (JSD) is used to measure the difference between the empirical distributions underlying the training sets for class-wise bin optimization. Overall, the merged set $S$ is a more sample efficient choice over $S_k$.}
	\vspace{-0.3cm}
	\label{fig:s_vs_s_k_JSD}
\end{figure*}

To tackle this, we proposed to merge the training sets $\{S_k\}$ across a selected set of classes (e.g., with similar class priors, belonging to the same class category or all classes) and use the merged $S$ to train a single binning scheme for calibrating these classes, i.e., shared class-wise (sCW) instead of CW binning. Fig.~\ref{fig:SkvsSmerge}-b) shows that after merging over the $1$\unit{k} ImageNet classes, the set $S$ has sufficient numbers from both the positive $y=1$ and negative $y=0$ class under the one-vs-rest conversion. Tab.~\ref{table:cw_vs_cw_merge_binning} showed the benefits of sCW over CW binnings. Tab.~\ref{table:cw_vs_scw_scaling} showed that our proposal sCW is also beneficial to scaling methods which use one-vs-rest for multi-class calibration.

As pointed out in Sec.~\ref{sec:ovr}, both $S$ and $S_k$ are empirical approximations to the inaccessible ground truth $p(\lambda_k, y_k)$ for bin optimization. In Fig.~\ref{fig:s_vs_s_k_JSD}, we empirically analyze their approximation errors. From the CIFAR10 test set, we take $5$\unit{k} samples to approximate per-class logit distribution $p(\lambda_k\vert y_k=1)$ by means of histogram density estimation\footnote{Here, we focus on $p(\lambda_k\vert y_k=1)$ as its empirical estimation suffers from small class priors, being much more challenging than $p(\lambda_k\vert y_k=0)$ as illustrated in Fig.~\ref{fig:SkvsSmerge}.}, and then use it as the baseline for comparison, i.e., $\mathrm{BS}_k$ in  Fig.~\ref{fig:s_vs_s_k_JSD}. The rest of the $5$\unit{k} samples in the CIFAR10 test set are reserved for constructing $S_k$ and $S$. For each class, we respectively evaluate the square root of the Jensen-Shannon divergence (JSD) from the baseline $\mathrm{BS}_k$ to the empirical distribution of $S$ or $S_k$ attained at different numbers of samples.\footnote{Note, for JSD evaluation, the histogram estimator sets the bin number as the maximum of `sturges' and `fd' estimators, both of them optimize their bin setting towards the number of samples.} 

In general, Fig.~\ref{fig:s_vs_s_k_JSD} confirms that variance (due to not enough samples) outweighs bias (due to training set merging). Nevertheless, sCW does not always have smaller JSDs than CW, for instance, the class 7 with the samples larger than 2\unit{k} (the blue bar "sCW" is larger than the orange bar "CW"). So, for the class-$7$, the bias of merging logits starts outweighing the variance when the number of samples is more than 2k. Unfortunately, we don't have more samples to further evaluate JSDs, i.e., making the variance sufficiently small to reveal the bias impact. Another reason that we don't observe large JSDs of sCW for CIFAR10 is that the logit distributions of the 10 classes are similar. Therefore, the bias of sCW is small, making CIFAR10 a good use case of sCW. From CIFAR10 to CIFAR100 and ImageNet, there are more classes with even smaller class priors. Therefore, we expect that the sample inefficiency issue of $S_k$ becomes more critical. It will be beneficial to exploit sCW for bin optimization as well as for other methods based on the one-vs-rest conversion for multi-class calibration.

\section{Proof of Theorem 1 and Algorithm Details of I-Max}\label{S:proof}
In this section, we proves Theorem~\ref{theorem} in Sec.~\ref{sec:mimax}, discuss the connection to the information bottleneck (IB)~\citep{infobottle1999}, analyze the convergence behavior of the iterative method derived in Sec.~\ref{sec:mimax} and modify the k-means++ algorithm~\citep{kmeans++} for initialization. To assist the implementation of the iterative method, we further provide the pseudo code and perform complexity/memory cost analysis.

\subsection{Proof of Theorem 1}
\begin{theorem}\label{theorem_supp}
	The mutual information (MI) maximization problem given as follows:
	\begin{align}
	\{g^*_m\} =\arg\max_{Q: \,\{g_m\}} I(y; m=Q(\lambda)) 
	\end{align}
	is equivalent to 
	\begin{align}
	\max_{Q:\,\{g_m\}} I(y;m=Q(\lambda))\equiv \min_{\{g_m,\phi_m\}} \mathcal{L}(\{g_m,\phi_m\})\label{maxMIa}
	\end{align}
	where the loss $\mathcal{L}(\{g_m,\phi_m\})$ is defined as
	\begin{align}
	&\mathcal{L}(\{g_m,\phi_m\})\stackrel{\Delta}{=}\sum_{m=0}^{M-1}\int_{g_m}^{g_{m+1}} p(\lambda)\sum_{y'\in\{0,1\}}P(y=y'\vert \lambda) \log \frac{P(y=y')}{P_{\sigma}(y=y';\phi_m)}\label{L} \mathrm{d}\lambda\\
	&\text{with}\quad P_{\sigma}(y;\phi_m)\stackrel{\Delta}{=}\sigma\left[(2y-1)\phi_m\right].
	\end{align}
	As a set of real-valued auxiliary variables, $\{\phi_m\}$ are introduced here to ease the optimization.
\end{theorem}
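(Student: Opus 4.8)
The plan is to read $\mathcal{L}$ as a variational surrogate for $-I(y;m)$: the auxiliary variables $\{\phi_m\}$ parametrize a per-bin Bernoulli model for $y$, and minimizing over them recovers the true conditional, making the surrogate tight. First I would use $I(y;m=Q(\lambda)) = H(y) - H(y\,|\,m)$ and note that $H(y)$ is independent of the quantizer, so $\max_{\{g_m\}} I(y;m)$ is the same problem as $\min_{\{g_m\}} H(y\,|\,m)$.

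Next I would expand the integrand of $\mathcal{L}$ by splitting the logarithm as $\log P(y{=}y') - \log \sigma[(2y'{-}1)\phi_m]$. Summing over bins and integrating, the first piece telescopes: the intervals $[g_m,g_{m+1})$ partition $\mathbb{R}$ and $p(\lambda)P(y{=}y'|\lambda) = p(\lambda,y')$, so $\sum_m \int_{g_m}^{g_{m+1}} p(\lambda)P(y{=}y'|\lambda)\,\mathrm{d}\lambda = P(y{=}y')$, giving $\sum_{y'} P(y{=}y')\log P(y{=}y') = -H(y)$. For the second piece I would introduce the induced bin statistics $P(m)$ and $P(y{=}y'|m)$; since $\log\sigma[(2y'{-}1)\phi_m]$ is constant on each bin, $\int_{g_m}^{g_{m+1}} p(\lambda)P(y{=}y'|\lambda)\,\mathrm{d}\lambda = P(m)P(y{=}y'|m)$, reducing the piece to $-\sum_m P(m)\sum_{y'} P(y{=}y'|m)\log\sigma[(2y'{-}1)\phi_m]$, a $P(m)$-weighted sum of per-bin cross-entropies between the true posterior $P(\cdot|m)$ and the sigmoid model.

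Then, for fixed bin edges, the inner minimization over $\{\phi_m\}$ decouples across bins, and for each bin Gibbs' inequality gives $-\sum_{y'} P(y{=}y'|m)\log\sigma[(2y'{-}1)\phi_m] \ge H(y\,|\,m{=}m)$, with equality exactly when $\sigma[(2y'{-}1)\phi_m] = P(y{=}y'|m)$. Because $y$ is binary, the family $\phi \mapsto (\sigma(\phi),\sigma(-\phi))$ realizes every Bernoulli law with parameter in $(0,1)$, so the bound is attained at $\phi_m^\star = \log\frac{P(y{=}1|m)}{P(y{=}0|m)}$. Substituting back yields $\min_{\{\phi_m\}}\mathcal{L}(\{g_m,\phi_m\}) = -H(y) + H(y\,|\,m) = -I(y;m{=}Q(\lambda))$ for every choice of $\{g_m\}$; hence $\min_{\{g_m,\phi_m\}}\mathcal{L} = -\max_{\{g_m\}} I(y;m{=}Q(\lambda))$ and the minimizing bin edges coincide with the MI-maximizing ones, which is the asserted equivalence. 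This also identifies the optimal representatives $r_m = \sigma(\phi_m^\star) = P(y{=}1|m)$, consistent with the remark after the theorem.

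The main obstacle is not the computation but the degenerate bookkeeping: bins with $P(m) = 0$ contribute nothing and should be excluded, and a bin with $P(y{=}1|m) \in \{0,1\}$ forces $\phi_m^\star = \pm\infty$, so strictly one argues with an infimum (equivalently, notes that such configurations are not optimal or can be merged away) to keep $\mathcal{L}$ finite. I would also state explicitly that minimizing over $\{\phi_m\}$ before $\{g_m\}$ is legitimate: the objective is jointly defined over $\{g_m,\phi_m\}$ and the inner problem separates bin-by-bin, so no order-of-optimization subtlety arises.
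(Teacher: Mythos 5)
Your proof is correct and follows essentially the same route as the paper: both arguments reduce to showing that $\mathcal{L}(\{g_m,\phi_m\})+I(y;m)$ equals a $P(m)$-weighted sum of per-bin Kullback--Leibler divergences between $P(y\,|\,m)$ and the sigmoid model (your Gibbs-inequality step is exactly this non-negativity), which vanishes at $\phi_m^\star=\log\frac{P(y=1|m)}{P(y=0|m)}$. Your explicit handling of the degenerate cases $P(m)=0$ and $P(y{=}1|m)\in\{0,1\}$ is a minor refinement the paper leaves implicit.
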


\begin{proof}
Before staring our proof, we note that the upper-case $P$ indicates probability mass functions of discrete random variables, e.g., the label $y\in\{0,1\}$ and the bin interval index $m\in\{1,\dots,M\}$; whereas the lower-case $p$ is reserved for probability density functions of continuous random variables, e.g., the raw logit $\lambda\in\mathbb{R}$.
 
The key to prove the equivalence is to show the inequality
\begin{align}
I(y;m=Q(\lambda))\geq -\mathcal{L}(\{g_m,\phi_m\}),\label{MIlb}
\end{align}
and the equality is attainable by minimizing $\mathcal{L}$ over $\{\phi_m\}$.
	 
By the definition of MI, we firstly expand $I(y;m=Q(\lambda))$ as
\begin{align}
I(y;m=Q(\lambda)) =\sum_{i=0}^{M-1}\int_{g_m}^{g_{m+1}} p(\lambda)\sum_{y'\in\{0,1\}}P(y=y'\vert \lambda) \log \frac{P(y=y'\vert m)}{P(y=y')}\label{I} \mathrm{d}\lambda,
\end{align}
where the conditional distribution $P(y|m)$ is given as
\begin{align}
P(y|m) = P(y|\lambda\in[g_m, g_{m+1})) =\frac{P(y) \int_{g_m}^{g_{m+1}} p(\lambda|y)\mathrm{d}\lambda}{\int_{g_m}^{g_{m+1}} p(\lambda)\mathrm{d}\lambda} =\frac{\int_{g_m}^{g_{m+1}} p(y|\lambda)P(y)\mathrm{d}\lambda}{\int_{g_m}^{g_{m+1}} p(\lambda)\mathrm{d}\lambda}.\label{Pym}
\end{align}
From the above expression, we note that MI maximization effectively only accounts to the bin edges $\{g_m\}$. The bin representatives can be arbitrary as long as they can indicate the condition $\lambda\in[g_m, g_{m+1})$. So, the bin interval index $m$ is sufficient to serve the role in conditioning the probability mass function of $y$, i.e., $P(y\vert m)$. After optimizing the bin edges, we have the freedom to set the bin representatives for the sake of post-hoc calibration. 

Next, based on the MI expression, we compute its sum with $\mathcal{L}$
\begin{align}
I(y;Q(\lambda))+\mathcal{L}(\{g_m,\phi_m\}) &= \sum_{i=0}^{M-1}\int_{g_m}^{g_{m+1}} p(\lambda)\sum_{y'\in\{0,1\}}P(y=y'\vert \lambda) \mathrm{d}\lambda\log \frac{P(y=y'\vert m)}{P_{\sigma}(y=y';\phi_m)}\notag\\
&\stackrel{(a)}{=}\sum_{i=0}^{M-1} P(m) \left[\sum_{y'\in\{0,1\}}P(y=y'\vert m) \log \frac{P(y=y'\vert m)}{P_{\sigma}(y=y';\phi_m)}\right]\notag\\
&\stackrel{(b)}{=}\sum_{i=0}^{M-1} P(m) \mathrm{KLD}\left[P(y=y'\vert m)\Vert P_{\sigma}(y=y';\phi_m) \right] \notag\\
&\stackrel{(c)}{\geq} 0.
\end{align}
The equality $(a)$ is based on 
\begin{align}
\int_{g_m}^{g_{m+1}} p(\lambda)P(y=y'\vert \lambda) \mathrm{d}\lambda& = P(y=y',\lambda\in[g_m,g_{m+1})) = \underbrace{P(\lambda \in[g_m,g_{m+1}))}_{=P(m)} P(y=y'\vert m ) .
\end{align}
From the equality $(a)$ to $(b)$, it is simply because of identifying the term in $[\cdot]$ of the equality $(a)$ as the Kullback-Leibler divergence (KLD) between two probability mass functions of $y$. As the probability mass function $P(m)$ and the KLD both are non-negative, we reach to the inequality at $(c)$, where the equality holds if $P_{\sigma}(y;\phi_m)=P(y\vert m)$.
By further noting that $\mathcal{L}$ is convex over $\{\phi_m\}$ and $P_{\sigma}(y;\phi_m)=P(y\vert m)$ nulls out its gradient over $\{\phi_m\}$, we then reach to
\begin{align}
I(y;Q(\lambda))+\min_{\{\phi_m\}}\mathcal{L}(\{g_m,\phi_m\})=0.
\end{align}
The obtained equality then concludes our proof
\begin{align}
\max_{\{g_m\}}I(y;Q(\lambda))=\max_{\{g_m\}}\left[-\min_{\{\phi_m\}}\mathcal{L}(\{g_m,\phi_m\})\right]&=-\min_{\{g_m,\phi_m\}}\mathcal{L}(\{g_m,\phi_m\})\notag\\
&\equiv \min_{\{g_m,\phi_m\}}\mathcal{L}(\{g_m,\phi_m\}).
\end{align}
\end{proof}

Lastly, we note that $\mathcal{L}(\{g_m,\phi_m\})$ can reduce to a NLL loss (as $P(y)$ in the log probability ratio is omittable), which is a common loss for calibrators.
However, only through this equivalence proof and the MI maximization formulation, can we clearly identify the great importance of bin edges in preserving label information.
So even though $\{g_m,\phi_m\}$ are jointly optimized in the equivalent problem, only $\{g_m\}$ play the determinant role in maximizing the MI.

\subsection{Connection to Information Bottleneck (IB)}\label{supp:ib}
IB~\citep{infobottle1999} is a generic information-theoretic framework for stochastic quantization design. Viewing binning as quantization, IB aims to find a balance between two conflicting goals: 1) maximizing the information rate, i.e., the mutual information between the label and the quantized logits $I(y;Q(\lambda))$; and 2) minimizing the compression rate, i.e., mutual information between the logits and the quantized logits $I(\lambda;Q(\lambda))$. It unifies them by minimizing 
\begin{align}
\min_{p(m\vert \lambda)} \frac{1}{\beta }I(\lambda;m=Q(\lambda)) - I(y;m=Q(\lambda)),
\end{align}
where $m$ is the bin index assigned to $\lambda$ and $\beta$ is the weighting factor (with larger value focusing more on the information rate and smaller value on the compression rate). The compression rate is the bottleneck for maximizing the information rate. Note that IB optimizes the distribution $p(m\vert \lambda)$, which describes the probability of $\lambda$ being assigned to the bin with the index $m$. Since it is not a deterministic assignment, IB offers a stochastic rather than deterministic quantizer. Our information maximization formulation is a special case of IB, i.e., $\beta$ being infinitely large, as we care predominantly about how well the label can be predicted from a compressed representation (quantized logits), in other words, making the compression rate as small as possible is not a request from the problem. For us, the only bottleneck is the number of bins usable for quantization. Furthermore, with $\beta\rightarrow\infty$, stochastic quantization degenerating to a deterministic one. If using stochastic binning for calibration, it outputs a weighted sum of all bin representatives, thereby being continuous and not ECE verifiable. Given that, we do not use it for calibration.

As the IB defines the best trade-off between the information rate and compression rate, we use it as the upper limit for assessing the optimality of I-Max in Fig.~\ref{fig:MI_exp}-b). By varying $\beta$, IB depicts the maximal achievable information rate for the given compression rate. For binning schemes (Eq. size, Eq. mass and I-Max), we vary the number of bins, and evaluate their achieved information and compression rates. As we can clearly observe from Fig.~\ref{fig:MI_exp}-b), I-Max can approach the upper limit defined by IB. Note that, the compression rate, though being measured in bits, is different to the number of bins used for the quantizer. As quantization is lossy, the compression rate defines the common information between the logits and quantized logits. The number of bins used for quantization imposes an upper limit on the information that can be preserved after quantization.

\subsection{Convergence of the Iterative Method}
For convenience, we recall the update equations for $\{g_m,\phi_m\}$ in Sec.~3.3 of the main paper here
\begin{align}
\begin{array}{l}
g_m = \log \left\{\frac{\log\left[\frac{1+e^{\phi_m}}{1+e^{\phi_{m-1}}} \right]}{ \log\left[\frac{1+e^{-\phi_{m-1}}}{1+e^{-\phi_m}} \right]}\right\}\\
\phi_m = \log\left\{\frac{\int_{g_m}^{g_{m+1}}\sigma(\lambda)p(\lambda)\mathrm{d}\lambda}{\int_{g_m}^{g_{m+1}}\sigma(-\lambda)p(\lambda)\mathrm{d}\lambda}\right\} \approx\log\left\{\frac{\sum_{\lambda_n\in \mathcal{S}_m}\sigma(\lambda_n)}{\sum_{\lambda_n\in \mathcal{S}_m}\sigma(-\lambda_n)}\right\}
\end{array}\quad \forall m.
\label{g_phi_appendix}
\end{align}
In the following, we show that the updates on $\{g_m\}$ and $\{\phi_m\}$ according to (\ref{g_phi_appendix}) continuously decrease the loss $\mathcal{L}$, i.e.,
\begin{align}
\mathcal{L}(\{g^l_m,\phi^l_m\})\geq  \mathcal{L}(\{g^{l+1}_m,\phi^l_m\}) \geq \mathcal{L}(\{g^{l+1}_m,\phi^{l+1}_m\}).  \label{inequ}
\end{align}
The second inequality is based on the explained property of $\mathcal{L}$. Namely, it is convex over $\{\phi_m\}$ and the minimum for any given $\{g_m\}$ is attained by $P_{\sigma}(y;\phi_m)=P(y\vert m)$. As $\phi_m$ is the log-probability ratio of $P_{\sigma}(y;\phi_m)$, we shall have 
\begin{align}
\phi^{l+1}_m\gets \log \frac{P(y=1\vert m)}{P(y=0\vert m)}
\end{align}
where $P(y=1\vert m)$ in this case is induced by $\{g_m^{l+1}\}$ and $P(y|\lambda)=\sigma[(2y-1)\lambda]$. Plugging $\{g_m^{l+1}\}$ and $P(y|\lambda)=\sigma[(2y-1)\lambda]$ into (\ref{Pym}), the resulting $P(y=y'\vert m)$ at the iteration $l+1$ yields the update equation of $\phi_m$ as given in (\ref{g_phi_appendix}).

To prove the first inequality, we start from showing that $\{g^{l+1}_m\}$ is a local minimum of $\mathcal{L}(\{g_m,\phi^l_m\})$. The update equation on $\{g_m\}$ is an outcome of solving the stationary point equation of $\mathcal{L}(\{g_m,\phi^l_m\})$ over $\{g_m\}$ under the condition $p(\lambda=g_m)>0$ for any $m$
\begin{align}
\frac{\partial \mathcal{L}(\{g_m,\phi^l_m\})}{\partial g_m}= p(\lambda=g_m)\sum_{y'\in\{0,1\}}P(y=y'\vert \lambda=g_m)\log\frac{P_{\sigma}(y=y';\phi^l_m)}{P_{\sigma}(y=y';\phi^l_{m-1})}\stackrel{!}{=}0\quad \forall m\label{stationarypoint}
\end{align}
Being a stationary point is the necessary condition of local extremum when the function's first-order derivative exists at that point, i.e., first-derivative test. To further show that the local extremum is actually a local minimum, we resort to the second-derivative test, i.e., if the Hessian matrix of $\mathcal{L}(\{g_m,\phi^l_m\})$ is positive definite at the stationary point $\{g_m^{l+1}\}$. Due to $\phi_m > \phi_{m-1}$ with the monotonically increasing function $\mathrm{sigmoid}$ in its update equation, we have 
\begin{align}
\frac{\partial^2 \mathcal{L}(\{g_m,\phi^l_m\})}{\partial g_m \partial g_{m'}}\Big |_{g_m=g_m^{l+1}\,\forall m} = 0\,\quad\text{and}\quad \frac{\partial^2 \mathcal{L}(\{g_m,\phi^l_m\})}{\partial^2 g_m } \Big |_{g_m=g_m^{l+1}\,\forall m}> 0,
\end{align}
implying that all eigenvalues of the Hessian matrix are positive (equivalently, is positive definite). Therefore, $\{g^{l+1}_m\}$ as the stationary point of $\mathcal{L}(\{g_m,\phi^l_m\})$ is a local minimum.

It is important to note that from the stationary point equation (\ref{stationarypoint}), $\{g^{l+1}_m\}$ as a local minimum is unique among $\{g_m\}$ with $p(\lambda=g_m)>0$ for any $m$. In other words, the first inequality holds under the condition $p(\lambda=g^l_m)>0$ for any $m$. Binning is a lossy data processing. In order to maximally preserve the label information, it is natural to exploit all bins in the optimization, not wasting any single bin in the area without mass, i.e., $p(\lambda=g_m)=0$. Having said that, it is reasonable to constrain $\{g_m\}$ with $p(\lambda=g_m)>0\,\forall m$ over iterations, thereby concluding that the iterative method will converge to a local minimum based on the two inequalities (\ref{inequ}).

\subsection{Initialization of the Iterative Method}\label{supp:kmeans++}
We propose to initialize the iterative method by modifying the k-means++ algorithm~\citep{kmeans++} that was developed to initialize the cluster centers for k-means clustering algorithms. It is based on the following identification
\begin{align}
\mathcal{L}(\{g_m,\phi_m\}) + I(y;\lambda) &= \sum_{i=0}^{M-1} \int_{g_m}^{g_{m+1}}p(\lambda) \mathrm{KDL}\left[P(y=y'\vert \lambda )\Vert P_{\sigma}(y=y';\phi_m) \right]\mathrm{d}\lambda\label{kmeans++}\\
& \geq \int_{-\infty}^{\infty}p(\lambda) \min_m \mathrm{KLD}\left[P(y=y'\vert \lambda )\Vert P_{\sigma}(y=y';\phi_m) \right]\mathrm{d}\lambda\notag\\
& \approx \frac{1}{\vert S\vert}\sum_{\lambda_n\in S} \min_m \mathrm{KLD}\left[P(y=y'\vert \lambda_n )\Vert P_{\sigma}(y=y';\phi_m) \right]\label{cluster}.
\end{align}
As $I(y;\lambda)$ is a constant with respect to $(\{g_m,\phi_m\})$, minimizing $\mathcal{L}$ is equivalent to minimizing the term on the RHS of (\ref{kmeans++}). The last approximation is reached by turning the binning problem into a clustering problem, i.e., grouping the logit samples in the training set $S$ according to the KLD measure, where $\{\phi_m\}$ are effectively the centers of each cluster. k-means++ algorithm~\cite{kmeans++} initializes the cluster centers based on the Euclidean distance. In our case, we alternatively use the JSD as the distance measure to initialize $\{\phi_m\}$. Comparing with KLD, JSD is symmetric and bounded.

\subsection{A Remark on the Iterative Method Derivation}
The closed-form update on $\{g_m\}$ in (\ref{g_phi_appendix}) is based on the $\mathrm{sigmoid}$-model approximation, which has been validated through our empirical experiments. It is expected to work with properly trained classifiers that are not overly overfitting to the cross-entropy loss, e.g., using data augmentation and other regularization techniques at training. Nevertheless, even in corner cases that classifiers are poorly trained, the iterative method can still be operated without the $\mathrm{sigmoid}$-model approximation. Namely, as shown in Fig.~\ref{fig:distributions_bins} of the main paper, we can resort to KDE for an empirical estimation of the ground truth distribution $p(\lambda\vert y)$. Using the KDEs, we can compute the gradient of $\mathcal{L}$ over $\{g_m\}$ and perform iterative gradient based update on $\{g_m\}$, replacing the closed-form based update. Essentially, the $\mathrm{sigmoid}$-model approximation is only necessary to find the stationary points of the gradient equations, speeding up the convergence of the method. If attempting to keep the closed-form update on $\{g_m\}$, an alternative solution could be to use the KDEs for adjusting the $\mathrm{sigmoid}$-model, e.g., $p(y\vert \lambda)\approx \sigma\left[(2y-1)(a\lambda+ab)\right]$, where $a$ and $b$ are chosen to match the KDE based approximation to $p(y\vert \lambda)$. After setting $a$ and $b$, they will be used as a scaling and bias term in the original closed-form update equations
\begin{align}
\begin{array}{l}
g_m = \frac{1}{a}\log \left\{\frac{\log\left[\frac{1+e^{\phi_m}}{1+e^{\phi_{m-1}}} \right]}{ \log\left[\frac{1+e^{-\phi_{m-1}}}{1+e^{-\phi_m}} \right]}\right\} - b\\
\phi_m = \log\left\{\frac{\int_{g_m}^{g_{m+1}}\sigma(a\lambda+ab)p(\lambda)\mathrm{d}\lambda}{\int_{g_m}^{g_{m+1}}\sigma(-a\lambda-ab)p(\lambda)\mathrm{d}\lambda}\right\} \approx\log\left\{\frac{\sum_{\lambda_n\in \mathcal{S}_m}\sigma(a\lambda_n+ab)}{\sum_{\lambda_n\in \mathcal{S}_m}\sigma(-a\lambda_n-ab)}\right\}
\end{array}\quad \forall m.
\label{g_phia}
\end{align}

\subsection{Complexity and Memory Analysis}\label{supp:complexity}
To ease the reproducibility of I-Max, we provide the pseudocode in Algorithm.~\ref{pseudocode_imax}. 
\begin{algorithm}[t!]
	\SetAlgoLined
	\KwIn{Number of bins $M$, logits $\{\lambda_n\}_1^{N}$ and binary labels $\{y_n\}_1^{N}$}
	\KwResult{bin edges $\{g_m\}_0^{M}$ ($g_0=-\infty$ and $g_M=\infty$) and bin representations $\{\phi_m\}_0^{M-1}$ }
	Initialization: $\{\phi_m\}$ $\leftarrow$ Kmeans++($\{\lambda_n\}_1^{N}$, $M$) (see~\ref{supp:kmeans++}) \;
	
	\For{$iteration=1,2,\ldots,200$}{
		
		\For{$m=1,2,\ldots,M-1$}{
			$g_m \leftarrow \hspace{-0.1cm} \log \left\{\hspace{-0.1cm}\frac{\log\left[\hspace{-0.1cm}\frac{1+e^{\phi_m}}{1+e^{\phi_{m-1}}} \right]}{ \log\left[\hspace{-0.1cm}\frac{1+e^{-\phi_{m-1}}}{1+e^{-\phi_m}} \right]}\hspace{-0.1cm}\right\}$\;
		}      
		
		\For{$m=0,2,\ldots,M-1$}{
			$ \mathcal{S}_m \stackrel{\Delta}{=} \{\lambda_n\} \cap [g_m, g_{m+1})$ \;
			$\phi_m \leftarrow \hspace{-0.1cm} \log\left\{\hspace{-0.1cm}\frac{\sum_{\lambda_n\in \mathcal{S}_m}\hspace{-0.1cm}\sigma(\lambda_n)}{\sum_{\lambda_n\in \mathcal{S}_m}\hspace{-0.1cm}\sigma(-\lambda_n)}\hspace{-0.1cm}\right\}$\;
		}  
		
	}
	
	\caption{I-Max Binning Calibration}
	\label{pseudocode_imax}
\end{algorithm}
Based on it, we further analyze the complexity and memory cost of I-Max at training and test time. 

We simplify this complexity analysis as our algorithm runs completely offline and is purely numpy-based.
We note that despite the underlying (numpy) operations performed at each step of the algorithm differs, we treat multiplication, division, logarithm and exponential functions each counting as the same unit cost and ignore the costs of the logic operations and add/subtract operators. 
The initialization has complexity of $\mathcal{O}(NM)$, for the one-dimensional logits. We exploit the sklearn implementation of Kmeans++ initialization initially used for Kmeans clustering, but replace the MSE with JSD in the distance measure. Following Algorithm~\ref{pseudocode_imax}, we arrive at the following complexity of $\mathcal{O}(N*M + I * (10*M + 2*M))$. Our python codes runs Algorithm.~\ref{pseudocode_imax} within seconds for classifiers as large as ImageNet and performed purely in Numpy. 
The largest storage and memory consumption is for keeping the $N$ logits used during the I-Max learning phase.  

At test time, there is negligible memory and storage constraints, as only $(2M - 1)$ floats need to be saved for the $M$ bin representatives $\{\phi_m\}_{0}^{M-1}$ and $M-1$ bin edges $\{g_m\}_1^{M-1}$. The complexity at test time is merely logic operations to compute the bin assignments of each logit and can be done using numpy's efficient `quantize` function. I-Max offers a real-time post-hoc calibrator which adds an almost-zero complexity and memory cost relative to the computations of the original classifier.

We will release our code soon.

\section{Post-hoc Analysis on the Experiment Results in Sec.~4.1}

In Tab.~\ref{table:cw_vs_cw_merge_binning} of Sec.~\ref{sec:binning_method_comparisons}, we compared three different binning schemes by measuring their ACCs and ECEs. The observation on their accuracy performance is aligned with our mutual information maximization viewpoint introduced in Sec.~\ref{sec:mimax} and Fig.~\ref{fig:distributions_bins}. Here, we re-present Fig.~\ref{fig:distributions_bins} and provide an alternative explanation to strengthen our understanding on how the location of bin edges affects the accuracy, e.g., why Eq. Size binning performed acceptable at the top-$1$ ACC, but failed at the top-$5$ ACC. Specifically, Fig.~\ref{fig:post-ha} shows the histograms of raw logits that are grouped based on their ranks instead of their labels as in Fig.~\ref{fig:distributions_bins}. As expected, the logits with low ranks (i.e., $\mathrm{rest}$ below top-$5$ in Fig.~\ref{fig:post-ha}) are small and thus take the left hand side of the plot, whereas the top-$1$ logits are mostly located on the right hand side. Besides sorting logits according to their ranks, we additionally estimate the density of the ground truth (GT) classes associated logits, i.e., $\mathrm{GT}$ in Fig.~\ref{fig:post-ha}). With a properly trained classifier, the histogram of top-$1$ logits shall largerly overlap with the density curve $\mathrm{GT}$, i.e., top-$1$ prediction being correct in most cases.

\begin{figure}
	\centering
	\includegraphics[width=0.8\linewidth]{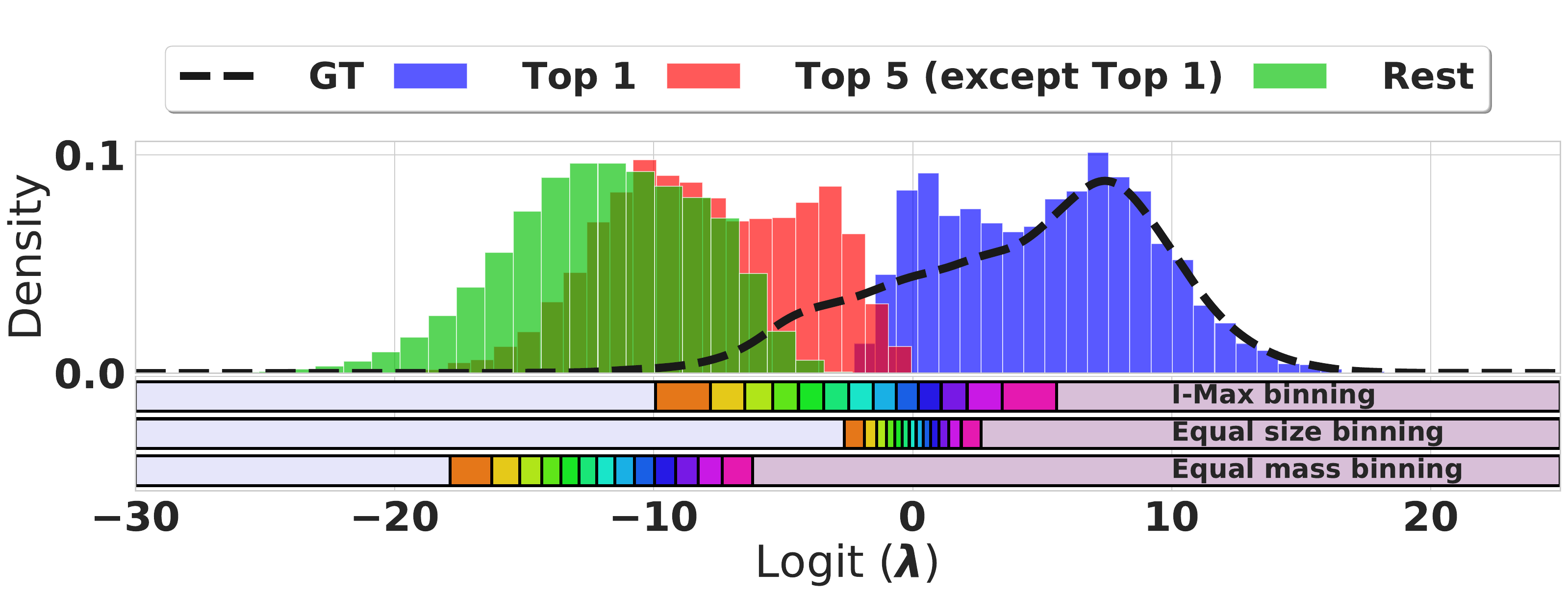}
	\caption{Histogram of CIFAR100 (WRN) logits in $S$ constructed from $1$\unit{k} calibration samples, using the same setting as Fig.~\ref{fig:distributions_bins} in the main paper. Instead of categorizing the logits according to their two-class label $y_k\in\{0,1\}$ as in Fig.~\ref{fig:distributions_bins}, here we sort them according to their ranks given by the CIFAR100 WRN classifier. As a baseline, we also plot the KDE of logits associated to the ground truth classes, i.e., $\mathrm{GT}$.}
	\vspace{-0.3cm}
	\label{fig:post-ha} 
\end{figure}

From the bin edge location of Eq. Mass binning, it attempts to attain small quantization errors for logits of low ranks rather than top-$5$. This will certainly degrade the accuracy performance after binning. On contrary, Eq. Size binning aims at small quantization error for the top-$1$ logits, but ignores top-$5$ ones. As a result, we observed its poor top-$5$ ACCs. I-Max binning nicely distributes its bin edges in the area where the GT logits are likely to locate, and the bin width becomes smaller in the area where the top-$5$ logits are close by (i.e., the overlap region between the red and blue histograms). Note that, any logit larger than zero must be top-$1$ ranked, as there can exist at most one class with prediction probability larger than $0.5$. Given that, the bins located above zero are no longer to maintain the ranking order, rather to reduce the precision loss of top-$1$ prediction probability after binning. 

\begin{table}
\footnotesize
\centering
\caption{Comparison of sCW binning methods in the case of ImageNet - InceptionResNetV2. As sCW binning creates ties at top predictions, the ACCs initially reported in Tab.~1 of Sec.~4.1 use the class index as the secondary sorting criterion. Here, we add $\text{Acc*}_{\text{top1}}$ and $\text{Acc*}_{\text{top5}}$ which are attained by using the raw logits as the secondary sorting criterion. As the CW ECEs are not affected by this change, here we only report the new ${}_{\text{top1}}\text{ECE*}_{}$.}
\label{tab:app_acc_fix_ImageNet_inceptionresnet}
\begin{tabular}{lccccccc}
\toprule
 Binn. &             $\text{Acc}_{\text{top1}}$ $\uparrow$  & $\text{Acc*}_{\text{top1}}$ $\uparrow$  &        $\text{Acc}_{\text{top5}}$ $\uparrow$  & $\text{Acc*}_{\text{top5}}$ $\uparrow$  &           ${}_{\text{top1}}\text{ECE}_{}$ $\downarrow$ & ${}_{\text{top1}}\text{ECE*}_{}$ $\downarrow$ & NLL $\downarrow$\\
\midrule
Baseline &  \textbf{80.33} &     - &   \textbf{95.10} &     - &  0.0357 &            - &  0.8406\\
Eq. Mass &   ~5.02 &  \textbf{80.33} &  26.75  &   \textbf{95.10} &  0.0353  &      0.7884  &  3.5272\\
Eq. Size &  80.14 &  80.21 &  88.99  &   \textbf{95.10}  &  0.0279 &      0.0277  &  1.2671 \\
I-Max &   80.20 &  \textbf{80.33}  &  94.86 &   \textbf{95.10} & \textbf{0.0200}  &       \textbf{0.0202} &   \textbf{0.7860}\\
\bottomrule
\end{tabular}	\vspace{-0.3cm}
\end{table}

The second part of our post-hoc analysis is on the sCW binning strategy. When using the same binning scheme for all per-class calibration, the chance of creating ties in top-$k$ predictions is much higher than CW binning, e.g., more than one class are top-$1$ ranked according to the calibrated prediction probabilities. Our reported ACCs in the main paper are attained by simply returning the first found class, i.e., using the class index as the secondary sorting criterion. This is certainly a suboptimal solution. Here, we investigate on how the ties affect ACCs of sCW binning. To this end, we use raw logits (before binning) as the secondary sorting criterion. The resulting $\mathrm{ACC}^*_{\mathrm{top1}}$ and $\mathrm{ACC}^*_{\mathrm{top5}}$ are shown in Tab.\ref{tab:app_acc_fix_ImageNet_inceptionresnet}. Interestingly, such a simple change reduces the accuracy loss of Eq. Mass and I-Max binning to zero, indicating that they can preserve the top-$5$ ranking order of the raw logits but not in a strict monotonic sense, i.e., some $>$ are replaced by $=$. As opposed to I-Max binning, Eq. Mass binning has a poor performance at calibration, i.e., the really high NLL and ECE.
This is because it trivially ranks many classes as top-$1$, but each of them has a very and same small confidence score. Given that, even though the accuracy loss is no longer an issue, it is still not a good solution for multi-class calibration. For Eq. Size binning, resolving ties only helps restore the baseline top-$5$ but not top-$1$ ACC. Its poor bin representative setting due to unreliable empirical frequency estimation over too narrow bins can result in a permutation among the top-$5$ predictions.

Concluding from the above, our post-hoc analysis confirms that I-Max binning outperforms the other two binning schemes at mitigating the accuracy loss and multi-class calibration. In particular, there exists a simple solution to close the accuracy gap to the baseline, at the same time still retaining the desirable calibration gains.

\section{Ablation on the Number of Bins and Calibration Set Size}
\label{sec:ablation_num_bins_num_samples}
In Tab.~\ref{table:cw_vs_cw_merge_binning} of Sec.~\ref{sec:binning_method_comparisons}, sCW I-Max binning is the top performing one at the ACCs, ECEs and NLL measures. In this part, we further investigate on how the number of bins and calibration set size influences its performance. Tab.~\ref{tab:bin_ablations} shows that in order to benefit from more bins we shall accordingly increase the number of calibration samples. More bins help reduce the quantization loss, but increase the empirical frequency estimation error for setting the bin representatives. Given that, we observe a reduced ACCs and increased ECEs for having $50$ bins with only $1$\unit{k} calibration samples. By increasing the calibration set size to $5$\unit{k}, then we start seeing the benefits of having more bins to reduce quantization error for better ACCs. Next, we further exploit scaling method, i.e., GP~\cite{wenger2020calibration}, for improving the sample efficiency of binning at setting the bin representatives. As a result, the combination is particularly beneficial to improve the ACCs and top-$1$ ECE. Overall, more bins are beneficial to ACCs, while ECEs favor less number of bins.

\begin{table*}
\footnotesize
	\setlength{\tabcolsep}{0.2em} 
\renewcommand{\arraystretch}{1.1}
\centering
\caption{Ablation on the number of bins and calibration samples for sCW I-Max binning, where the basic setting is identical to the Tab.~1 in Sec. 4.1 of the main paper.}
\begin{tabular}{lc|cccc|cccc}
\rowcolor{verylightgray}
\toprule
       Binn. & Bins &             $\text{Acc}_{\text{top1}}$ $\uparrow$ &       $\text{Acc}_{\text{top5}}$ $\uparrow$ &          ${}_{\text{CW}}\text{ECE}_{\frac{1}{K}}$ $\downarrow$  &  ${}_{\text{top1}}\text{ECE}_{}$ $\downarrow$  &         $\text{Acc}_{\text{top1}}$ $\uparrow$ &       $\text{Acc}_{\text{top5}}$ $\uparrow$ &          ${}_{\text{CW}}\text{ECE}_{\frac{1}{K}}$ $\downarrow$  &  ${}_{\text{top1}}\text{ECE}_{}$ $\downarrow$\\
\midrule
Baseline &  - &  \textbf{80.33}&   95.10 &  0.0486 &  0.0357 &\textbf{80.33} &   95.10  &  0.0486  &  0.0357 \\

\hline
\rowcolor[gray]{.90} 
      &        &     \multicolumn{4}{c|}{1\unit{k} Calibration Samples} &     \multicolumn{4}{c}{5\unit{k} Calibration Samples} \\
      GP       &    &   \textbf{80.33} &  \textbf{95.11} &  0.0485 &  0.0186  &  \textbf{80.33} &  \textbf{95.11} &  0.0445 &  0.0177 \\  
      
      \hline 
\multirow{6}{*}{I-Max} &     10 & 80.09  &  94.59  &  0.0316  &  0.0156  &  80.14  &  94.59 &   0.0330  &  0.0107 \\
 &     15 &  80.20  &  94.86  &  0.0302  &  0.0200 &  80.21  &  94.90 &  0.0257  &  0.0107  \\
 &     20 &  80.10  &  94.94  &  \textbf{0.0266}  &  0.0234 &  80.25  &  94.98 &  \textbf{0.0220}  &  0.0133  \\
 &     30 &  80.15  &  94.99  &  0.0343  &  0.0266 &  80.25  &  95.02 &  0.0310  &  0.0150  \\
 &     40 &  80.11  &  95.05  &  0.0365  &  0.0289 &  80.24  & 95.08 &  0.0374  &  0.0171  \\
 &     50 &  80.21  &  94.95  &  0.0411  &  0.0320 &  80.23  &  95.06 &  0.0378  &  0.0219  \\
\hline
\multirow{6}{*}{I-Max w. GP} &  10 &  80.09 &  94.59 &  0.0396 &  0.0122 &  80.14 &  94.59 & 0.0330 &  \textbf{0.0072} \\
 &  15 &  80.20 &  94.87  & 0.0300  &  \textbf{0.0121} &  80.21 &  94.88 &  0.0256 &0.0080 \\
 &  20 &  80.23 &  94.95  & 0.0370  &  0.0133 &  80.25 &  95.00 &  0.0270 & 0.0091  \\
 &  30 &  80.26 &  95.04  & 0.0383  &  0.0141 &  80.27 &  95.02 &  0.0389 &  0.0097   \\
 &  40 &  80.27 &  \textbf{95.11}  & 0.0424  &  0.0145 &  80.26 &  95.08 &  0.0402 &  0.0108  \\
 &  50 &  80.30 &  95.08  & 0.0427  &  0.0153 &  80.28 &  95.08 &  0.0405 &  0.0114  \\
\bottomrule
\end{tabular}
\label{tab:bin_ablations}
\end{table*}

\section{Empirical ECE Estimation of Scaling Methods under Multiple Evaluation Schemes}
\label{sec:eval_schemes_scaling_methods}
As mentioned in the main paper, scaling methods suffer from not being able to provide verifiable ECEs, see Fig.~\ref{fig:verifiable_ece}. Here, we discuss alternatives to estimate their ECEs. The current literature can be split into two types of ECE evaluation: histogram density estimation (HDE) and kernel density estimation (KDE).

\subsection{HDE-based ECE evaluation}
HDE bins the prediction probabilities (logits) for density modeling. The binning scheme has different variants, where changing the bin edges can give varying measures of the ECE. %
Two bin edges schemes have been discussed in the literature (Eq. size and Eq. mass) as well as a new scheme was introduced (I-Max). %
Alternatively, we also evaluate a binning scheme which is based on KMeans clustering to determine the bin edges.

\subsection{KDE-based ECE evaluation}
Recent work~\citep{zhang2020mix} presented an alternative ECE evaluation scheme which exploits KDEs to estimate the distribution of prediction probabilities $\{q_k\}$ from the test set samples. %
Using the code provided by~\citet{zhang2020mix}, we observe that the KDE with the setting in their paper can have a sub-optimal fitting in the probability space. This can be observed from Fig.~\ref{fig:kde_fit_plot_c100_prob} and Fig.~\ref{fig:kde_fit_plot_imagenet_prob}, where the fitting is good for ImageNet/Inceptionresnetv2 though when the distribution is significantly skewed to the right (as in the case of CIFAR100/WRN) the mismatch becomes large. We expect that the case of CIFAR100/WRN is much more common in modern DNNs, due to their high capacities and prone to overfitting.

Equivalently, we can learn the distribution in its log space by the bijective transformation, i.e., $\lambda=\log q - \log(1-q)$ and $q=\sigma(\lambda)$. As we can observe from Fig.~\ref{fig:kde_fit_plot_c100_log} and Fig.~\ref{fig:kde_fit_plot_imagenet_log}, the KDE fitting for both models is consistently good.

\begin{figure*}	
	\begin{subfigure}[t]{0.5\textwidth}
		\includegraphics[width=1.0\linewidth]{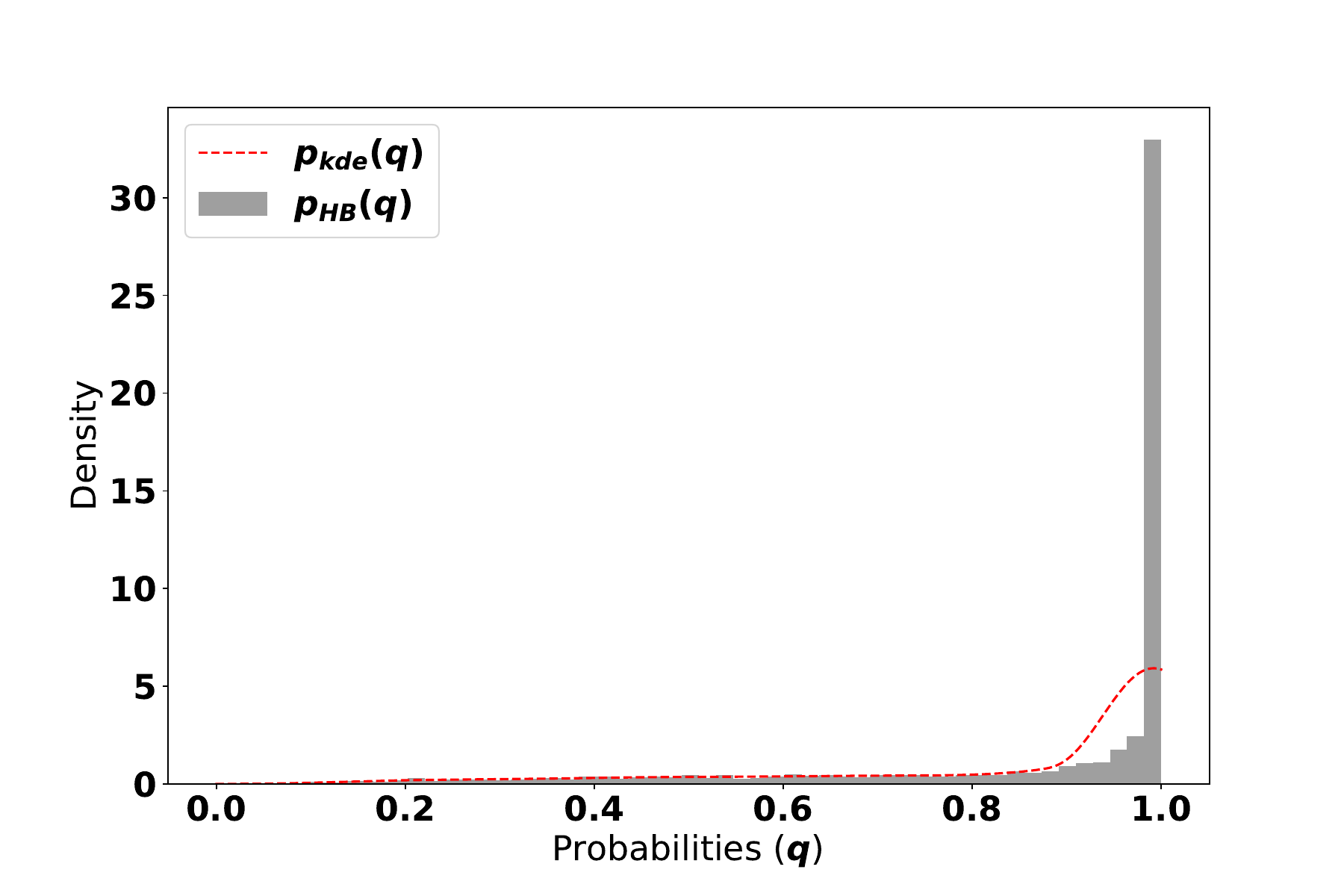}
		\caption{CIFAR100/WRN Top-1 (Prob. Space)}
	\label{fig:kde_fit_plot_c100_prob}
	\end{subfigure}
	\hfill
	\begin{subfigure}[t]{0.5\textwidth}
		\includegraphics[width=1.0\linewidth]{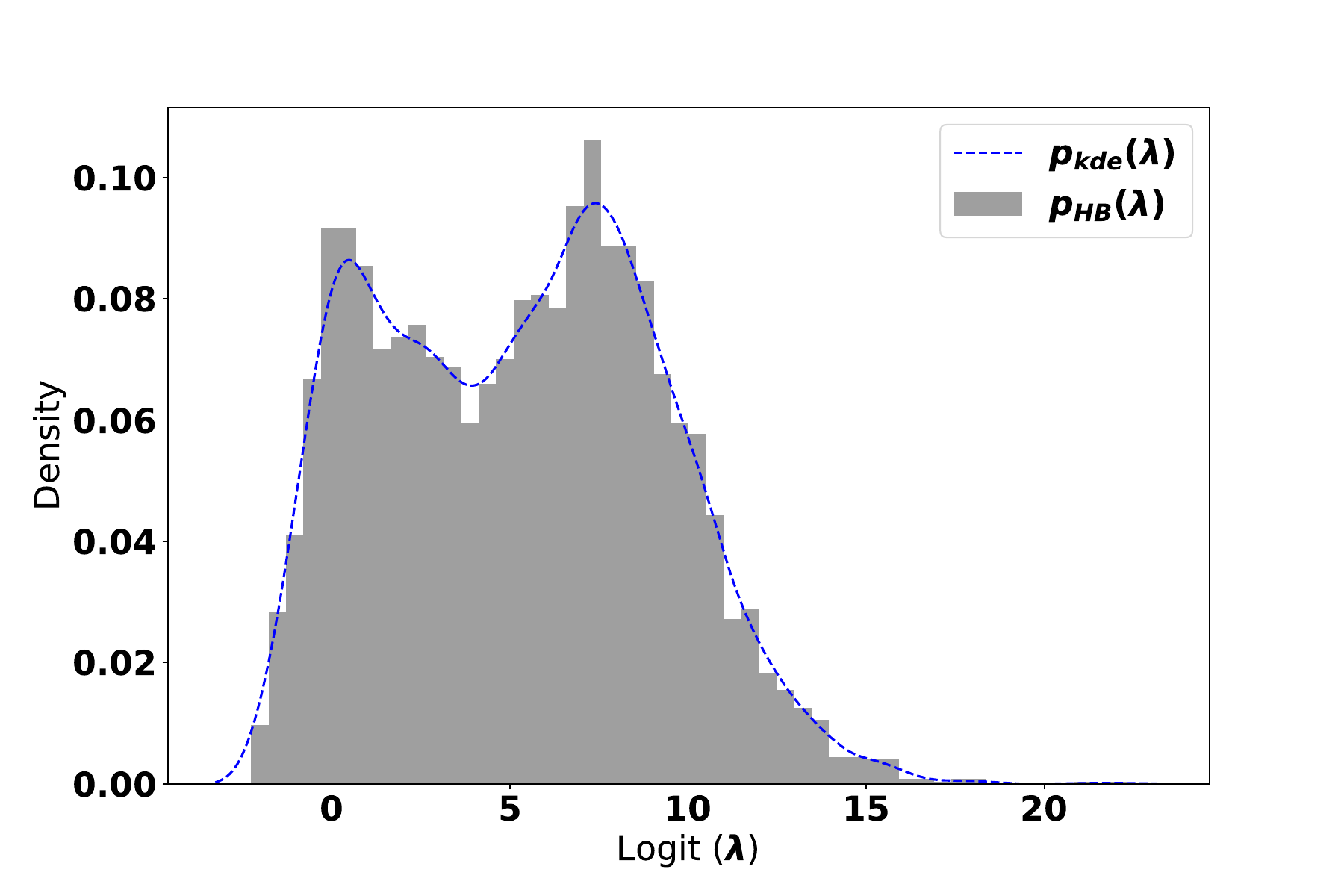}
		\caption{CIFAR100/WRN Top-1 (Log Space)}
	\label{fig:kde_fit_plot_c100_log}
	\end{subfigure}
	\\
	\begin{subfigure}[t]{0.5\textwidth}
		\includegraphics[width=1.0\linewidth]{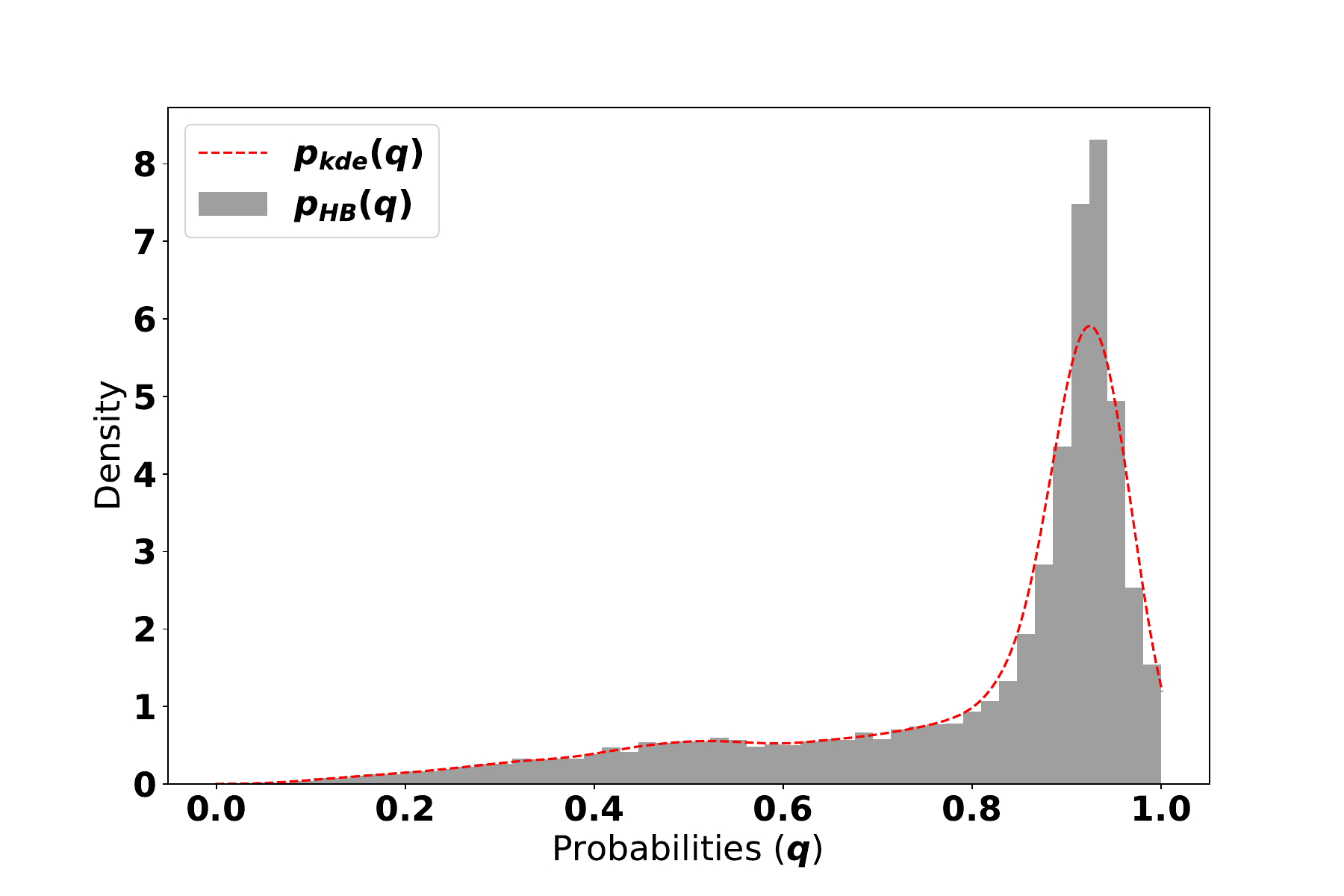}
		\caption{ImageNet/Inceptionresnetv2 Top-1 (Prob. Space)}
	\label{fig:kde_fit_plot_imagenet_prob}
	\end{subfigure}
	\hfill
	\begin{subfigure}[t]{0.5\textwidth}
		\includegraphics[width=1.0\linewidth]{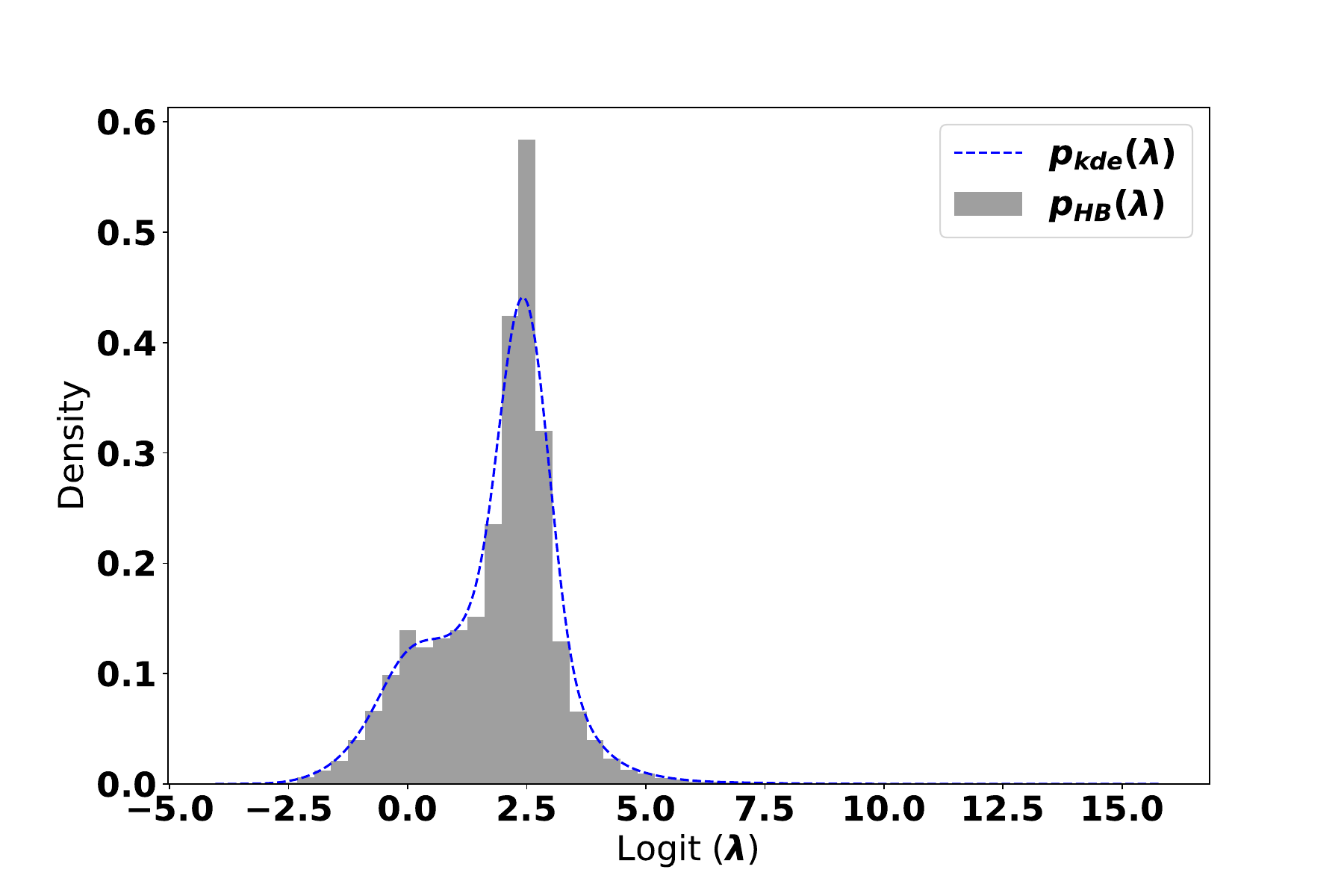}
		\caption{ImageNet/Inceptionresnetv2 Top-1 (Log Space)}
	\label{fig:kde_fit_plot_imagenet_log}
	\end{subfigure}
\caption{Distribution of the top-$1$ predictions and its log-space counterparts, i.e., $\lambda=\log q - \log(1-q)$.}\vspace{-0.4cm}
	\label{fig:kde_fit_plot}
\end{figure*}

\citet{zhang2020mix} empirically validated their KDE in a toy example, where the ground truth ECE can be analytically computed. By analogy, we reproduce the experiment and further compare it with the log-space KDE evaluation. %
Using the same settings as in~\citep{zhang2020mix}, we assess the ECE evaluation error by KDE, i.e., $\vert \mathrm{ECE}_{\mathrm{gt}} - \mathrm{ECE}_{\mathrm{kde}}\vert $, in both the log and probability space, achieving $\mathrm{prob}$ $0.0020$ vs. $\mathrm{log}$ $0.0017$ for the toy example setting $\beta_0=0.5 ; \beta_1=-1.5$. 
For an even less calibrated setting, $\beta_0=0.2 ; \beta_1=-1.9$, we obtain $\mathrm{prob}$ $0.0029$ vs. $\mathrm{log}$ $0.0020$. So the log-space KDE-based ECE evaluation ($\mathrm{kdeECE}_{\mathrm{log}}$) has lower estimation error than in the probability space.

\subsection{Alternative ECE evaluation schemes}

Concluding from the above, Tab.~\ref{tab:app_ece_evals} shows the ECE estimates attained by HDEs (from four different bin setting schemes) and KDE (from \citep{zhang2020mix}, but in the log space). %
As we can see, the obtained results are evaluation scheme dependent. 
On contrary, I-Max binning with and without GP are not affected, and more importantly, their ECEs are better than that of scaling methods, regardless of the evaluation scheme.

\begin{table}
\scriptsize
\setlength{\tabcolsep}{0.2em} 
\renewcommand{\arraystretch}{1.1}
\centering

\caption{ECEs of scaling methods under various evaluation schemes for ImageNet InceptionResNetV2. Overall, we consider five evaluation schemes, namely (1) $\mathrm{dECE}$: equal size binning; (2) $\mathrm{mECE}$: equal mass binning, (3) $\mathrm{kECE}$: MSE-based KMeans clustering; (4) $\mathrm{iECE}$: I-Max binning; 5) $\mathrm{kdeECE}$: KDE. The HDEs based schemes, i.e., (1)-(4), use $10^2$ bins. Note that, the ECEs of I-Max binning (as a calibrator rather than evaluation scheme) are agnostic to the evaluation scheme. Furthermore, BBQ suffers from severe accuracy degradation.
}
\label{tab:app_ece_evals}
\begin{tabular}{l|c|ccccc|c}
\rowcolor{verylightgray}
\toprule
                  Calibrator &  $\text{ACC}_{\text{top}_1}$ &            ${}_{\text{CW}}\text{dECE}_{\frac{1}{K}}$ $\downarrow$ &             ${}_{\text{CW}}\text{mECE}_{\frac{1}{K}}$ $\downarrow$&              ${}_{\text{CW}}\text{kECE}_{\frac{1}{K}}$ $\downarrow$ &             ${}_{\text{CW}}\text{iECE}_{\frac{1}{K}}$ $\downarrow$ &   ${}_{\text{CW}}\text{kdeECE}_{\frac{1}{K}}$ $\downarrow$ & $\text{Mean}$ $\downarrow$\\
\midrule
         Baseline &  80.33& 0.0486 $\pm$ 0.0003 &  0.0459 $\pm$ 0.0004 &  0.0484 $\pm$ 0.0004 &  0.0521 $\pm$ 0.0004 &  0.0749 $\pm$ 	0.0014 & 0.0540 \\
		     \hline
		\rowcolor[gray]{.90} 
		& \multicolumn{7}{c}{25\unit{k} Calibration Samples} \\
         BBQ & 53.89&  0.0287 $\pm$ 0.0009 &  0.0376 $\pm$ 0.0014 &  0.0372 $\pm$ 0.0014 &  0.0316 $\pm$ 0.0008 & 0.0412 $\pm$ 0.0010  & 0.0353\\
         Beta & 80.47&  0.0706 $\pm$ 0.0003 &  0.0723 $\pm$ 0.0005 &   0.0742 $\pm$ 0.0005 &   0.0755 $\pm$ 0.0004 &  0.0828 $\pm$ 0.0003 & 0.0751 \\
 Isotonic Reg. & 80.08  & 0.0644 $\pm$ 0.0015 &  0.0646 $\pm$ 0.0015 &   0.0652 $\pm$ 0.0016 &  0.0655 $\pm$ 0.0015 &  0.0704 $\pm$ 0.0014 & 0.0660\\
 
           Platt & 80.48 & 0.0597 $\pm$ 0.0007 &   0.0593 $\pm$ 0.0008 &  0.0613 $\pm$ 0.0008 &   0.0634 $\pm$ 0.0008 &   0.1372 $\pm$  	0.0028 & 0.0762\\
    
    Vec Scal. w. L2 reg. & 80.53 & 0.0494 $\pm$ 0.0002 &  0.0472 $\pm$ 0.0004 &   0.0498 $\pm$ 0.0003 &  0.0531 $\pm$ 0.0003 &  0.0805 $\pm$ 0.0010  & 0.0560\\
     Mtx Scal. w. L2 reg. & \textbf{80.78} & 0.0508 $\pm$ 0.0003 &  0.0488 $\pm$ 0.0004 &  0.0512 $\pm$ 0.0005 &   0.0544 $\pm$ 0.0004 &  0.0898 $\pm$ 0.0011 & 0.0590\\

               	\hline
		\rowcolor[gray]{.90} 
		& \multicolumn{7}{c}{1\unit{k} Calibration Samples} \\
          TS & 80.33& 0.0559 $\pm$ 0.0015 &  0.0548 $\pm$ 0.0018 &  0.0573 $\pm$ 0.0017 &   0.0598 $\pm$ 0.0015 & 0.1003  $\pm$  	0.0053 & 0.0656\\
               GP & 80.33&  0.0485 $\pm$ 0.0037 &  0.0450 $\pm$ 0.0040 &  0.0475 $\pm$ 0.0039 &  0.0520 $\pm$ 0.0038 &  0.0580 $\pm$ 0.0052 & 0.0502\\
		\hline
		I-Max & 80.20 & \multicolumn{6}{c}{0.0302 $\pm$ 0.0041} \\
		I-Max w. GP & 80.20 & \multicolumn{6}{c}{ \textbf{0.0300} $\pm$ 0.0041}  \\
\bottomrule
\rowcolor{verylightgray}
\toprule
                  Calibrator &     $\text{ACC}_{\text{top}_1}$  &       ${}_{\text{top1}}\text{dECE}$ $\downarrow$ &             ${}_{\text{top1}}\text{mECE}$ $\downarrow$&              ${}_{\text{top1}}\text{kECE}$ $\downarrow$ &             ${}_{\text{top1}}\text{iECE}$ $\downarrow$ & ${}_{\text{top1}}\text{kdeECE}$ $\downarrow$ & $\text{Mean}$ $\downarrow$\\
\midrule
		Baseline & 80.33& 0.0357 $\pm$ 0.0010 &  0.0345 $\pm$ 0.0010 &  0.0348 $\pm$ 0.0012 &   0.0352 $\pm$ 0.0016 &  0.0480 $\pm$ 	0.0016 & 0.0376 \\
		\hline
		\rowcolor[gray]{.90} 
		& \multicolumn{7}{c}{25\unit{k} Calibration Samples} \\
         BBQ & 53.89 &  0.2689 $\pm$ 0.0033 &  0.2690 $\pm$ 0.0034 &  0.2690 $\pm$ 0.0034 &  0.2689 $\pm$ 0.0032 & 0.2756 $\pm$ 0.0145 & 0.2703\\
         Beta & 80.47 &  0.0346 $\pm$ 0.0022 &  0.0360 $\pm$ 0.0017 &   0.0360 $\pm$ 0.0022 &  0.0357 $\pm$ 0.0019 & 0.0292 	$\pm$ 0.0023 & 0.0343\\
 Isotonic Reg. & 80.08 &  0.0468 $\pm$ 0.0020 &  0.0434 $\pm$ 0.0019 &   0.0436 $\pm$ 0.0020 &  0.0468 $\pm$ 0.0015 &  0.0437 $\pm$  	0.0057 & 0.0449\\
   
          Platt & 80.48& 0.0775 $\pm$ 0.0015 &  0.0772 $\pm$ 0.0015 &  0.0771 $\pm$ 0.0016 &  0.0773 $\pm$ 0.0014 &  0.0772 $\pm$  	0.0018  & 0.0773 \\
     Vec Scal. w. L2 reg. &  80.53& 0.0300 $\pm$ 0.0010 &  0.0298 $\pm$ 0.0012 &  0.0300 $\pm$ 0.0016 &  0.0303 $\pm$ 0.0011 &  0.0365 $\pm$ 	0.0023  & 0.0313 \\
       Mtx Scal. w. L2 reg. &\textbf{80.78}&  0.0282 $\pm$ 0.0014 &  0.0287 $\pm$ 0.0011 &  0.0286 $\pm$ 0.0014 &  0.0289 $\pm$ 0.0014 &  0.0324 $\pm$  	0.0019 & 0.0293\\
     	\hline
		\rowcolor[gray]{.90} 
		& \multicolumn{7}{c}{1\unit{k} Calibration Samples} \\
          TS & 80.33& 0.0439 $\pm$ 0.0022 &  0.0452 $\pm$ 0.0022 &  0.0454 $\pm$ 0.0020 &  0.0443 $\pm$ 0.0020 &  0.0679 $\pm$ 	0.0024  & 0.0493\\
          GP &80.33 &  0.0186 $\pm$ 0.0034 &   0.0182 $\pm$ 0.0019 &  0.0186 $\pm$ 0.0026 &    0.0190 $\pm$ 0.0022 &  0.0164 $\pm$	0.0029 & 0.0182\\         
		\hline
		I-Max & 80.20&\multicolumn{6}{c}{ 0.0200 $\pm$ 0.0033} \\
		I-Max w. GP & 80.20&\multicolumn{6}{c}{\textbf{0.0121} $\pm$ 0.0048} \\
\bottomrule
\end{tabular}
\end{table}

\section{Post-hoc vs. During-Training Calibration}\label{supp:during_training_vs_post_hoc}

To calibrate a DNN-based classifier, there exists two groups of methods. One is to improve the calibration during training, whereas the other is post-hoc calibration. In this paper, we focus on post-hoc calibration because it is simple and does not require re-training of deployed models. In the following, we briefly discuss the advantages and disadvantages of post-hoc and during-training calibration.

In general, post-hoc and during-training calibration can be viewed as two orthogonal ways to improve the calibration, as they can be easily combined. Exemplarily, we compare/combine post-hoc calibration methods against/with during-training regularization which directly modifies the training objective to encourage less confident predictions through an entropy regularization term (Entr. Reg.)~\citep{entrreg2017}. Additionally, we adopt Mixup~\citep{Zhang2018mixupBE} which is a data augmentation shown to improve calibration~\citep{OnMixupTrainThul}. We re-train the CIFAR100 WRN classifier respectively using Entr. Reg. and Mixup. It can be seen in Tab.~\ref{tab:during_training_calibration} that compared to the Baseline model (without training calibration Entr. Reg. or Mixup), EntrReg improves the top1 ECE from $0.06880$ to $0.04806$. Further applying post-hoc calibration, I-Max and I-Max w. GP can reduce the $0.04806$ to $0.02202$ and $0.01562$, respectively. This indicates that their combination is beneficial. In this particular case, we also observed that without Entr. Reg., directly post-hoc calibrating the Baseline model appears to be more effective, e.g., top 1 ECE of $0.01428$ and class-wise ECE $0.04574$. Switching to Mixup, the best top 1 ECE $0.01364$ is attained by combining Mixup with post-hoc I-Max w. GP, while I-Max alone without during-training calibration is still the best at class-wise ECE. 

While post-hoc calibrator is simple and effective at calibration, during-training techniques may deliver more than improving calibration, e.g., improving the generalization performance and providing robustness against adversarials. 
Therefore, instead of choosing either post-hoc or during training technique, we recommend the combination. 
While during-training techniques improve the generalization and robustness of the Baseline classifier, post-hoc calibration can further boost its calibration at a low computational cost.

\begin{table*}
\footnotesize
	\setlength{\tabcolsep}{0.25em} 
\centering
\caption{ECEs of post-hoc and during-training calibration. A WRN CIFAR100 classifier is trained in three modes: 1) no during-training calibration; 2) using entropy regularization~\citep{entrreg2017}; and 3) using Mixup data augmentation~\citep{Zhang2018mixupBE, OnMixupTrainThul}. Taking each of the trained models as one baseline, we further perform post-hoc calibration. Note the best numbers per training mode is marked in bold and the underlined scores are the best across the three models.
}  
\begin{tabular}{l|cc|cc|cc}
\rowcolor{verylightgray}
\toprule
 & \multicolumn{2}{c|}{No Train Calibration} &       \multicolumn{2}{c|}{Entr. Reg.} &       \multicolumn{2}{c}{Mixup} \\
\rowcolor{verylightgray}
         Post-Hoc Cal. &  ${}_{\text{CW}}\text{ECE}_{\mathrm{cls-prior}}$ $\downarrow$ &  ${}_{\text{top1}}\text{ECE}_{}$ $\downarrow$ &  ${}_{\text{CW}}\text{ECE}_{\mathrm{cls-prior}}$ $\downarrow$ &  ${}_{\text{top1}}\text{ECE}_{}$ $\downarrow$ &  ${}_{\text{CW}}\text{ECE}_{\mathrm{cls-prior}}$ $\downarrow$ &  ${}_{\text{top1}}\text{ECE}_{}$ $\downarrow$ \\
\midrule
             Baseline &    0.10434 &   0.06880 &    0.08860 &   0.04806 &    0.10518 &   0.04972 \\
    Mtx Scal. w. $L_2$ &    0.10308 &   0.06560 &    0.08980 &   0.04650 &    0.10374 &   0.04852 \\
          ETS-MnM &    0.09488 &   0.04820 &    0.09050 &   0.03900 &    0.09740 &   0.03676 \\
          \hline
               TS &    0.09436 &   0.05914 &    0.09376 &   0.05438 &    0.10282 &   0.04536 \\
               GP &    0.10836 &   0.03360 &    0.10520 &   0.03728 &    0.10600 &   0.03514 \\
               \hline
 I-Max &    \underline{\textbf{0.04574}} &   0.01834 &    \textbf{0.04712} &   0.02202 &    \textbf{0.05534} &   0.02060 \\
 I-Max w. TS &    0.05706 &   0.04342 &    0.04766 &   0.04478 &    0.06156 &   0.03264 \\
 I-Max w. GP &    0.06130 &   \textbf{0.01428} &    0.05114 &   \textbf{0.01562} &    0.05992 &   \underline{\textbf{0.01364}} \\
\bottomrule
\end{tabular}

\label{tab:during_training_calibration}
\end{table*}

\section{Training Details}

\subsection{Pre-trained Classification Networks}
\label{sec:training_details}
We evaluate post-hoc calibration methods on four benchmark datasets, i.e., ImageNet~\cite{imagenet_cvpr09}, CIFAR-100~\cite{cifar}, CIFAR-10~\cite{cifar} and SVHN~\citep{Netzer_SVHN}, and across three modern DNNs for each dataset, i.e., InceptionResNetV2~\cite{inceptionresnetv2}, DenseNet161~\cite{dnpaper} and ResNet152~\cite{He2016DeepRL} for ImageNet, and Wide ResNet (WRN)~\cite{wrnpaper} for the two CIFAR datasets and SVHN. Additionally, we train DenseNet-BC ($L=190$, $k=40$)~\cite{dnpaper} and ResNext8x64~\cite{resnextpaper} for the two CIFAR datasets. 

The ImageNet and CIFAR models are publicly available pre-trained networks and details are reported at the respective websites, i.e., ImageNet classifiers: \url{https://github.com/Cadene/pretrained-models.pytorch} and CIFAR classifiers: \url{https://github.com/bearpaw/pytorch-classification}. 

\subsection{Training Scaling Methods}
The hyper-parameters were decided based on the original respective scaling methods publications with some exceptions. We found that the following parameters were the best for all the scaling methods.
All scaling methods use the Adam optimizer with batch size $256$ for CIFAR and $4096$ for ImageNet. 
The learning rate was set to $10^{-3}$ for temperature scaling~\cite{pmlr-v70-guo17a} and Platt scaling~\cite{Platt1999}, $0.0001$ for vector scaling~\cite{pmlr-v70-guo17a} and $10^{-5}$ for matrix scaling~\cite{pmlr-v70-guo17a}. 
Matrix scaling was further regularized as suggested by~\cite{Kull2019} with a $L_2$ loss on the bias vector and the off-diagonal elements of the weighting matrix. BBQ~\cite{Naeini2015}, isotonic regression~\cite{Zadrozny2002} and Beta~\cite{pmlr-v54-kull17a} hyper-parameters were taken directly from~\cite{wenger2020calibration}.

\subsection{Training I-Max Binning}
The I-Max bin optimization started from k-means++ initialization, which uses JSD instead of Euclidean metric as the distance measure, see Sec.~\ref{supp:kmeans++}. Then, we iteratively and alternatively updated $\{g_m\}$ and $\{\phi_m\}$ according to (\ref{g_phi}) until $200$ iterations. With the attained bin edges $\{g_m\}$, we set the bin representatives $\{r_m\}$ based on the empirical frequency of class-$1$. If a scaling method is combined with binning, an alternative setting for $\{r_m\}$ is to take the averaged prediction probabilities based on the scaled logits of the samples per bin, e.g., in Tab.~\ref{table:bigcomparison} in Sec.~\ref{sec:bin_vs_scaling}. 
Note that, for CW binning in ~\ref{table:cw_vs_cw_merge_binning}, the number of samples from the minority class is too few, i.e., $25\unit{k}/1\unit{k}=25$. We only have about $25/15\approx 2$ samples per bin, which are too few to use empirical frequency estimates. Alternatively, we set $\{r_m\}$ based on the raw prediction probabilities.
For ImageNet and CIFAR 10/100, which have test sets with uniform class priors, the used sCW setting is to share one binning scheme among all classes. Alternatively, for the imbalanced multi-class SVHN setting, we share binning among classes with similar class priors, and thus use the following class (i.e. digit) groupings: $\{0-1\}, \{2-4\}, \{5-9\}$.

\section{Extend Tab. 1 for More Datasets and Models.}
\label{sec:tab1_exten}
Tab.~\ref{table:cw_vs_cw_merge_binning} in Sec.~\ref{sec:binning_method_comparisons} of the main paper is replicated across datasets and models, where the basic setting remains the same. Specifically, three different ImageNet models can be found in Tab.~\ref{tab:app_tab1_ImageNet_inceptionresnet}, Tab.~\ref{tab:app_tab1_ImageNet_densenet} and Tab.~\ref{tab:app_tab1_ImageNet_resnet152}.
Three models for CIFAR100 can be found in Tab.~\ref{tab:app_tab1_CIFAR100_wrn}, Tab.~\ref{tab:app_tab1_CIFAR100_resnext} and Tab.~\ref{tab:app_tab1_CIFAR100_densenet}.
Similarly, CIFAR10 models can be found in Tab.~\ref{tab:app_tab1_CIFAR10_wrn}, Tab.~\ref{tab:app_tab1_CIFAR10_resnext} and Tab.~\ref{tab:app_tab1_CIFAR10_densenet}. The accuracy degradation of Eq. Mass reduces as the dataset has less number of classes, e.g., CIFAR10. 
This is a result of a higher class prior, where the one-vs-rest conversion becomes less critical for CIFAR10 than ImageNet.
Nevertheless, its accuracy losses are still much larger than the other binning schemes, i.e., Eq. Size and I-Max binning. Therefore, its calibration performance is not considered for comparison. Overall, the observations of Tab.~\ref{tab:app_tab1_ImageNet_inceptionresnet}-~\ref{tab:app_tab1_CIFAR10_densenet} are similar to Tab.~\ref{table:cw_vs_cw_merge_binning}, showing the stable performance gains of I-Max binning across datasets and models. 

\begin{table}
\footnotesize
\setlength{\tabcolsep}{0.2em} 
\renewcommand{\arraystretch}{1.1}
\centering
\caption{Tab. 1 Extension: ImageNet - InceptionResNetV2}
\label{tab:app_tab1_ImageNet_inceptionresnet}
\begin{tabular}{lc|ccccccc}
\rowcolor{verylightgray}
\toprule
Binn. & sCW(?) &  size &              $\text{Acc}_{\text{top1}}$ $\uparrow$ &        $\text{Acc}_{\text{top5}}$ $\uparrow$ &          ${}_{\text{CW}}\text{ECE}_{\frac{1}{K}}$ $\downarrow$ &           ${}_{\text{top1}}\text{ECE}_{}$ $\downarrow$ &                NLL \\
\midrule
Baseline &   \textcolor{red}{\text{\xmark}}  &    - &  \textbf{80.33} $\pm$ 0.15 &  \textbf{95.10} $\pm$ 0.15 &  0.0486 $\pm$ 0.0003 &  0.0357 $\pm$ 0.0009 &  0.8406 $\pm$ 0.0095 \\
\hline
  Eq. Mass &   \textcolor{red}{\text{\xmark}}  &  25\unit{k} &   ~7.78 $\pm$ 0.15 &  27.92 $\pm$ 0.71 &  0.0016 $\pm$ 0.0001 &  0.0606 $\pm$ 0.0013 &   3.5960 $\pm$ 0.0137 \\
 Eq. Mass &    \textcolor{blue}{\text{\cmark}}  &   1\unit{k} &   ~5.02 $\pm$ 0.13 &  26.75 $\pm$ 0.37 &  0.0022 $\pm$ 0.0001 &  0.0353 $\pm$ 0.0012 &  3.5272 $\pm$ 0.0142 \\
\hline 
 Eq. Size &   \textcolor{red}{\text{\xmark}}  &  25\unit{k} &  78.52 $\pm$ 0.15 &  89.06 $\pm$ 0.13 &  0.1344 $\pm$ 0.0005 &  0.0547 $\pm$ 0.0017 &  1.5159 $\pm$ 0.0136 \\
 Eq. Size &    \textcolor{blue}{\text{\cmark}}  &   1\unit{k} &  80.14 $\pm$ 0.23 &  88.99 $\pm$ 0.12 &  0.1525 $\pm$ 0.0023 &  0.0279 $\pm$ 0.0043 &   1.2671 $\pm$ 0.0130 \\
\hline
  I-Max &   \textcolor{red}{\text{\xmark}}  &  25\unit{k} &  80.27 $\pm$ 0.17 &  95.01 $\pm$ 0.19 &  0.0342 $\pm$ 0.0006 &   0.0329 $\pm$ 0.0010 &  0.8499 $\pm$ 0.0105 \\
 I-Max &    \textcolor{blue}{\text{\cmark}}  &   1\unit{k} &   80.20 $\pm$ 0.18 &  94.86 $\pm$ 0.17 &  \textbf{0.0302} $\pm$ 0.0041 &    \textbf{0.0200} $\pm$ 0.0033 &  \textbf{0.7860} $\pm$ 0.0208 \\
\bottomrule
\end{tabular}
\end{table}

\begin{table}
\footnotesize
\setlength{\tabcolsep}{0.2em} 
\renewcommand{\arraystretch}{1.1}
\centering
\caption{Tab. 1 Extension: ImageNet - DenseNet}
\label{tab:app_tab1_ImageNet_densenet}
\begin{tabular}{lc|ccccccc}
\rowcolor{verylightgray}
\toprule
    Binn. & sCW(?) &  size &              $\text{Acc}_{\text{top1}}$ $\uparrow$ &        $\text{Acc}_{\text{top5}}$ $\uparrow$ &          ${}_{\text{CW}}\text{ECE}_{\frac{1}{K}}$ $\downarrow$ &           ${}_{\text{top1}}\text{ECE}_{}$ $\downarrow$ &                NLL \\
\midrule
       Baseline &   \textcolor{red}{\text{\xmark}}  &    - &  \textbf{77.21} $\pm$ 0.12 & \textbf{93.51} $\pm$ 0.14 &  0.0502 $\pm$ 0.0006 &  0.0571 $\pm$ 0.0014 &   0.9418 $\pm$ 0.0120 \\
       \hline  
    Eq. Mass &   \textcolor{red}{\text{\xmark}}  &  25\unit{k} &  18.48 $\pm$ 0.19 &  45.12 $\pm$ 0.26 &     0.0017 $\pm$ 0.0000 &   0.1657 $\pm$ 0.0020 &  2.9437 $\pm$ 0.0162 \\
  Eq. Mass &    \textcolor{blue}{\text{\cmark}}  &   1\unit{k} &  17.21 $\pm$ 0.47 &  45.69 $\pm$ 1.22 &  0.0054 $\pm$ 0.0004 &  0.1572 $\pm$ 0.0047 &  2.9683 $\pm$ 0.0561 \\
\hline
  Eq. Size &   \textcolor{red}{\text{\xmark}}  &  25\unit{k} &  74.34 $\pm$ 0.28 &  88.27 $\pm$ 0.11 &  0.1272 $\pm$ 0.0011 &   0.0660 $\pm$ 0.0018 &  1.6699 $\pm$ 0.0165 \\  
  Eq. Size &    \textcolor{blue}{\text{\cmark}}  &   1\unit{k} &  77.06 $\pm$ 0.28 &   88.22 $\pm$ 0.10 &  0.1519 $\pm$ 0.0016 &    0.0230 $\pm$ 0.0050 &  1.3948 $\pm$ 0.0105 \\  
\hline
   I-Max &   \textcolor{red}{\text{\xmark}}  &  25\unit{k} &  77.07 $\pm$ 0.13 &   93.40 $\pm$ 0.17 &  0.0334 $\pm$ 0.0004 &  0.0577 $\pm$ 0.0008 &   0.9492 $\pm$ 0.0130 \\  
  I-Max &    \textcolor{blue}{\text{\cmark}}  &   1\unit{k} &  77.13 $\pm$ 0.14 &  93.34 $\pm$ 0.17 &  \textbf{0.0263} $\pm$ 0.0119 &  \textbf{0.0201} $\pm$ 0.0088 &  \textbf{0.9229} $\pm$ 0.0103 \\ 
\bottomrule
\end{tabular}
\end{table}

\begin{table}
\footnotesize
\setlength{\tabcolsep}{0.2em} 
\renewcommand{\arraystretch}{1.1}
\centering
\caption{Tab. 1 Extension: ImageNet - ResNet152}
\label{tab:app_tab1_ImageNet_resnet152}
\begin{tabular}{lc|ccccccc}
\rowcolor{verylightgray}
\toprule
     Binn. & sCW(?) &  size &              $\text{Acc}_{\text{top1}}$ $\uparrow$ &        $\text{Acc}_{\text{top5}}$ $\uparrow$ &          ${}_{\text{CW}}\text{ECE}_{\frac{1}{K}}$ $\downarrow$ &           ${}_{\text{top1}}\text{ECE}_{}$ $\downarrow$ &                NLL \\
\midrule
Baseline &   \textcolor{red}{\text{\xmark}}  &    - &  \textbf{78.33} $\pm$ 0.17 &   \textbf{94.00} $\pm$ 0.14 &    0.0500 $\pm$ 0.0004 &  0.0512 $\pm$ 0.0018 &   0.8760 $\pm$ 0.0133 \\
\hline
  Eq. Mass &   \textcolor{red}{\text{\xmark}}  &  25\unit{k} &   17.45 $\pm$ 0.10 &  44.87 $\pm$ 0.37 &     0.0017 $\pm$ 0.0000 &   0.1555 $\pm$ 0.0010 &  2.9526 $\pm$ 0.0168 \\ 
 Eq. Mass &    \textcolor{blue}{\text{\cmark}}  &   1\unit{k} &  16.25 $\pm$ 0.54 &  45.53 $\pm$ 0.81 &  0.0064 $\pm$ 0.0004 &  0.1476 $\pm$ 0.0054 &  2.9471 $\pm$ 0.0556 \\
\hline
   Eq. Size &   \textcolor{red}{\text{\xmark}}  &  25\unit{k} &   75.50 $\pm$ 0.28 &  88.85 $\pm$ 0.19 &  0.1223 $\pm$ 0.0008 &  0.0604 $\pm$ 0.0017 &  1.6012 $\pm$ 0.0252 \\ 
  Eq. Size &    \textcolor{blue}{\text{\cmark}}  &   1\unit{k} &  78.24 $\pm$ 0.16 &  88.81 $\pm$ 0.19 &   0.1480 $\pm$ 0.0015 &  0.0286 $\pm$ 0.0053 &  1.3308 $\pm$ 0.0178 \\
\hline
   I-Max &   \textcolor{red}{\text{\xmark}}  &  25\unit{k} &  78.24 $\pm$ 0.16 &  93.91 $\pm$ 0.17 &  0.0334 $\pm$ 0.0005 &  0.0521 $\pm$ 0.0015 &  0.8842 $\pm$ 0.0135 \\ 
  I-Max &    \textcolor{blue}{\text{\cmark}}  &   1\unit{k} &  78.19 $\pm$ 0.21 &  93.82 $\pm$ 0.17 &   \textbf{0.0295} $\pm$ 0.0030 &  \textbf{0.0196} $\pm$ 0.0049 &  \textbf{0.8638} $\pm$ 0.0135 \\
\bottomrule
\end{tabular}
\end{table}

\begin{table}
\footnotesize
\setlength{\tabcolsep}{0.4em} 
\renewcommand{\arraystretch}{1.1}
\centering
\caption{Tab. 1 Extension: CIFAR100 - WRN}
\label{tab:app_tab1_CIFAR100_wrn}
\begin{tabular}{lc|cccccc}
\rowcolor{verylightgray}
\toprule
   Binn. & sCW(?) &  size &              $\text{Acc}_{\text{top1}}$ $\uparrow$ &          ${}_{\text{CW}}\text{ECE}_{\frac{1}{K}}$ $\downarrow$ &           ${}_{\text{top1}}\text{ECE}_{}$ $\downarrow$ &                NLL \\
\midrule
 Baseline &   \textcolor{red}{\text{\xmark}}  &    - &  \textbf{81.35} $\pm$ 0.13 &   0.1113 $\pm$ 0.0010 &  0.0748 $\pm$ 0.0018 &  0.7816 $\pm$ 0.0076 \\
 \hline
    Eq. Mass &   \textcolor{red}{\text{\xmark}}  &   5\unit{k} &  60.78 $\pm$ 0.62 &   0.0129 $\pm$ 0.0010 &  0.4538 $\pm$ 0.0074 &  1.1084 $\pm$ 0.0117 \\    
  Eq. Mass &    \textcolor{blue}{\text{\cmark}}  &   1\unit{k} &  62.04 $\pm$ 0.53 &  0.0252 $\pm$ 0.0032 &  0.4744 $\pm$ 0.0049 &  1.1789 $\pm$ 0.0308 \\
  \hline
  Eq. Size &   \textcolor{red}{\text{\xmark}}  &   5\unit{k} &  80.39 $\pm$ 0.36 &  0.1143 $\pm$ 0.0013 &  0.0783 $\pm$ 0.0032 &  1.0772 $\pm$ 0.0184 \\  
  Eq. Size &    \textcolor{blue}{\text{\cmark}}  &   1\unit{k} &  81.12 $\pm$ 0.15 &   0.1229 $\pm$ 0.0030 &  0.0273 $\pm$ 0.0055 &  1.0165 $\pm$ 0.0105 \\
\hline
   I-Max &   \textcolor{red}{\text{\xmark}}  &   5\unit{k} &  81.22 $\pm$ 0.12 &   0.0692 $\pm$ 0.0020 &  0.0751 $\pm$ 0.0024 &   0.7878 $\pm$ 0.0090 \\ 
  I-Max &    \textcolor{blue}{\text{\cmark}}  &   1\unit{k} &   81.30 $\pm$ 0.22 &  \textbf{0.0518} $\pm$ 0.0036 & \textbf{0.0231} $\pm$ 0.0067 &  \textbf{0.7593} $\pm$ 0.0085 \\
\bottomrule
\end{tabular}
\end{table}

\begin{table}
\footnotesize
\setlength{\tabcolsep}{0.4em} 
\renewcommand{\arraystretch}{1.1}
\centering
\caption{Tab. 1 Extension: CIFAR100 - ResNeXt8x64}
\label{tab:app_tab1_CIFAR100_resnext}
\begin{tabular}{lc|cccccc}
\rowcolor{verylightgray}
\toprule
    Binn. & sCW(?) &  size &              $\text{Acc}_{\text{top1}}$ $\uparrow$ &          ${}_{\text{CW}}\text{ECE}_{\frac{1}{K}}$ $\downarrow$ &           ${}_{\text{top1}}\text{ECE}_{}$ $\downarrow$ &                NLL \\
\midrule
 Baseline &   \textcolor{red}{\text{\xmark}}  &    - &  81.93 $\pm$ 0.08 &  0.0979 $\pm$ 0.0015 &   0.0590 $\pm$ 0.0028 &  0.7271 $\pm$ 0.0026 \\
 \hline
  Eq. Mass &   \textcolor{red}{\text{\xmark}}  &   5\unit{k} &  63.02 $\pm$ 0.54 &  0.0131 $\pm$ 0.0012 &  0.4764 $\pm$ 0.0057 &  1.0535 $\pm$ 0.0191 \\ 
 Eq. Mass &    \textcolor{blue}{\text{\cmark}}  &   1\unit{k} &  64.48 $\pm$ 0.64 &  0.0265 $\pm$ 0.0011 &    0.4980 $\pm$ 0.0070 &  1.1232 $\pm$ 0.0277 \\
 \hline
  Eq. Size &   \textcolor{red}{\text{\xmark}}  &   5\unit{k} &  80.81 $\pm$ 0.26 &   0.1070 $\pm$ 0.0008 &     0.0700 $\pm$ 0.0030 &  1.0178 $\pm$ 0.0066 \\ 
 Eq. Size &    \textcolor{blue}{\text{\cmark}}  &   1\unit{k} &  81.99 $\pm$ 0.21 &  0.1195 $\pm$ 0.0013 &   0.0230 $\pm$ 0.0033 &  0.9556 $\pm$ 0.0071 \\
\hline
 I-Max &   \textcolor{red}{\text{\xmark}}  &   5\unit{k} &  \textbf{81.99} $\pm$ 0.08 &  0.0601 $\pm$ 0.0027 &  0.0627 $\pm$ 0.0034 &  0.7318 $\pm$ 0.0026 \\ 
 I-Max &    \textcolor{blue}{\text{\cmark}}  &   1\unit{k} &  81.96 $\pm$ 0.14 &  \textbf{0.0549} $\pm$ 0.0081 &  \textbf{0.0205} $\pm$ 0.0074 &   \textbf{0.7127} $\pm$ 0.0040 \\
\bottomrule
\end{tabular}
\end{table}

\begin{table}
\footnotesize
\setlength{\tabcolsep}{0.4em} 
\renewcommand{\arraystretch}{1.1}
\centering
\caption{Tab. 1 Extension: CIFAR100 - DenseNet}
\label{tab:app_tab1_CIFAR100_densenet}
\begin{tabular}{lc|cccccc}
\rowcolor{verylightgray}
\toprule
    Binn. & sCW(?) &  size &              $\text{Acc}_{\text{top1}}$ $\uparrow$ &          ${}_{\text{CW}}\text{ECE}_{\frac{1}{K}}$ $\downarrow$ &           ${}_{\text{top1}}\text{ECE}_{}$ $\downarrow$ &                NLL \\
\midrule
 Baseline &   \textcolor{red}{\text{\xmark}}  &    - & \textbf{82.36} $\pm$ 0.26 &  0.1223 $\pm$ 0.0008 &  0.0762 $\pm$ 0.0015 &  0.7542 $\pm$ 0.0143 \\
 \hline
    Eq. Mass &   \textcolor{red}{\text{\xmark}}  &   5\unit{k} &   57.23 $\pm$ 0.50 &  0.0117 $\pm$ 0.0011 &  0.4173 $\pm$ 0.0051 &  1.1819 $\pm$ 0.0228 \\ 
  Eq. Mass &    \textcolor{blue}{\text{\cmark}}  &   1\unit{k} &  58.11 $\pm$ 0.21 &  0.0233 $\pm$ 0.0005 &  0.4339 $\pm$ 0.0024 &  1.2049 $\pm$ 0.0405 \\
  \hline
   Eq. Size &   \textcolor{red}{\text{\xmark}}  &   5\unit{k} &  81.35 $\pm$ 0.23 &  0.1108 $\pm$ 0.0017 &  0.0763 $\pm$ 0.0029 &  1.0207 $\pm$ 0.0183 \\   
  Eq. Size &    \textcolor{blue}{\text{\cmark}}  &   1\unit{k} &   82.22 $\pm$ 0.30 &  0.1192 $\pm$ 0.0024 &  0.0219 $\pm$ 0.0021 &  0.9482 $\pm$ 0.0137 \\
\hline
   I-Max &   \textcolor{red}{\text{\xmark}}  &   5\unit{k} &  82.35 $\pm$ 0.26 &   0.0740 $\pm$ 0.0007 &   0.0772 $\pm$ 0.0010 &  0.7618 $\pm$ 0.0145 \\ 
  I-Max &    \textcolor{blue}{\text{\cmark}}  &   1\unit{k} &  82.32 $\pm$ 0.22 &  \textbf{0.0546} $\pm$ 0.0122 & \textbf{0.0189} $\pm$ 0.0071 &  \textbf{0.7022} $\pm$ 0.0124 \\
\bottomrule
\end{tabular}
\end{table}

\begin{table}
\footnotesize
\setlength{\tabcolsep}{0.4em} 
\renewcommand{\arraystretch}{1.1}
\centering
\caption{Tab. 1 Extension: CIFAR10 - WRN}
\label{tab:app_tab1_CIFAR10_wrn}
\begin{tabular}{lc|cccccc}
\rowcolor{verylightgray}
\toprule
       Binn. & sCW(?) &  size &              $\text{Acc}_{\text{top1}}$ $\uparrow$ &          ${}_{\text{CW}}\text{ECE}_{\frac{1}{K}}$ $\downarrow$ &           ${}_{\text{top1}}\text{ECE}_{}$ $\downarrow$ &                NLL \\
\midrule
 Baseline &   \textcolor{red}{\text{\xmark}}  &    - & \textbf{96.12} $\pm$ 0.14 &  0.0457 $\pm$ 0.0011 &  0.0288 $\pm$ 0.0007 &  0.1682 $\pm$ 0.0062 \\
 \hline 
   Eq. Mass &   \textcolor{red}{\text{\xmark}}  &   5\unit{k} &  91.06 $\pm$ 0.54 &   0.0180 $\pm$ 0.0045 &  0.0794 $\pm$ 0.0066 &  0.2066 $\pm$ 0.0091 \\
  Eq. Mass &    \textcolor{blue}{\text{\cmark}}  &   1\unit{k} &  91.24 $\pm$ 0.27 &  0.0212 $\pm$ 0.0009 &  0.0836 $\pm$ 0.0091 &   0.2252 $\pm$ 0.0220 \\
  \hline
    Eq. Size &   \textcolor{red}{\text{\xmark}}  &   5\unit{k} &  96.04 $\pm$ 0.14 &  0.0344 $\pm$ 0.0008 &   0.0290 $\pm$ 0.0013 &  0.2231 $\pm$ 0.0074 \\
  Eq. Size &    \textcolor{blue}{\text{\cmark}}  &   1\unit{k} &  96.04 $\pm$ 0.15 &  0.0278 $\pm$ 0.0021 &  0.0105 $\pm$ 0.0028 &  0.2744 $\pm$ 0.0812 \\
\hline
   I-Max &   \textcolor{red}{\text{\xmark}}  &   5\unit{k} &   96.10 $\pm$ 0.14 &  0.0329 $\pm$ 0.0011 &  0.0276 $\pm$ 0.0007 &  0.1704 $\pm$ 0.0067 \\ 
  I-Max &    \textcolor{blue}{\text{\cmark}}  &   1\unit{k} &  96.06 $\pm$ 0.13 &  \textbf{0.0304} $\pm$ 0.0012 & \textbf{0.0113} $\pm$ 0.0039 &  \textbf{0.1595} $\pm$ 0.0604 \\
\bottomrule
\end{tabular}
\end{table}

\begin{table}
\footnotesize
\setlength{\tabcolsep}{0.4em} 
\renewcommand{\arraystretch}{1.1}
\centering
\caption{Tab. 1 Extension: CIFAR10 - ResNext8x64}
\label{tab:app_tab1_CIFAR10_resnext}
\begin{tabular}{lc|cccccc}
\rowcolor{verylightgray}
\toprule
    Binn. & sCW(?) &  size &              $\text{Acc}_{\text{top1}}$ $\uparrow$ &          ${}_{\text{CW}}\text{ECE}_{\frac{1}{K}}$ $\downarrow$ &           ${}_{\text{top1}}\text{ECE}_{}$ $\downarrow$ &                NLL \\
\midrule
 Baseline &   \textcolor{red}{\text{\xmark}}  &    - &   \textbf{96.30} $\pm$ 0.18 &  0.0485 $\pm$ 0.0014 &  0.0201 $\pm$ 0.0021 &  \textbf{0.1247} $\pm$ 0.0058 \\
 \hline
  Eq. Mass &   \textcolor{red}{\text{\xmark}}  &   5\unit{k} &   89.40 $\pm$ 0.55 &  0.0168 $\pm$ 0.0037 &  0.0589 $\pm$ 0.0052 &  0.2011 $\pm$ 0.0085 \\
 Eq. Mass &    \textcolor{blue}{\text{\cmark}}  &   1\unit{k} &  89.85 $\pm$ 0.61 &  0.0269 $\pm$ 0.0051 &  0.0676 $\pm$ 0.0127 &  0.2208 $\pm$ 0.0172 \\
  \hline
 Eq. Size &   \textcolor{red}{\text{\xmark}}  &   5\unit{k} &    96.30 $\pm$ 0.20 &  0.0274 $\pm$ 0.0013 &  0.0174 $\pm$ 0.0013 &  0.1613 $\pm$ 0.0101 \\
 Eq. Size &    \textcolor{blue}{\text{\cmark}}  &   1\unit{k} &  96.17 $\pm$ 0.24 &  0.0288 $\pm$ 0.0039 &  0.0114 $\pm$ 0.0025 &  0.2495 $\pm$ 0.0571 \\
\hline
 I-Max &   \textcolor{red}{\text{\xmark}}  &   5\unit{k} &   96.26 $\pm$ 0.20 &   \textbf{0.0240} $\pm$ 0.0020 &  0.0167 $\pm$ 0.0014 & 0.1264 $\pm$ 0.0066 \\
 I-Max &    \textcolor{blue}{\text{\cmark}}  &   1\unit{k} &  96.22 $\pm$ 0.21 &   0.0254 $\pm$ 0.0030 &  \textbf{0.0104} $\pm$ 0.0025 &  0.1397 $\pm$ 0.0276 \\
\bottomrule
\end{tabular}
\end{table}

\begin{table}
\footnotesize
\setlength{\tabcolsep}{0.4em} 
\renewcommand{\arraystretch}{1.1}
\centering
\caption{Tab. 1 Extension Dataset: CIFAR10 - DenseNet}
\label{tab:app_tab1_CIFAR10_densenet}
\begin{tabular}{lc|cccccc}
\rowcolor{verylightgray}
\toprule
   Binn. & sCW(?) &  size &              $\text{Acc}_{\text{top1}}$ $\uparrow$ &          ${}_{\text{CW}}\text{ECE}_{\frac{1}{K}}$ $\downarrow$ &           ${}_{\text{top1}}\text{ECE}_{}$ $\downarrow$ &                NLL \\
\midrule
   Baseline &   \textcolor{red}{\text{\xmark}}  &    - &  96.65 $\pm$ 0.09 &   0.0404 $\pm$ 0.001 &  0.0253 $\pm$ 0.0009 &  0.1564 $\pm$ 0.0075 \\
   \hline
   Eq. Mass &    \textcolor{blue}{\text{\cmark}}  &   1\unit{k} &   88.80 $\pm$ 0.47 &  0.0233 $\pm$ 0.0024 &  0.0637 $\pm$ 0.0023 &  0.2694 $\pm$ 0.0274 \\
   Eq. Mass &   \textcolor{red}{\text{\xmark}}  &   5\unit{k} &  89.51 $\pm$ 0.36 &  0.0137 $\pm$ 0.0039 &  0.0657 $\pm$ 0.0041 &  0.2283 $\pm$ 0.0101 \\
   \hline
   Eq. Size &    \textcolor{blue}{\text{\cmark}}  &   1\unit{k} &  96.64 $\pm$ 0.22 &  0.0262 $\pm$ 0.0035 &  0.0101 $\pm$ 0.0035 &  0.2465 $\pm$ 0.0543 \\
   Eq. Size &   \textcolor{red}{\text{\xmark}}  &   5\unit{k} &  96.74 $\pm$ 0.07 &  0.0301 $\pm$ 0.0012 &  0.0242 $\pm$ 0.0013 &  0.1912 $\pm$ 0.0075 \\
\hline
   I-Max &    \textcolor{blue}{\text{\cmark}}  &   1\unit{k} &  96.59 $\pm$ 0.32 &  0.0261 $\pm$ 0.0025 &  0.0098 $\pm$ 0.0027 &  0.1208 $\pm$ 0.0044 \\
   I-Max &   \textcolor{red}{\text{\xmark}}  &   5\unit{k} &  96.71 $\pm$ 0.09 &  0.0284 $\pm$ 0.0013 &  0.0233 $\pm$ 0.0009 &  0.1608 $\pm$ 0.0086 \\
\bottomrule
\end{tabular}
\end{table}

\section{Extend Tab. 2 for More Scaling Methods, Datasets and Models}
\label{sec:tab2_exten} 
Tab.~\ref{table:bigcomparison} in Sec.~~\ref{sec:bin_vs_scaling} of the main paper is replicated across datasets and models, and include more scaling methods for comparison. The three binning methods all use the shared CW strategy, therefore $1$\unit{k} calibration samples are sufficient. The basic setting remains the same as Tab.~\ref{table:bigcomparison}.
Three different ImageNet models can be found in Tab.~\ref{tab:app_tab2_ImageNet_inceptionresnet}, Tab.~\ref{tab:app_tab2_ImageNet_densenet} and Tab.~\ref{tab:app_tab2_ImageNet_resnet152}.
Three models for CIFAR100 can be found in Tab.~\ref{tab:app_tab2_CIFAR100_wrn}, Tab.~\ref{tab:app_tab2_CIFAR100_resnext} and Tab.~\ref{tab:app_tab2_CIFAR100_densenet}.
Similarly, CIFAR10 models can be found in Tab.~\ref{tab:app_tab2_CIFAR10_wrn}, Tab.~\ref{tab:app_tab2_CIFAR10_resnext} and Tab.~\ref{tab:app_tab2_CIFAR10_densenet}.

Being analogous to Tab.~\ref{table:bigcomparison}, we observe that in most cases matrix scaling performs the best at the accuracy, but fail to provide satisfactory calibration performance measured by ECEs, Brier scores and NLLs. Among the scaling methods, GP~\citep{wenger2020calibration} is the top performing one. Among the binning schemes, our proposal of I-Max binning outperforms Eq. Mass and Eq. Size at accuracies, ECEs, NLLs and Brier scores. The combination of I-Max binning with GP excels at the ECE performance.  
Note that, among all methods, Eq. Mass binning suffers from severe accuracy degradation after multi-class calibration. The reason behind Eq. Mass binning was discussed in Sec.~\ref{sec:mimax} of the main paper. Given the poor accuracy, it is not in the scope of calibration performance comparison.

We also observe that GP performs better at NLL/Brier than the I-Max variants. GP is trained by directly optimizing the NLL as its loss. As a non-parametric Bayesian method, GP has larger model expressive capacity than binning. While achieving better NLL/Brier, it costs significantly more computational complexity and memory. In contrast, I-Max only relies on logic comparisons at test time. Among the binning schemes, I-Max w. GP achieves the best NLL/Brier across the datasets and models. It is noted that I-Max w. GP remains to be a binning scheme. So, the combination does not change the model capacity of I-Max. GP is only exploited during training to improve the optimization on I-Max's bin representatives. Besides the low complexity benefit, I-Max w. GP as a binning scheme does not suffer from the ECE underestimation issue of scaling methods such as GP.

We further note that as a cross entropy measure between two distributions, the NLL would be an ideal metric for calibration evaluation. However, \emph{empirical} NLL and Brier favor high accuracy and high confident classifiers, as each sample only having one hard label essentially implies the maximum confidence on a single class.  
For the this reason, during training, the empirical NLL loss will keep pushing the prediction probability to one even after reaching $100\%$ training set accuracy. As a result, the trained classifier showed poor calibration performance at test time~\citep{pmlr-v70-guo17a}. In contrast to NLL/Brier, empirical ECEs use hard labels differently. The ground truth correctness associated to the prediction confidence $p$ is estimated by averaging over the hard labels of the samples receiving the prediction probability $p$ or close to $p$. Due to averaging, the empirical ground truth correctness is usually not a hard label. 
Lastly, we use a small example to show the difference between the NLL/Brier and ECE: for $N$ predictions, all assigned a confidence of $1.0$ and containing $M$ mistakes, the calibrated confidence is $M/N<1$. 
Unlike ECE, the NLL/Brier loss is only non-zero only for the $M$ wrong predictions, despite all $N$ predictions being miscalibrated. This example shows that NLL/Brier penalize miscalibration far less than ECE.

\begin{table}
\scriptsize
\setlength{\tabcolsep}{0.6em} 
\centering
\caption{Tab. 2 Extension: ImageNet - InceptionResnetV2}
\label{tab:app_tab2_ImageNet_inceptionresnet}
\begin{tabular}{l|cccccccc}
\rowcolor{verylightgray}
\toprule
                              Calibrator &                $\text{Acc}_{\text{top1}}$ $\uparrow$ &         $\text{Acc}_{\text{top5}}$ $\uparrow$ &          ${}_{\text{CW}}\text{ECE}_{\frac{1}{K}}$ $\downarrow$ &           ${}_{\text{top1}}\text{ECE}_{}$ $\downarrow$ &                                        NLL &              Brier \\
\midrule
                     Baseline &         80.33 $\pm$ 0.15 &    95.10 $\pm$ 0.15 &  0.0486 $\pm$ 0.0003 &  0.0357 $\pm$ 0.0009 &                          0.8406 $\pm$ 0.0095 &  0.1115 $\pm$ 0.0007 \\
                      \hline
           \rowcolor[gray]{.90} 
           & \multicolumn{6}{c}{25\unit{k} Calibration Samples}  \\
                    BBQ~\citep{Naeini2015} &        53.89 $\pm$ 0.30 &   88.63 $\pm$ 0.22 &  0.0287 $\pm$ 0.0009 &  0.2689 $\pm$ 0.0033 &                           1.7104 $\pm$ 0.0370 &  0.3273 $\pm$ 0.0016 \\
                     Beta~\cite{pmlr-v54-kull17a} &       80.47 $\pm$ 0.14 &   94.84 $\pm$ 0.15 &  0.0706 $\pm$ 0.0003 &  0.0346 $\pm$ 0.0022 &                           0.9038 $\pm$ 0.0270 &   0.1174 $\pm$ 0.0010 \\
             Isotonic Reg.~\cite{Zadrozny2002} &       80.08 $\pm$ 0.19 &    93.46 $\pm$ 0.20 &  0.0644 $\pm$ 0.0014 &   0.0468 $\pm$ 0.0020 &                          1.8375 $\pm$ 0.0587 &  0.1203 $\pm$ 0.0012 \\
          
                   Platt~\cite{Platt1999}  &       80.48 $\pm$ 0.14 &   95.18 $\pm$ 0.12 &  0.0597 $\pm$ 0.0007 &  0.0775 $\pm$ 0.0015 &                          0.8083 $\pm$ 0.0106 &   0.1205 $\pm$ 0.0010 \\
                      
                                            
            Vec Scal.~\cite{Kull2019} &       80.53 $\pm$ 0.19 &   95.18 $\pm$ 0.16 &  0.0494 $\pm$ 0.0002 &     0.0300 $\pm$ 0.0010 &                          0.8269 $\pm$ 0.0097 &  0.1106 $\pm$ 0.0007 \\
              Mtx Scal.~\cite{Kull2019} &      \textbf{80.78} $\pm$ 0.18 &  \textbf{95.38} $\pm$ 0.15 &  0.0508 $\pm$ 0.0003 &  0.0282 $\pm$ 0.0014 &                            0.8042 $\pm$ 0.0100 &   0.1090 $\pm$ 0.0006 \\
	BWS~\cite{binwiseTS} &  80.33 $\pm$ 0.16 &   95.10 $\pm$ 0.16 &  0.0561 $\pm$ 0.0008 &   0.044 $\pm$ 0.0019 &  0.8273 $\pm$ 0.0105 &  0.1129 $\pm$ 0.0009 \\
	ETS-MnM~\cite{zhang2020mix} & 80.33 $\pm$ 0.16 &   95.10 $\pm$ 0.16 &  0.0479 $\pm$ 0.0004 &  0.0358 $\pm$ 0.0009 &  0.8426 $\pm$ 0.0097 &  0.1115 $\pm$ 0.0008 \\
                       \hline
           \rowcolor[gray]{.90} 
           & \multicolumn{6}{c}{1\unit{k} Calibration Samples}  \\
         
                       TS~\cite{pmlr-v70-guo17a} &        80.33 $\pm$ 0.16 &    95.10 $\pm$ 0.16 &  0.0559 $\pm$ 0.0015 &  0.0439 $\pm$ 0.0022 &                          0.8293 $\pm$ 0.0107 &   0.1134 $\pm$ 0.0010 \\
                       
                                     GP~\cite{wenger2020calibration} &        80.33 $\pm$ 0.15 &   95.11 $\pm$ 0.15 &  0.0485 $\pm$ 0.0035 &  0.0186 $\pm$ 0.0034 &                          \textbf{0.7556} $\pm$ 0.0118 & \textbf{0.1069} $\pm$ 0.0007 \\
                                     
          Eq. Mass &          ~5.02 $\pm$ 0.13 &   26.75 $\pm$ 0.37 &  0.0022 $\pm$ 0.0001 &  0.0353 $\pm$ 0.0012 &                          3.5272 $\pm$ 0.0142 &  0.0489 $\pm$ 0.0012 \\
           
          Eq. Size &         80.14 $\pm$ 0.23 &   88.99 $\pm$ 0.12 &  0.1525 $\pm$ 0.0023 &  0.0279 $\pm$ 0.0043 &                           1.2671 $\pm$ 0.0130 &  0.1115 $\pm$ 0.0011 \\
           
         I-Max &          80.20 $\pm$ 0.18 &   94.86 $\pm$ 0.17 &  0.0302 $\pm$ 0.0041 &    0.0200 $\pm$ 0.0033 &                           0.7860 $\pm$ 0.0208 &  0.1116 $\pm$ 0.0008 \\
         
         \hline
          Eq. Mass w. TS &          5.02 $\pm$ 0.13 &      26.87 $\pm$ 0.43 &  0.0023 $\pm$ 0.0001 &  0.0357 $\pm$ 0.0012 &                          3.5454 $\pm$ 0.0222 &   0.0490 $\pm$ 0.0012 \\
         Eq. Mass w. GP &          5.02 $\pm$ 0.13 &   26.87 $\pm$ 0.43 &  0.0022 $\pm$ 0.0001 &  0.0353 $\pm$ 0.0012 &                          3.4778 $\pm$ 0.0217 &  0.0489 $\pm$ 0.0012 \\

          Eq. Size w. TS &         80.26 $\pm$ 0.18 &      88.99 $\pm$ 0.12 &   0.1470 $\pm$ 0.0007 &  0.0391 $\pm$ 0.0038 &                          1.2721 $\pm$ 0.0116 &  0.1136 $\pm$ 0.0012 \\
         Eq. Size w. GP &         80.26 $\pm$ 0.18 &   88.99 $\pm$ 0.12 &  0.1508 $\pm$ 0.0021 &   0.0140 $\pm$ 0.0056 &                          1.2661 $\pm$ 0.0121 &  0.1105 $\pm$ 0.0008 \\
         
           I-Max w. TS &          80.20 $\pm$ 0.18 &   94.87 $\pm$ 0.19 &  0.0354 $\pm$ 0.0124 &  0.0402 $\pm$ 0.0019 &                          0.8339 $\pm$ 0.0108 &  0.1142 $\pm$ 0.0009 \\
           I-Max w. GP &          80.20 $\pm$ 0.18 &   94.87 $\pm$ 0.19 &    \textbf{0.0300} $\pm$ 0.0041 &  \textbf{0.0121} $\pm$ 0.0048 &                          0.7787 $\pm$ 0.0102 &  0.1111 $\pm$ 0.0006 \\
           \hline
\bottomrule
\end{tabular}
\end{table}

\begin{table}
\scriptsize
\setlength{\tabcolsep}{0.6em} 
\centering
\caption{Tab. 2 Extension: ImageNet - DenseNet}
\label{tab:app_tab2_ImageNet_densenet}
\begin{tabular}{l|cccccccc}
\rowcolor{verylightgray}
\toprule
                               Calibrator &                $\text{Acc}_{\text{top1}}$ $\uparrow$ &         $\text{Acc}_{\text{top5}}$ $\uparrow$ &          ${}_{\text{CW}}\text{ECE}_{\frac{1}{K}}$ $\downarrow$ &           ${}_{\text{top1}}\text{ECE}_{}$ $\downarrow$ &                                        NLL &              Brier \\
\midrule
                          Baseline &         77.21 $\pm$ 0.12 &   93.51 $\pm$ 0.14 &  0.0502 $\pm$ 0.0006 &  0.0571 $\pm$ 0.0014 &                           0.9418 $\pm$ 0.0120 &  0.1228 $\pm$ 0.0009 \\
                           \hline
           \rowcolor[gray]{.90} 
           & \multicolumn{6}{c}{25\unit{k} Calibration Samples}  \\
                           BBQ~\citep{Naeini2015} &       54.69 $\pm$ 0.42 &   86.55 $\pm$ 0.19 &  0.0274 $\pm$ 0.0007 &   0.2819 $\pm$ 0.0050 &                            1.9805 $\pm$ 0.0500 &  0.3355 $\pm$ 0.0026 \\
                          Beta~\cite{pmlr-v54-kull17a} &       77.35 $\pm$ 0.22 &   93.34 $\pm$ 0.17 &  0.0494 $\pm$ 0.0008 &  0.0253 $\pm$ 0.0022 &                          0.9768 $\pm$ 0.0254 &   0.1209 $\pm$ 0.0010 \\
                   Isotonic Reg.~\cite{Zadrozny2002} &       76.81 $\pm$ 0.24 &   91.98 $\pm$ 0.17 &  0.0577 $\pm$ 0.0003 &   0.0490 $\pm$ 0.0021 &                          1.9819 $\pm$ 0.0634 &  0.1281 $\pm$ 0.0012 \\
                
                         Platt~\cite{Platt1999} &       77.43 $\pm$ 0.21 &   93.64 $\pm$ 0.15 &   0.0448 $\pm$ 0.0010 &  0.0906 $\pm$ 0.0022 &                          0.9168 $\pm$ 0.0139 &  0.1297 $\pm$ 0.0012 \\
                 Vec Scal.~\cite{Kull2019} &        77.44 $\pm$ 0.20 &   93.62 $\pm$ 0.17 &  0.0492 $\pm$ 0.0006 &  0.0516 $\pm$ 0.0018 &                          0.9276 $\pm$ 0.0134 &  0.1208 $\pm$ 0.0011 \\
                  Mtx Scal.~\cite{Kull2019} &       \textbf{77.56} $\pm$ 0.11 &   \textbf{93.81} $\pm$ 0.15 &  0.0498 $\pm$ 0.0006 &  0.0491 $\pm$ 0.0015 &                          0.9159 $\pm$ 0.0158 &  0.1202 $\pm$ 0.0016 \\
                   BWS~\cite{binwiseTS} &  77.21 $\pm$ 0.12 &  93.51 $\pm$ 0.14 &  0.0395 $\pm$ 0.0007 &  0.0301 $\pm$ 0.0012 &  0.9106 $\pm$ 0.0116 &  0.1197 $\pm$ 0.0008 \\
         ETS-MnM~\cite{zhang2020mix} &  77.21 $\pm$ 0.12 &  93.51 $\pm$ 0.14 &  0.0357 $\pm$ 0.0008 &  0.0234 $\pm$ 0.0011 &  0.9188 $\pm$ 0.0103 &  0.1194 $\pm$ 0.0006\\
                            \hline
           \rowcolor[gray]{.90} 
           & \multicolumn{6}{c}{1\unit{k} Calibration Samples}  \\
                           
                            TS~\cite{pmlr-v70-guo17a} &        77.21 $\pm$ 0.12 &   93.51 $\pm$ 0.15 &  0.0375 $\pm$ 0.0007 &    0.0300 $\pm$ 0.0019 &                           0.9116 $\pm$ 0.0110 &  0.1197 $\pm$ 0.0008 \\
                            
                             GP~\cite{wenger2020calibration} &        77.22 $\pm$ 0.12 &   93.51 $\pm$ 0.13 &  0.0394 $\pm$ 0.0037 &  0.0268 $\pm$ 0.0035 &                           \textbf{0.8914} $\pm$ 0.0120 &  \textbf{0.1188} $\pm$ 0.0005 \\
               Eq. Mass &         17.21 $\pm$ 0.47 &   45.69 $\pm$ 1.22 &  0.0054 $\pm$ 0.0004 &  0.1572 $\pm$ 0.0047 &                          2.9683 $\pm$ 0.0561 &  0.1671 $\pm$ 0.0046 \\
                
               Eq. Size &         77.06 $\pm$ 0.28 &    88.22 $\pm$ 0.10 &  0.1519 $\pm$ 0.0016 &    0.0230 $\pm$ 0.0050 &                          1.3948 $\pm$ 0.0105 &  0.1206 $\pm$ 0.0013 \\

              I-Max &         77.13 $\pm$ 0.14 &   93.34 $\pm$ 0.17 &  0.0263 $\pm$ 0.0119 &  0.0201 $\pm$ 0.0088 &                          0.9229 $\pm$ 0.0103 &   0.1201 $\pm$ 0.0010 \\
                \hline
               Eq. Mass w. TS &         17.21 $\pm$ 0.47 &     45.73 $\pm$ 1.07 &  0.0054 $\pm$ 0.0004 &  0.1571 $\pm$ 0.0047 &                          2.9104 $\pm$ 0.0482 &  0.1671 $\pm$ 0.0046 \\
                Eq. Mass w. GP &         17.21 $\pm$ 0.47 &   45.71 $\pm$ 1.08 &  0.0054 $\pm$ 0.0004 &  0.1571 $\pm$ 0.0047 &                           2.9090 $\pm$ 0.0485 &  0.1671 $\pm$ 0.0046 \\
                
                Eq. Size w. TS &         77.19 $\pm$ 0.12 &      88.22 $\pm$ 0.10 &  0.1464 $\pm$ 0.0005 &  0.0241 $\pm$ 0.0032 &                          1.3928 $\pm$ 0.0106 &  0.1201 $\pm$ 0.0008 \\
                Eq. Size w. GP &         77.19 $\pm$ 0.12 &    88.22 $\pm$ 0.10 &  0.1527 $\pm$ 0.0007 &  0.0215 $\pm$ 0.0037 &                          1.3944 $\pm$ 0.0094 &    0.1200 $\pm$ 0.0005 \\
                I-Max w. TS &         77.13 $\pm$ 0.14 &   93.34 $\pm$ 0.17 &   0.0320 $\pm$ 0.0026 &  0.0245 $\pm$ 0.0024 &                          0.9242 $\pm$ 0.0117 &  0.1201 $\pm$ 0.0007 \\
                I-Max w. GP &         77.13 $\pm$ 0.14 &   93.34 $\pm$ 0.17 &    \textbf{0.0258} $\pm$ 0.0100 &  \textbf{0.0204} $\pm$ 0.0021 &                            0.9200 $\pm$ 0.0124 &  0.1201 $\pm$ 0.0005 \\
                \hline
\bottomrule
\end{tabular}
\end{table}

\begin{table}
\scriptsize
\setlength{\tabcolsep}{0.6em} 
\centering
\caption{Tab. 2 Extension: ImageNet - ResNet152}
\label{tab:app_tab2_ImageNet_resnet152}
\begin{tabular}{l|cccccccc}
\rowcolor{verylightgray}
\toprule
                               Calibrator &                $\text{Acc}_{\text{top1}}$ $\uparrow$ &         $\text{Acc}_{\text{top5}}$ $\uparrow$ &          ${}_{\text{CW}}\text{ECE}_{\frac{1}{K}}$ $\downarrow$ &           ${}_{\text{top1}}\text{ECE}_{}$ $\downarrow$ &                                        NLL &              Brier \\
\midrule
                            Baseline &         78.33 $\pm$ 0.17 &    94.00 $\pm$ 0.14 &    0.05 $\pm$ 0.0004 &  0.0512 $\pm$ 0.0018 &                           0.8760 $\pm$ 0.0133 &  0.1174 $\pm$ 0.0013 \\
                             \hline
           \rowcolor[gray]{.90} 
           & \multicolumn{6}{c}{25\unit{k} Calibration Samples}  \\
                             BBQ~\citep{Naeini2015} &       55.04 $\pm$ 0.26 &   87.15 $\pm$ 0.21 &  0.0278 $\pm$ 0.0004 &   0.2840 $\pm$ 0.0028 &                           1.8490 $\pm$ 0.0474 &  0.3361 $\pm$ 0.0014 \\
                            Beta~\cite{pmlr-v54-kull17a} &       78.44 $\pm$ 0.16 &    93.71 $\pm$ 0.20 &  0.0507 $\pm$ 0.0012 &   0.0264 $\pm$ 0.0010 &                          0.9365 $\pm$ 0.0249 &  0.1174 $\pm$ 0.0013 \\
                     Isotonic Reg.~\cite{Zadrozny2002} &       77.97 $\pm$ 0.07 &   92.33 $\pm$ 0.32 &   0.0590 $\pm$ 0.0016 &  0.0486 $\pm$ 0.0027 &                           1.9437 $\pm$ 0.1020 &  0.1248 $\pm$ 0.0015 \\

                           Platt~\cite{Platt1999} &       78.56 $\pm$ 0.15 &   94.06 $\pm$ 0.19 &  0.0458 $\pm$ 0.0009 &  0.0852 $\pm$ 0.0021 &                          0.8557 $\pm$ 0.0159 &  0.1246 $\pm$ 0.0015 \\
                   Vec Scal.~\cite{Kull2019} &       \textbf{78.61} $\pm$ 0.21 &   94.12 $\pm$ 0.18 &   0.0490 $\pm$ 0.0003 &  0.0469 $\pm$ 0.0017 &                          0.8625 $\pm$ 0.0143 &  0.1159 $\pm$ 0.0012 \\
                   Mtx Scal.~\cite{Kull2019} &       78.54 $\pm$ 0.23 &  \textbf{94.14} $\pm$ 0.22 &  0.0496 $\pm$ 0.0004 &  0.0443 $\pm$ 0.0026 &                           0.8583 $\pm$ 0.0180 &   0.1160 $\pm$ 0.0016 \\
      BWS~\cite{binwiseTS} &  78.33 $\pm$ 0.18 &   94.00 $\pm$ 0.15 &  0.0402 $\pm$ 0.0005 &  0.0277 $\pm$ 0.0019 &  0.8488 $\pm$ 0.0127 &  0.1147 $\pm$ 0.0012 \\
          ETS-MnM~\cite{zhang2020mix} &  78.33 $\pm$ 0.18 &   94.00 $\pm$ 0.15 &  0.0366 $\pm$ 0.0007 &  0.0198 $\pm$ 0.0006 &  0.8609 $\pm$ 0.0117 &  0.1145 $\pm$ 0.0011 \\
\bottomrule
                             \hline
           \rowcolor[gray]{.90} 
           & \multicolumn{6}{c}{1\unit{k} Calibration Samples}  \\
                             
                              TS~\cite{pmlr-v70-guo17a} &        78.33 $\pm$ 0.18 &    94.00 $\pm$ 0.15 &  0.0378 $\pm$ 0.0007 &  0.0285 $\pm$ 0.0023 &                          0.8505 $\pm$ 0.0126 &  0.1147 $\pm$ 0.0012 \\
                              
                               GP~\cite{wenger2020calibration} &        78.33 $\pm$ 0.17 &    94.00 $\pm$ 0.14 &  0.0403 $\pm$ 0.0021 &   0.0202 $\pm$ 0.0030 &  \textbf{0.8366}$\pm$ 0.0118 &  \textbf{0.1138} $\pm$ 0.0012 \\
                 Eq. Mass &         16.25 $\pm$ 0.54 &   45.53 $\pm$ 0.81 &  0.0064 $\pm$ 0.0004 &  0.1476 $\pm$ 0.0054 &                          2.9471 $\pm$ 0.0556 &  0.1579 $\pm$ 0.0052 \\
                  
                 Eq. Size &         78.24 $\pm$ 0.16 &   88.81 $\pm$ 0.19 &   0.1480 $\pm$ 0.0015 &  0.0286 $\pm$ 0.0053 &                          1.3308 $\pm$ 0.0178 &  0.1167 $\pm$ 0.0011 \\
                  
                I-Max &         78.19 $\pm$ 0.21 &   93.82 $\pm$ 0.17 &   0.0295 $\pm$ 0.0030 &  0.0196 $\pm$ 0.0049 &                          0.8638 $\pm$ 0.0135 &  0.1157 $\pm$ 0.0012 \\
                 \hline
                  Eq. Mass w. TS &         16.25 $\pm$ 0.54 &      45.54 $\pm$ 0.71 &  0.0064 $\pm$ 0.0004 &  0.1476 $\pm$ 0.0054 &                          2.9024 $\pm$ 0.0401 &  0.1579 $\pm$ 0.0052 \\
                 Eq. Mass w. GP &         16.25 $\pm$ 0.54 &   45.52 $\pm$ 0.74 &  0.0064 $\pm$ 0.0004 &  0.1475 $\pm$ 0.0054 &                            2.9021 $\pm$ 0.040 &  0.1579 $\pm$ 0.0052 \\
                 
                  Eq. Size w. TS &         78.27 $\pm$ 0.17 &      88.81 $\pm$ 0.19 &  0.1428 $\pm$ 0.0007 &  0.0225 $\pm$ 0.0022 &                          1.3286 $\pm$ 0.0171 &  0.1153 $\pm$ 0.0013 \\
                 Eq. Size w. GP &         78.27 $\pm$ 0.17 &   88.81 $\pm$ 0.19 &  0.1475 $\pm$ 0.0016 &  0.0138 $\pm$ 0.0049 &                            1.330 $\pm$ 0.0171 &   0.1150 $\pm$ 0.0012 \\
                 I-Max w. TS &         78.19 $\pm$ 0.21 &   93.82 $\pm$ 0.17 &  \textbf{0.0281} $\pm$ 0.0029 &  0.0219 $\pm$ 0.0016 &                          0.8637 $\pm$ 0.0125 &  0.1152 $\pm$ 0.0015 \\
                  I-Max w. GP &         78.19 $\pm$ 0.21 &   93.82 $\pm$ 0.17 &  0.0296 $\pm$ 0.0029 &   \textbf{0.0144} $\pm$ 0.0050 &                          0.8602 $\pm$ 0.0127 &   0.1150 $\pm$ 0.0014 \\
                  \hline
\bottomrule
\end{tabular}
\end{table}

\begin{table}
\footnotesize
\setlength{\tabcolsep}{0.4em} 
\renewcommand{\arraystretch}{1.1}
\centering
\caption{Tab. 2 Extension: CIFAR100 - WRN}
\label{tab:app_tab2_CIFAR100_wrn}
\begin{tabular}{l|ccccccc}
\rowcolor{verylightgray}
\toprule
                         Calibrator &                $\text{Acc}_{\text{top1}}$ $\uparrow$ &          ${}_{\text{CW}}\text{ECE}_{\frac{1}{K}}$ $\downarrow$ &           ${}_{\text{top1}}\text{ECE}_{}$ $\downarrow$ &                              NLL &              Brier \\
\midrule
                            Baseline &         81.35 $\pm$ 0.13 &   0.1113 $\pm$ 0.0010&  0.0748 $\pm$ 0.0018 &                0.7816 $\pm$ 0.0076 &  0.1082 $\pm$ 0.0021 \\
                             \hline
           \rowcolor[gray]{.90} 
           & \multicolumn{5}{c}{5\unit{k} Calibration Samples}  \\
                             BBQ~\citep{Naeini2015} &        80.44 $\pm$ 0.19 &  0.0576 $\pm$ 0.0018 &  0.0672 $\pm$ 0.0044 &                1.7976 $\pm$ 0.0443 &  0.1297 $\pm$ 0.0019 \\
                            Beta~\cite{pmlr-v54-kull17a} &        81.44 $\pm$ 0.17 &  0.0952 $\pm$ 0.0006 &  0.0379 $\pm$ 0.0027 &                0.7624 $\pm$ 0.0148 &  0.1018 $\pm$ 0.0016 \\
                     Isotonic Reg.~\cite{Zadrozny2002} &        81.25 $\pm$ 0.27 &  0.0597 $\pm$ 0.0029 &   0.0487 $\pm$ 0.0040 &                1.4015 $\pm$ 0.0748 &  0.1059 $\pm$ 0.0013 \\
                  
                           Platt~\cite{Platt1999} &        81.35 $\pm$ 0.12 &  0.0827 $\pm$ 0.0014 &  0.0585 $\pm$ 0.0038 &                0.7491 $\pm$ 0.0073 &  0.1026 $\pm$ 0.0017 \\
                   Vec Scal.~\cite{Kull2019} &        81.35 $\pm$ 0.21 &  0.1063 $\pm$ 0.0013 &  0.0687 $\pm$ 0.0029 &                0.7619 $\pm$ 0.0064 &  0.1055 $\pm$ 0.0017 \\
 Mtx Scal.~\cite{Kull2019} &        \textbf{81.44} $\pm$ 0.20 &  0.1085 $\pm$ 0.0008 &  0.0692 $\pm$ 0.0033 &                0.7531 $\pm$ 0.0078 &  0.1059 $\pm$ 0.0019 \\
 	BWS~\cite{binwiseTS} &  81.35 $\pm$ 0.14 &  0.1069 $\pm$ 0.0009 &  0.0451 $\pm$ 0.0028 &   0.737 $\pm$ 0.0057 &  0.1037 $\pm$ 0.0017 \\
   ETS-MnM~\cite{zhang2020mix} &  81.35 $\pm$ 0.14 &  0.0976 $\pm$ 0.0019 &  0.0451 $\pm$ 0.0027 &  0.7695 $\pm$ 0.0052 &   0.1027 $\pm$ 0.0020 \\
                              \hline
           \rowcolor[gray]{.90} 
           & \multicolumn{5}{c}{1\unit{k} Calibration Samples}  \\
                             
                              TS~\cite{pmlr-v70-guo17a} &        81.35 $\pm$ 0.14 &  0.0911 $\pm$ 0.0036 &  0.0511 $\pm$ 0.0059 &                0.7527 $\pm$ 0.0074 &  0.1036 $\pm$ 0.0025 \\
                 
                  GP~\cite{wenger2020calibration} &        81.34 $\pm$ 0.12 &  0.1074 $\pm$ 0.0043 &  0.0358 $\pm$ 0.0039 &               \textbf{0.6943} $\pm$ 0.0025 &  \textbf{0.0996} $\pm$ 0.0019 \\
                                   Eq. Mass &         62.04 $\pm$ 0.53 &  0.0252 $\pm$ 0.0032 &  0.4744 $\pm$ 0.0049 &                1.1789 $\pm$ 0.0308 &  0.4606 $\pm$ 0.0034 \\
              
                 Eq. Size &         81.12 $\pm$ 0.15 &   0.1229 $\pm$ 0.0030 &  0.0273 $\pm$ 0.0055 &                1.0165 $\pm$ 0.0105 &  0.1039 $\pm$ 0.0017 \\

                I-Max &          81.30 $\pm$ 0.22 &  0.0518 $\pm$ 0.0036 &  0.0231 $\pm$ 0.0067 &                0.7593 $\pm$ 0.0085 &  0.1016 $\pm$ 0.0018 \\ \hline
                
                                 Eq. Mass w. TS &         62.04 $\pm$ 0.53 &  0.0253 $\pm$ 0.0034 &  0.4764 $\pm$ 0.0052 &                 1.0990 $\pm$ 0.0184 &  0.4624 $\pm$ 0.0037 \\
                Eq. Mass w. GP &         62.04 $\pm$ 0.53 &  0.0252 $\pm$ 0.0032 &  0.4749 $\pm$ 0.0051 &                 1.1110 $\pm$ 0.0226 &   0.4610 $\pm$ 0.0036 \\
                Eq. Size w. TS &         81.31 $\pm$ 0.15 &  0.1197 $\pm$ 0.0029 &  0.0362 $\pm$ 0.0065 &                1.0106 $\pm$ 0.0113 &  0.1038 $\pm$ 0.0026 \\
                Eq. Size w. GP &         81.31 $\pm$ 0.15 &  0.1205 $\pm$ 0.0025 &  0.0189 $\pm$ 0.0054 &                1.0161 $\pm$ 0.0115 &   0.1032 $\pm$ 0.0020 \\
                  I-Max w. TS &          81.34 $\pm$ 0.20 &  \textbf{0.051} $\pm$ 0.0035 &  0.0365 $\pm$ 0.0067 &                0.7716 $\pm$ 0.0066 &  0.1025 $\pm$ 0.0021 \\
                  I-Max w. GP &          81.34 $\pm$ 0.20 &  0.0559 $\pm$ 0.0089 &  \textbf{0.0179} $\pm$ 0.0046 &                 0.7609 $\pm$ 0.0080&  0.1014 $\pm$ 0.0014 \\
                  \hline
\bottomrule
\end{tabular}
\end{table}

\begin{table}
\footnotesize
\setlength{\tabcolsep}{0.4em} 
\renewcommand{\arraystretch}{1.1}
\centering
\caption{Tab. 2 Extension: CIFAR100 - ResNeXt}
\label{tab:app_tab2_CIFAR100_resnext}
\begin{tabular}{l|ccccccc}
\rowcolor{verylightgray}
\toprule
                         Calibrator &                $\text{Acc}_{\text{top1}}$ $\uparrow$ &          ${}_{\text{CW}}\text{ECE}_{\frac{1}{K}}$ $\downarrow$ &           ${}_{\text{top1}}\text{ECE}_{}$ $\downarrow$ &                              NLL &              Brier \\
\midrule
                 Baseline &         81.93 $\pm$ 0.08 &  0.0979 $\pm$ 0.0015 &   0.0590 $\pm$ 0.0028 &                0.7271 $\pm$ 0.0026 &  0.0984 $\pm$ 0.0022 \\
                  \hline
           \rowcolor[gray]{.90} 
           & \multicolumn{5}{c}{5\unit{k} Calibration Samples}  \\
                  BBQ~\citep{Naeini2015} &         81.06 $\pm$ 0.30 &  0.0564 $\pm$ 0.0013 &  0.0608 $\pm$ 0.0058 &                1.6878 $\pm$ 0.0546 &  0.1176 $\pm$ 0.0022 \\
                 Beta~\cite{pmlr-v54-kull17a} &        82.19 $\pm$ 0.31 &   0.0918 $\pm$ 0.0020 &  0.0368 $\pm$ 0.0047 &                0.7095 $\pm$ 0.0074 &  0.0947 $\pm$ 0.0024 \\
          Isotonic Reg.~\cite{Zadrozny2002} &        81.89 $\pm$ 0.19 &  0.0619 $\pm$ 0.0023 &  0.0503 $\pm$ 0.0036 &                1.3015 $\pm$ 0.0656 &  0.0995 $\pm$ 0.0018 \\
      
                Platt~\cite{Platt1999} &        82.28 $\pm$ 0.21 &   0.0790 $\pm$ 0.0025 &  0.0534 $\pm$ 0.0047 &                 0.7050 $\pm$ 0.0045 &  0.0961 $\pm$ 0.0026 \\
        Vec Scal.~\cite{Kull2019} &        82.24 $\pm$ 0.27 &  0.0963 $\pm$ 0.0013 &  0.0572 $\pm$ 0.0037 &                0.7129 $\pm$ 0.0053 &  0.0973 $\pm$ 0.0021 \\
        Mtx Scal.~\cite{Kull2019} &        \textbf{82.38} $\pm$ 0.17 &   0.0970 $\pm$ 0.0014 &   0.0578 $\pm$ 0.0040 &                0.7042 $\pm$ 0.0046 &  0.0973 $\pm$ 0.0023 \\
      BWS~\cite{binwiseTS} &  81.93 $\pm$ 0.08 &  0.1045 $\pm$ 0.0015 &  0.0448 $\pm$ 0.0044 &  0.6897 $\pm$ 0.0031 &  0.0969 $\pm$ 0.0017 \\
   ETS-MnM~\cite{zhang2020mix} &  81.93 $\pm$ 0.08 &   0.0932 $\pm$ 0.0020 &    0.0460 $\pm$ 0.001 &  0.7284 $\pm$ 0.0029 &  0.0963 $\pm$ 0.0022 \\
                   \hline
           \rowcolor[gray]{.90} 
           & \multicolumn{5}{c}{1\unit{k} Calibration Samples}  \\
                   
                   TS~\cite{pmlr-v70-guo17a} &        81.93 $\pm$ 0.08 &  0.0864 $\pm$ 0.0036 &  0.0525 $\pm$ 0.0057 &                0.7163 $\pm$ 0.0037 &   0.0975 $\pm$ 0.0020 \\
                   
                   GP~\cite{wenger2020calibration} &        81.93 $\pm$ 0.09 &  0.1025 $\pm$ 0.0037 &  0.0345 $\pm$ 0.0038 &                \textbf{0.6456} $\pm$ 0.0071 &  \textbf{0.0927} $\pm$ 0.0019 \\
      Eq. Mass &         64.48 $\pm$ 0.64 &  0.0265 $\pm$ 0.0011 &    0.4980 $\pm$ 0.0070 &                1.1232 $\pm$ 0.0277 &   0.4770 $\pm$ 0.0051 \\
      
      Eq. Size &         81.99 $\pm$ 0.21 &  0.1195 $\pm$ 0.0013 &   0.0230 $\pm$ 0.0033 &                0.9556 $\pm$ 0.0071 &  0.0974 $\pm$ 0.0014 \\
      
           I-Max &         81.96 $\pm$ 0.14 &  0.0549 $\pm$ 0.0081 &  0.0205 $\pm$ 0.0074 &                 0.7127 $\pm$ 0.0040 &  0.0959 $\pm$ 0.0018 \\
           \hline
      Eq. Mass w. TS &         64.48 $\pm$ 0.64 &  0.0262 $\pm$ 0.0013 &  0.5003 $\pm$ 0.0066 &                1.0468 $\pm$ 0.0228 &  0.4793 $\pm$ 0.0048 \\
      Eq. Mass w. GP &         64.48 $\pm$ 0.64 &  0.0264 $\pm$ 0.0012 &  0.4986 $\pm$ 0.0066 &                1.0555 $\pm$ 0.0227 &  0.4776 $\pm$ 0.0048 \\
      Eq. Size w. TS &         81.94 $\pm$ 0.09 &  0.1179 $\pm$ 0.0015 &  0.0343 $\pm$ 0.0029 &                0.9498 $\pm$ 0.0058 &  0.0968 $\pm$ 0.0022 \\
      Eq. Size w. GP &         81.94 $\pm$ 0.09 &  0.1177 $\pm$ 0.0009 &  0.0151 $\pm$ 0.0029 &                0.9561 $\pm$ 0.0056 &  0.0959 $\pm$ 0.0018 \\
       I-Max w. TS &         81.96 $\pm$ 0.14 &   \textbf{0.053} $\pm$ 0.0073 &  0.0333 $\pm$ 0.0023 &                0.7286 $\pm$ 0.0029 &  0.0964 $\pm$ 0.0019 \\
       I-Max w. GP &         81.96 $\pm$ 0.14 &  0.0532 $\pm$ 0.0077 & \textbf{0.0121} $\pm$ 0.0026 &                0.7111 $\pm$ 0.0024 &   0.0950 $\pm$ 0.0017 \\

       \hline
\bottomrule
\end{tabular}
\end{table}

\begin{table}
\footnotesize
\setlength{\tabcolsep}{0.4em} 
\renewcommand{\arraystretch}{1.1}
\centering
\caption{Tab. 2 Extension Dataset: CIFAR100 - DenseNet}
\label{tab:app_tab2_CIFAR100_densenet}
\begin{tabular}{l|ccccccc}
\rowcolor{verylightgray}
\toprule
                         Calibrator &                $\text{Acc}_{\text{top1}}$ $\uparrow$ &          ${}_{\text{CW}}\text{ECE}_{\frac{1}{K}}$ $\downarrow$ &           ${}_{\text{top1}}\text{ECE}_{}$ $\downarrow$ &                              NLL &              Brier \\
\midrule
                      Baseline &         82.36 $\pm$ 0.26 &  0.1223 $\pm$ 0.0008 &  0.0762 $\pm$ 0.0015 &                0.7542 $\pm$ 0.0143 &  0.1041 $\pm$ 0.0008 \\
                       \hline
           \rowcolor[gray]{.90} 
           & \multicolumn{5}{c}{5\unit{k} Calibration Samples}  \\
                       BBQ~\citep{Naeini2015} &        81.56 $\pm$ 0.22 &   0.0567 $\pm$ 0.0020 &  0.0635 $\pm$ 0.0052 &                1.5876 $\pm$ 0.0914 &  0.1216 $\pm$ 0.0026 \\
                      Beta~\cite{pmlr-v54-kull17a} &        82.39 $\pm$ 0.28 &  0.0953 $\pm$ 0.0013 &  0.0364 $\pm$ 0.0034 &                0.6935 $\pm$ 0.0185 &  0.0966 $\pm$ 0.0008 \\
               Isotonic Reg.~\cite{Zadrozny2002} &        82.05 $\pm$ 0.26 &  0.0591 $\pm$ 0.0016 &  0.0506 $\pm$ 0.0025 &                 1.3030 $\pm$ 0.1107 &  0.1019 $\pm$ 0.0014 \\
             
                     Platt~\cite{Platt1999} &        82.34 $\pm$ 0.28 &  0.0866 $\pm$ 0.0012 &  0.0491 $\pm$ 0.0012 &                0.6835 $\pm$ 0.0138 &  0.0969 $\pm$ 0.0015 \\
             Vec Scal.~\cite{Kull2019} &        82.38 $\pm$ 0.32 &  0.1195 $\pm$ 0.0005 &  0.0711 $\pm$ 0.0015 &                0.7362 $\pm$ 0.0173 &  0.1028 $\pm$ 0.0015 \\
             Mtx Scal.~\cite{Kull2019} &      \textbf{82.53} $\pm$ 0.19 &  0.1214 $\pm$ 0.0006 &  0.0733 $\pm$ 0.0013 &                 0.7360 $\pm$ 0.0153 &  0.1025 $\pm$ 0.0015 \\
          BWS~\cite{binwiseTS} &  82.36 $\pm$ 0.27 &  0.1028 $\pm$ 0.0013 &  0.0445 $\pm$ 0.0021 &   0.682 $\pm$ 0.0125 &  0.0975 $\pm$ 0.0008 \\
      ETS-MnM~\cite{zhang2020mix} &  82.36 $\pm$ 0.27 &  0.1007 $\pm$ 0.0016 &  0.0387 $\pm$ 0.0012 &  0.6986 $\pm$ 0.0111 &  0.0969 $\pm$ 0.0008 \\
                        \hline
           \rowcolor[gray]{.90} 
           & \multicolumn{5}{c}{1\unit{k} Calibration Samples}  \\
                       
                        TS~\cite{pmlr-v70-guo17a} &        82.36 $\pm$ 0.27 &  0.0938 $\pm$ 0.0017 &  0.0447 $\pm$ 0.0023 &                0.6851 $\pm$ 0.0115 &  0.0976 $\pm$ 0.0008 \\
                        
                         GP~\cite{wenger2020calibration} &        82.35 $\pm$ 0.27 &  0.1021 $\pm$ 0.0032 &  0.0338 $\pm$ 0.0011 &                \textbf{0.6536} $\pm$ 0.0120 &  \textbf{0.0943} $\pm$ 0.0007 \\
                         
           Eq. Mass &         58.11 $\pm$ 0.21 &  0.0233 $\pm$ 0.0005 &  0.4339 $\pm$ 0.0024 &                1.2049 $\pm$ 0.0405 &  0.4317 $\pm$ 0.0017 \\
            
           Eq. Size &          82.22 $\pm$ 0.30 &  0.1192 $\pm$ 0.0024 &  0.0219 $\pm$ 0.0021 &                0.9482 $\pm$ 0.0137 &  0.0997 $\pm$ 0.0014 \\
          
          I-Max &         82.32 $\pm$ 0.22 &  0.0546 $\pm$ 0.0122 &  0.0189 $\pm$ 0.0071 &                0.7022 $\pm$ 0.0124 &  0.0967 $\pm$ 0.0019 \\
          \hline
            Eq. Mass w. TS &         58.11 $\pm$ 0.21 &  0.0233 $\pm$ 0.0006 &  0.4347 $\pm$ 0.0024 &                1.1483 $\pm$ 0.0102 &  0.4324 $\pm$ 0.0017 \\
          Eq. Mass w. GP &         58.11 $\pm$ 0.21 &  0.0233 $\pm$ 0.0005 &  0.4342 $\pm$ 0.0024 &                1.1508 $\pm$ 0.0099 &  0.4319 $\pm$ 0.0018 \\
          Eq. Size w. TS &          82.40 $\pm$ 0.24 &  0.1134 $\pm$ 0.0014 &  0.0245 $\pm$ 0.0025 &                0.9427 $\pm$ 0.0137 &  0.0986 $\pm$ 0.0013 \\
          Eq. Size w. GP &          82.40 $\pm$ 0.24 &  0.1166 $\pm$ 0.0021 &  0.0126 $\pm$ 0.0012 &                0.9455 $\pm$ 0.0142 &  0.0985 $\pm$ 0.0013 \\
            I-Max w. TS &         82.36 $\pm$ 0.21 &   \textbf{0.048} $\pm$ 0.0090 &  0.0237 $\pm$ 0.0009 &                 0.7040 $\pm$ 0.0104 &   0.0967 $\pm$ 0.0010 \\
            I-Max w. GP &         82.36 $\pm$ 0.21 &  0.0535 $\pm$ 0.0121 &  \textbf{0.0114} $\pm$ 0.0025 &                0.6988 $\pm$ 0.0104 &   0.0964 $\pm$ 0.0010 \\
            \hline
\bottomrule
\end{tabular}
\end{table}

\begin{table}
\footnotesize
\setlength{\tabcolsep}{0.4em} 
\renewcommand{\arraystretch}{1.1}
\centering
\caption{Tab. 2 Extension: CIFAR10 - WRN}
\label{tab:app_tab2_CIFAR10_wrn}
\begin{tabular}{l|ccccccc}
\rowcolor{verylightgray}
\toprule
                         Calibrator &                $\text{Acc}_{\text{top1}}$ $\uparrow$ &          ${}_{\text{CW}}\text{ECE}_{\frac{1}{K}}$ $\downarrow$ &           ${}_{\text{top1}}\text{ECE}_{}$ $\downarrow$ &                              NLL &              Brier \\
\midrule
                            Baseline &         96.12 $\pm$ 0.14 &  0.0457 $\pm$ 0.0011 &  0.0288 $\pm$ 0.0007 &                0.1682 $\pm$ 0.0062 &  0.0307 $\pm$ 0.0008 \\
                             \hline
           \rowcolor[gray]{.90} 
           & \multicolumn{5}{c}{5\unit{k} Calibration Samples}  \\
                             BBQ~\citep{Naeini2015} &        95.98 $\pm$ 0.15 &   0.0290 $\pm$ 0.0047 &  0.0198 $\pm$ 0.0044 &                0.2054 $\pm$ 0.0156 &  0.0314 $\pm$ 0.0005 \\
                            Beta~\cite{pmlr-v54-kull17a} &        96.31 $\pm$ 0.06 &  0.0504 $\pm$ 0.0015 &  0.0208 $\pm$ 0.0023 &                0.1335 $\pm$ 0.0039 &  0.0271 $\pm$ 0.0007 \\
                     Isotonic Reg.~\cite{Zadrozny2002} &         96.20 $\pm$ 0.12 &  0.0241 $\pm$ 0.0021 &  0.0138 $\pm$ 0.0017 &                0.1764 $\pm$ 0.0241 &  0.0273 $\pm$ 0.0005 \\
                  
                           Platt~\cite{Platt1999} &        96.24 $\pm$ 0.09 &  0.0489 $\pm$ 0.0011 &  0.0177 $\pm$ 0.0015 &                0.1359 $\pm$ 0.0039 &   0.0270 $\pm$ 0.0006 \\
                   Vec Scal.~\cite{Kull2019} &        \textbf{96.27} $\pm$ 0.11 &  0.0449 $\pm$ 0.0008 &  0.0229 $\pm$ 0.0008 &                 0.1437 $\pm$ 0.0050 &  0.0286 $\pm$ 0.0007 \\
                         Mtx Scal.~\cite{Kull2019} &          96.20 $\pm$ 0.10 &  0.0444 $\pm$ 0.0005 &  0.0277 $\pm$ 0.0007 &                0.1625 $\pm$ 0.0062 &  0.0302 $\pm$ 0.0008 \\  
        BWS~\cite{binwiseTS} &  96.12 $\pm$ 0.14 &  0.0467 $\pm$ 0.0012 &  0.0195 $\pm$ 0.0014 &  0.1395 $\pm$ 0.0077 &  0.0279 $\pm$ 0.0007 \\
        ETS-MnM~\cite{zhang2020mix} &  96.12 $\pm$ 0.14 &  0.0647 $\pm$ 0.0014 &  0.0329 $\pm$ 0.0012 &  0.1478 $\pm$ 0.0038 &   0.0270 $\pm$ 0.0006 \\
                              \hline
           \rowcolor[gray]{.90} 
           & \multicolumn{5}{c}{1\unit{k} Calibration Samples}  \\
                              
                              TS~\cite{pmlr-v70-guo17a} &        96.12 $\pm$ 0.14 &  0.0486 $\pm$ 0.0024 &  0.0205 $\pm$ 0.0009 &                0.1385 $\pm$ 0.0048 &  0.0278 $\pm$ 0.0007 \\
                              
                              GP~\cite{wenger2020calibration} &         96.10 $\pm$ 0.13 &  0.0549 $\pm$ 0.0021 &  0.0146 $\pm$ 0.0022 &               \textbf{0.1281} $\pm$ 0.0055 &  0.0269 $\pm$ 0.0009 \\
                              
                 Eq. Mass &         91.24 $\pm$ 0.27 &  0.0212 $\pm$ 0.0009 &  0.0836 $\pm$ 0.0091 &                 0.2252 $\pm$ 0.0220 &  0.0858 $\pm$ 0.0055 \\
                  
                 Eq. Size &         96.04 $\pm$ 0.15 &  0.0278 $\pm$ 0.0021 &  0.0105 $\pm$ 0.0028 &                0.2744 $\pm$ 0.0812 &  0.0305 $\pm$ 0.0015 \\                 
                 
                I-Max &         96.06 $\pm$ 0.13 &  0.0304 $\pm$ 0.0012 &  0.0113 $\pm$ 0.0039 &                0.1595 $\pm$ 0.0604 &  0.0274 $\pm$ 0.0013 \\ \hline
                
                Eq. Mass w. TS &         91.24 $\pm$ 0.27 &  0.0219 $\pm$ 0.0005 &  0.0837 $\pm$ 0.0092 &                0.1944 $\pm$ 0.0093 &  0.0853 $\pm$ 0.0054 \\
                Eq. Mass w. GP &         91.24 $\pm$ 0.27 &  0.0212 $\pm$ 0.0008 &  0.0821 $\pm$ 0.0088 &                0.1918 $\pm$ 0.0091 &  0.0851 $\pm$ 0.0054 \\
                Eq. Size w. TS &         96.13 $\pm$ 0.12 &  0.0286 $\pm$ 0.0018 &  0.0125 $\pm$ 0.0024 &                 0.1940 $\pm$ 0.0063 &  0.0296 $\pm$ 0.0009 \\
                Eq. Size w. GP &         96.13 $\pm$ 0.11 &  \textbf{0.0266} $\pm$ 0.0016 &  \textbf{0.0066} $\pm$ 0.0028 &                0.1917 $\pm$ 0.0058 &  0.0292 $\pm$ 0.0009 \\
                  I-Max w. TS &         96.14 $\pm$ 0.13 &   0.0293 $\pm$ 0.0010 &  0.0163 $\pm$ 0.0012 &                0.1417 $\pm$ 0.0047 &   0.0280 $\pm$ 0.0008 \\
                  I-Max w. GP &         96.14 $\pm$ 0.13 &  0.0276 $\pm$ 0.0011 &  0.0074 $\pm$ 0.0035 &                0.1331 $\pm$ 0.0042 &  \textbf{0.0268} $\pm$ 0.0008 \\
                  \hline
\bottomrule
\end{tabular}
\end{table}

\begin{table}
\footnotesize
\setlength{\tabcolsep}{0.4em} 
\renewcommand{\arraystretch}{1.1}
\centering
\caption{Tab. 2 Extension: CIFAR10 - ResNeXt}
\label{tab:app_tab2_CIFAR10_resnext}
\begin{tabular}{l|ccccccc}
\rowcolor{verylightgray}
\toprule
                         Calibrator &                $\text{Acc}_{\text{top1}}$ $\uparrow$ &          ${}_{\text{CW}}\text{ECE}_{\frac{1}{K}}$ $\downarrow$ &           ${}_{\text{top1}}\text{ECE}_{}$ $\downarrow$ &                              NLL &              Brier \\
\midrule
                 Baseline &          96.30 $\pm$ 0.18 &  0.0485 $\pm$ 0.0014 &  0.0201 $\pm$ 0.0021 &                0.1247 $\pm$ 0.0058 &  0.0266 $\pm$ 0.0013 \\
                  \hline
           \rowcolor[gray]{.90} 
           & \multicolumn{5}{c}{5\unit{k} Calibration Samples}  \\
                  BBQ~\citep{Naeini2015} &        96.18 $\pm$ 0.12 &  0.0256 $\pm$ 0.0027 &   0.0166 $\pm$ 0.0020 &                0.1951 $\pm$ 0.0134 &  0.0286 $\pm$ 0.0004 \\
                 Beta~\cite{pmlr-v54-kull17a} &        96.31 $\pm$ 0.22 &  0.0517 $\pm$ 0.0011 &  0.0148 $\pm$ 0.0016 &                 0.1163 $\pm$ 0.0040 &  0.0256 $\pm$ 0.0011 \\
          Isotonic Reg.~\cite{Zadrozny2002} &         96.35 $\pm$ 0.20 &  0.0241 $\pm$ 0.0016 &  0.0129 $\pm$ 0.0008 &                0.1686 $\pm$ 0.0099 &  0.0264 $\pm$ 0.0011 \\
       
                Platt~\cite{Platt1999} &        96.34 $\pm$ 0.19 &  0.0511 $\pm$ 0.0008 &  0.0143 $\pm$ 0.0017 &                0.1159 $\pm$ 0.0042 &  0.0256 $\pm$ 0.0011 \\
        Vec Scal.~\cite{Kull2019} &        \textbf{96.37} $\pm$ 0.19 &  0.0495 $\pm$ 0.0017 &  0.0161 $\pm$ 0.0017 &                0.1189 $\pm$ 0.0053 &  0.0258 $\pm$ 0.0013 \\
            Mtx Scal.~\cite{Kull2019} &        96.34 $\pm$ 0.21 &   0.0492 $\pm$ 0.0020 &   0.0187 $\pm$ 0.0020 &                 0.1225 $\pm$ 0.0060 &  0.0263 $\pm$ 0.0014 \\ 
                 BWS~\cite{binwiseTS} &   96.3 $\pm$ 0.19 &  0.0514 $\pm$ 0.0013 &   0.015 $\pm$ 0.0008 &  0.1199 $\pm$ 0.0048 &  0.0257 $\pm$ 0.0012 \\
   ETS-MnM~\cite{zhang2020mix} &   96.3 $\pm$ 0.19 &  0.0547 $\pm$ 0.0013 &  0.0159 $\pm$ 0.0027 &  0.1193 $\pm$ 0.0043 &  0.0257 $\pm$ 0.0011 \\   
                   \hline
           \rowcolor[gray]{.90} 
           & \multicolumn{5}{c}{1\unit{k} Calibration Samples}  \\
                  
                   TS~\cite{pmlr-v70-guo17a} &         96.30 $\pm$ 0.19 &  0.0524 $\pm$ 0.0028 &   0.0150 $\pm$ 0.0009 &                0.1182 $\pm$ 0.0051 &  0.0257 $\pm$ 0.0012 \\
                   
                    GP~\cite{wenger2020calibration} &        96.31 $\pm$ 0.17 &  0.0529 $\pm$ 0.0017 &  0.0125 $\pm$ 0.0021 &                \textbf{0.1176} $\pm$ 0.0051 &  \textbf{0.0258} $\pm$ 0.0011 \\
                    
      Eq. Mass &         89.85 $\pm$ 0.61 &  0.0269 $\pm$ 0.0051 &  0.0676 $\pm$ 0.0127 &                0.2208 $\pm$ 0.0172 &  0.0841 $\pm$ 0.0042 \\
     
      Eq. Size &         96.17 $\pm$ 0.24 &  0.0288 $\pm$ 0.0039 &  0.0114 $\pm$ 0.0025 &                0.2495 $\pm$ 0.0571 &  0.0277 $\pm$ 0.0008 \\
      
     I-Max &         96.22 $\pm$ 0.21 &   0.0254 $\pm$ 0.0030 &  0.0104 $\pm$ 0.0025 &                0.1397 $\pm$ 0.0276 &  0.0265 $\pm$ 0.0012 \\ \hline
     
        Eq. Mass w. TS &         89.85 $\pm$ 0.61 &  0.0269 $\pm$ 0.0054 &  0.0676 $\pm$ 0.0128 &                0.1966 $\pm$ 0.0104 &  0.0844 $\pm$ 0.0043 \\
     Eq. Mass w. GP &         89.85 $\pm$ 0.61 &  0.0266 $\pm$ 0.0049 &  0.0669 $\pm$ 0.0126 &                0.1962 $\pm$ 0.0106 &  0.0841 $\pm$ 0.0043 \\
     Eq. Size w. TS &         96.29 $\pm$ 0.18 &   0.0270 $\pm$ 0.0022 &  0.0062 $\pm$ 0.0024 &                0.1574 $\pm$ 0.0091 &  0.0264 $\pm$ 0.0013 \\
     Eq. Size w. GP &         96.29 $\pm$ 0.18 &   0.0271 $\pm$ 0.0020 &   0.0063 $\pm$ 0.0030 &                0.1576 $\pm$ 0.0093 &  0.0264 $\pm$ 0.0012 \\
       I-Max w. TS &         96.28 $\pm$ 0.19 &  0.0224 $\pm$ 0.0016 &  0.0053 $\pm$ 0.0024 &                0.1208 $\pm$ 0.0058 &  0.0259 $\pm$ 0.0012 \\
       I-Max w. GP &         96.28 $\pm$ 0.19 &  \textbf{0.0223} $\pm$ 0.0018 & \textbf{0.0052} $\pm$ 0.0029 &                0.1206 $\pm$ 0.0061 &  0.0259 $\pm$ 0.0012 \\
       \hline
\bottomrule
\end{tabular}
\end{table}

\begin{table}
\footnotesize
\setlength{\tabcolsep}{0.4em} 
\renewcommand{\arraystretch}{1.1}
\centering
\caption{Tab. 2 Extension: CIFAR10 - DenseNet}
\label{tab:app_tab2_CIFAR10_densenet}
\begin{tabular}{l|ccccccc}
\rowcolor{verylightgray}
\toprule
                         Calibrator &                $\text{Acc}_{\text{top1}}$ $\uparrow$ &          ${}_{\text{CW}}\text{ECE}_{\frac{1}{K}}$ $\downarrow$ &           ${}_{\text{top1}}\text{ECE}_{}$ $\downarrow$ &                              NLL &              Brier \\
\midrule
                      Baseline &         96.65 $\pm$ 0.09 &   0.0404 $\pm$ 0.0010 &  0.0253 $\pm$ 0.0009 &                0.1564 $\pm$ 0.0075 &  0.0259 $\pm$ 0.0007 \\
                       \hline
           \rowcolor[gray]{.90} 
           & \multicolumn{5}{c}{5\unit{k} Calibration Samples}  \\
                       BBQ~\citep{Naeini2015} &        96.75 $\pm$ 0.19 &   0.0245 $\pm$ 0.0030 &   0.0170 $\pm$ 0.0022 &                0.1806 $\pm$ 0.0105 &   0.0279 $\pm$ 0.0010 \\
                      Beta~\cite{pmlr-v54-kull17a} &         96.81 $\pm$ 0.10 &  0.0468 $\pm$ 0.0003 &  0.0154 $\pm$ 0.0013 &                0.1151 $\pm$ 0.0042 &  0.0234 $\pm$ 0.0007 \\
               Isotonic Reg.~\cite{Zadrozny2002} &        96.84 $\pm$ 0.08 &  0.0236 $\pm$ 0.0022 &   0.0140 $\pm$ 0.0024 &                0.1501 $\pm$ 0.0137 &  0.0241 $\pm$ 0.0007 \\
            
                     Platt~\cite{Platt1999} &        96.82 $\pm$ 0.11 &  0.0459 $\pm$ 0.0007 &   0.0141 $\pm$ 0.0010 &                 0.1154 $\pm$ 0.0040 &  0.0233 $\pm$ 0.0007 \\
             Vec Scal.~\cite{Kull2019} &        \textbf{96.84} $\pm$ 0.14 &  0.0413 $\pm$ 0.0014 &   0.0223 $\pm$ 0.0010 &                0.1373 $\pm$ 0.0077 &  0.0249 $\pm$ 0.0007 \\
               Mtx Scal.~\cite{Kull2019} &        96.73 $\pm$ 0.09 &  0.0402 $\pm$ 0.0017 &  0.0245 $\pm$ 0.0008 &                0.1531 $\pm$ 0.0081 &  0.0257 $\pm$ 0.0007 \\       
             BWS~\cite{binwiseTS} &   96.65 $\pm$ 0.10 &   0.0423 $\pm$ 0.0010 &  0.0188 $\pm$ 0.0016 &  0.1239 $\pm$ 0.0065 &  0.0239 $\pm$ 0.0006 \\
      ETS-MnM~\cite{zhang2020mix} &   96.65 $\pm$ 0.10 &  0.0527 $\pm$ 0.0012 &  0.0212 $\pm$ 0.0012 &  0.1196 $\pm$ 0.0038 &   0.0230 $\pm$ 0.0007 \\
                        \hline
           \rowcolor[gray]{.90} 
           & \multicolumn{5}{c}{1\unit{k} Calibration Samples}  \\
                        
                        TS~\cite{pmlr-v70-guo17a} &         96.65 $\pm$ 0.10 &  0.0425 $\pm$ 0.0005 &   0.0169 $\pm$ 0.0010 &                0.1186 $\pm$ 0.0051 &  0.0237 $\pm$ 0.0006 \\
                        
                        GP~\cite{wenger2020calibration} &        96.66 $\pm$ 0.09 &   0.0490 $\pm$ 0.0022 &  0.0135 $\pm$ 0.0025 &                \textbf{0.1143} $\pm$ 0.0048 &  \textbf{0.0228} $\pm$ 0.0007 \\
                        
           Eq. Mass &          88.80 $\pm$ 0.47 &  0.0233 $\pm$ 0.0024 &  0.0637 $\pm$ 0.0023 &                0.2694 $\pm$ 0.0274 &  0.0881 $\pm$ 0.0033 \\
           
           Eq. Size &         96.64 $\pm$ 0.22 &  0.0262 $\pm$ 0.0035 &  0.0101 $\pm$ 0.0035 &                0.2465 $\pm$ 0.0543 &  0.0256 $\pm$ 0.0003 \\
           
          I-Max &         96.59 $\pm$ 0.32 &  0.0261 $\pm$ 0.0025 &  0.0098 $\pm$ 0.0027 &                0.1208 $\pm$ 0.0044 &  0.0239 $\pm$ 0.0005 \\
          \hline
          
          Eq. Mass w. TS &          88.80 $\pm$ 0.47 &  0.0234 $\pm$ 0.0026 &  0.0626 $\pm$ 0.0023 &                0.2102 $\pm$ 0.0051 &   0.0877 $\pm$ 0.0030 \\
          Eq. Mass w. GP &          88.80 $\pm$ 0.47 &  0.0233 $\pm$ 0.0026 &  0.0634 $\pm$ 0.0025 &                0.2098 $\pm$ 0.0053 &    0.0880 $\pm$ 0.0030 \\
          Eq. Size w. TS &          96.75 $\pm$ 0.10 &   0.0250 $\pm$ 0.0011 &  0.0133 $\pm$ 0.0014 &                0.1657 $\pm$ 0.0056 &  0.0249 $\pm$ 0.0007 \\
          Eq. Size w. GP &          96.77 $\pm$ 0.10 &  0.0242 $\pm$ 0.0022 &   0.0050 $\pm$ 0.0012 &                0.1612 $\pm$ 0.0048 &  0.0245 $\pm$ 0.0005 \\
            I-Max w. TS &         96.81 $\pm$ 0.15 &  0.0229 $\pm$ 0.0016 &  0.0125 $\pm$ 0.0017 &                0.1224 $\pm$ 0.0056 &  0.0239 $\pm$ 0.0007 \\
            I-Max w. GP &         96.81 $\pm$ 0.15 &  \textbf{0.0218} $\pm$ 0.0012 &  \textbf{0.0048} $\pm$ 0.0009 &                0.1173 $\pm$ 0.0054 &  0.0231 $\pm$ 0.0005 \\
            \hline
\bottomrule
\end{tabular}
\end{table}

\end{document}